\pgfplotsset{compat=1.18}
\tikzstyle{do} = [obs, rectangle]
\tikzset{
    right angle quadrant/.code={
        \pgfmathsetmacro\quadranta{{1,1,-1,-1}[#1-1]}     
        \pgfmathsetmacro\quadrantb{{1,-1,-1,1}[#1-1]}},
    right angle quadrant=1, 
    right angle length/.code={\def\rightanglelength{#1}},   
    right angle length=2ex, 
    right angle symbol/.style n args={3}{
        insert path={
            let \p0 = ($(#1)!(#3)!(#2)$),     
                \p1 = ($(\p0)!\quadranta*\rightanglelength!(#3)$), 
                \p2 = ($(\p0)!\quadrantb*\rightanglelength!(#2)$), 
                \p3 = ($(\p1)+(\p2)-(\p0)$) in  
            (\p1) -- (\p3) -- (\p2)
        }
    }
}
\declaretheorem[style=plain]{lemma}
\declaretheorem[style=plain]{proposition}
\declaretheorem[style=definition]{observation}
\declaretheorem[style=definition]{example}
\def \ProofSep {0\topsep}
\newcommand*{\centerfloat}{%
  \parindent \z@
  \leftskip \z@ \@plus 1fil \@minus \textwidth
  \rightskip\leftskip
  \parfillskip \z@skip}
\crefname{section}{Sec.}{Secs.}
\Crefname{section}{Section}{Sections}
\Crefname{table}{Table}{Tables}
\crefname{table}{Tab.}{Tabs.}
\newcommand{\rnum}[1]{\expandafter{\romannumeral #1\relax}}
\newcommand{\RNum}[1]{\uppercase\expandafter{\romannumeral #1\relax}}
\definecolor{deep_blue}{HTML}{4c72b0}
\definecolor{deep_orange}{HTML}{dd8452}
\definecolor{deep_green}{HTML}{55a868}
\definecolor{deep_red}{HTML}{c44e52}
\definecolor{deep_purple}{HTML}{8172b3}
\definecolor{deep_brown}{HTML}{937860}
\definecolor{deep_pink}{HTML}{da8bc3}
\definecolor{deep_grey}{HTML}{8c8c8c}
\newcommand{\stored@titlecmd}{\@latex@warning@no@line{Title command not stored}}
\newcommand{\stored@authorcmd}{\@latex@warning@no@line{Author command not stored}}
  \global\let\stored@titlecmd\@title
  \global\let\stored@authorcmd\@author
\newcommand{\appendixmaketitle}[1]{%
  {%
  \par 
  \begingroup 
    \let\thanks\@gobble 

    \thispagestyle{plain} 
    \vbox{%
      \hsize\textwidth
      \linewidth\hsize
      \vskip 0.1in 
      \@toptitlebar 
      \centering 
      {\Large\bf #1\par} 
      \@bottomtitlebar 

      \if@submission 
        \begin{tabular}[t]{c}\bf\rule{\z@}{24\p@}
          Anonymous Author(s) \\
        \end{tabular}%
      \else
        \def\And{
          \end{tabular}\hfil\linebreak[0]\hfil%
          \begin{tabular}[t]{c}\bf\rule{\z@}{24\p@}\ignorespaces%
        }
        \def\AND{%
          \end{tabular}\hfil\linebreak[4]\hfil%
          \begin{tabular}[t]{c}\bf\rule{\z@}{24\p@}\ignorespaces%
        }
        \begin{tabular}[t]{c}\bf\rule{\z@}{24\p@}\stored@authorcmd\end{tabular}%
      \fi 

      \vskip 0.3in \@minus 0.1in 
    }
  \endgroup 
  \par 
  \vskip \bigskipamount 
  } 
}
\newcolumntype{P}[1]{>{\centering\arraybackslash}m{#1}}
\newcommand{\myRule}{
\newpage
}
\newif\ifrestatingtheorem
\newbox\qbox
\def\usecolor#1{\csname\string\color@#1\endcsname\space}
\newcommand\bordercolor[1]{\colsplit{1}{#1}}
\newcommand\fillcolor[1]{\colsplit{0}{#1}}
\newcommand\colsplit[2]{\colorlet{tmpcolor}{#2}\edef\tmp{\usecolor{tmpcolor}}%
  \def\tmpB{}\expandafter\colsplithelp\tmp\relax%
  \ifnum0=#1\relax\edef\fillcol{\tmpB}\else\edef\bordercol{\tmpC}\fi}
\def\colsplithelp#1#2 #3\relax{%
  \edef\tmpB{\tmpB#1#2 }%
  \ifnum `#1>`9\relax\def\tmpC{#3}\else\colsplithelp#3\relax\fi
}
\newcommand\outline[1]{\leavevmode%
  \def\maltext{#1}%
  \setbox\qbox=\hbox{\maltext}%
  \boxgs{Q q 2 Tr \thickness\space w \fillcol\space \bordercol\space}{}%
  \copy\qbox%
}
\newcommand\myMathbb[2][1]{%
  \stackengine{0pt}{\def\thickness{.15}\outline{${#2}$}}{\kern.1pt\outline{${#2}$}}{O}{l}{F}{F}{L}}
\def\thickness{.4}
\newcommand{\captiona}{{\em (a)}}
\newcommand{\captionb}{{\em (b)}}
\def\eqref#1{equation~\ref{#1}}
\def\1{\bm{1}}
\def\vg{{\bm{g}}}
\def\mD{{\mathbf{D}}}
\def\mM{{\mathbf{M}}}
\def\mS{{\mathbf{S}}}
\def\mW{{\mathbf{W}}}
\def\mZ{{\mathbf{Z}}}
\DeclareMathAlphabet{\mathsfit}{\encodingdefault}{\sfdefault}{m}{sl}
\SetMathAlphabet{\mathsfit}{bold}{\encodingdefault}{\sfdefault}{bx}{n}
\def\tN{{\tilde{N}}}
\def\sG{{\mathbb{G}}}
\def\sH{{\mathcal{H}}}
\def\sP{{\mathcal{P}}}
\def\sX{{\mathbb{X}}}
\def\sY{{\mathbb{Y}}}
\newcommand{\R}{\mathbb{R}}
\newcommand{\Var}{\mathrm{Var}}
\DeclareMathOperator*{\argmin}{argmin}
\newcommand{\indep}{\perp \!\!\!\perp}
\newcommand{\nindep}{\not\!\perp\!\!\!\perp}
\newcommand{\perpAS}{\,\perp_{\mathrm{a.s.}}\,\!}
\newcommand\given{\;\middle\vert\;}
\newcommand{\M}{\mathfrak{M}}
\newcommand{\A}{\mathfrak{A}}
\NewDocumentCommand\intervention{mg}{
\ensuremath{
\operatorname{do}\pr*{ 
    #1 \IfNoValueTF{#2}{}{ \coloneqq #2 }
}
}
}
\NewDocumentCommand\doM{mg}{
\ensuremath{
 \M;
 \IfNoValueTF{#2}{
\intervention{ #1 } 
 }{ \intervention{ #1 }{ #2 } }
}
}
\NewDocumentCommand\doA{mg}{
\ensuremath{
 \A;
 \IfNoValueTF{#2}{
\intervention{ #1 } 
 }{ \intervention{ #1 }{ #2 } }
}
}
\newcommand{\counterfactualM}[3]{ \M_{ #1 };\intervention{ #2 }{ #3 } }
\newcommand{\counterfactualA}[3]{ \A_{ #1 };\intervention{ #2 }{ #3 } }
\newcommand{\myMethod}{IVL}
\newcommand*{\da}[1][]{\text{DA}_{#1}+}
\newcommand{\erm}{\text{ERM}}
\newcommand{\causal}{\text{CR}}
\newcommand{\iv}{\text{IV}}
\newcommand*{\ivla}[1][]{\text{\myMethod}_{\alpha}^{\text{#1}}}
\newcommand{\f}{\mathbf{f}}
\newcommand{\h}{\mathbf{h}}
\newcommand{\vtau}{\bm{\tau}}
\newcommand{\veps}{\bm{\epsilon}}
\newcommand{\vzeta}{\bm{\zeta}}
\def\vx{{\mathbf{x}}}
\def\vy{{\mathbf{y}}}
\def\vc{{\mathbf{c}}}
\def\vz{{\mathbf{z}}}
\def\vg{{\mathbf{g}}}
\def\vn{{\mathbf{n}}}
\def\vv{{\mathbf{v}}}
\def\vu{{\mathbf{u}}}
\newcommand{\hHat}{\hat{\h}}
\newcommand{\tC}{\tilde{C}}
\newcommand{\tZ}{\tilde{Z}}
\newcommand{\tG}{\tilde{G}}
\NewDocumentCommand\cov{mg}{
\ensuremath{
{\bm \Sigma}_{#1}^{\IfNoValueTF{#2}{}{#2}}
}
}
\newcommand{\T}{\mathbf{M}}
\newcommand{\E}{\mathbf{T}}
\newcommand{\K}{{\bm\Gamma}}
\newcommand{\mU}{\mathbf{U}}
\newcommand{\mV}{\mathbf{V}}
\newcommand{\I}{\mathbf{I}}
\newcommand{\In}[1]{\I_{#1}}
\NewDocumentCommand\0{mg}{
\ensuremath{
    \mathbf{0}_{ #1 \IfNoValueTF{#2}{}{\times #2}}
}
}
\def\sX{{\mathcal{X}}}
\def\sY{{\mathcal{Y}}}
\def\sG{{\mathcal{G}}}
\def\sD{{\mathcal{D}}}
\DeclarePairedDelimiter\abs{\lvert}{\rvert}
\DeclarePairedDelimiter{\pr}{(}{)}
\DeclarePairedDelimiter{\br}{\{}{\}}
\DeclarePairedDelimiter{\sqPr}{[}{]}
\DeclarePairedDelimiter\norm{\lVert}{\rVert}
\newcommand{\sqNorm}[1]{\norm*{#1}^2}
\newcommand{\EE}[3]{\mathbb{E}_{#1}^{#2}\sqPr*{#3}}
\newcommand{\VV}[3]{\Var_{#1}^{#2}\pr*{\;\! #3 \;\!}}
\newcommand{\Risk}[3]{R_{#1}^{#2}\pr*{#3}}
\newcommand{\loss}[1]{\ell\pr*{#1}}
\newcommand{\xor}[1]{ \texttt{xor}\pr*{#1} }
\newcommand{\tr}[1]{ \operatorname{tr}\pr*{#1} }
\newcommand{\Range}[1]{ \operatorname{range}\pr*{#1} }
\newcommand{\Null}[1]{ \operatorname{null}\pr*{#1} }
\NewDocumentCommand\CER{mg}{
\ensuremath{
    \operatorname{CER}_{ #1 }\IfNoValueTF{#2}{}{ \pr*{#2} }
}
}
\NewDocumentCommand\nCER{mg}{
\ensuremath{
    \operatorname{nCER}_{ #1 }\IfNoValueTF{#2}{}{ \pr*{#2} }
}
}
\newcommand{\residual}{\xi}
\newcommand{\func}[2]{#1\pr*{#2}}
\NewDocumentCommand\distribution{mg}{
\ensuremath{
    \mathbb{P}_{ {#1} }\IfNoValueTF{#2}{}{ ^{#2} }
}
}
\newcommand{\betaDistribution}[2]{ 
    {\myMathbb{\bm\beta}\pr*{#1, #2}} 
}
\newcommand{\bernoulliDistribution}[1]{ 
    {\mathbb{B}\pr*{#1}} 
}
\newcommand{\normalDistribution}[2]{ 
    {\myMathbb{\mathcal{N}}\pr*{#1, #2}} 
}
\newcommand{\paperTitle}{An Analysis of Causal Effect Estimation using \\ Outcome Invariant Data Augmentation}
\title{\paperTitle}
\author{%
  \href{mailto:uzair.akbar@gatech.edu?Subject=Your NeurIPS 2025 paper}{Uzair Akbar}\thanks{Part of work done while at Max Planck Institute for Intelligent Systems and TU Munich.
  }\\
  Georgia Tech
  \And
  Niki Kilbertus\\
  TU Munich\\
  Helmholtz AI
  \And
  Hao Shen\\
  TU Munich\\
  Fortiss GmbH
  \And
  Krikamol Muandet\\
  Rational Intelligence \\ CISPA
  \And
  ~Bo~Dai\\
  Georgia Tech\\
  Google DeepMind
}
\begin{document}

\maketitle

\setcounter{footnote}{0}
\begin{abstract}
The technique of data augmentation (DA) is often used in machine learning for regularization purposes to better generalize under i.i.d. settings. In this work, we present a unifying framework with topics in causal inference to make a case for the use of DA beyond just the i.i.d. setting, but for generalization across interventions as well. Specifically, we argue that when the outcome generating mechanism is invariant to our choice of DA, then such augmentations can effectively be thought of as interventions on the treatment generating mechanism itself. This can potentially help to reduce bias in causal effect estimation arising from hidden confounders. In the presence of such unobserved confounding we typically make use of instrumental variables (IVs)---sources of treatment randomization that are conditionally independent of the outcome. However, IVs may not be as readily available as DA for many applications, which is the main motivation behind this work. By appropriately regularizing IV based estimators, we introduce the concept of \emph{IV-like ({\myMethod})} regression for mitigating confounding bias and improving predictive performance across interventions even when certain IV properties are relaxed. Finally, we cast parameterized DA as an IVL regression problem and show that when used in composition can simulate a worst-case application of such DA, further improving performance on causal estimation and generalization tasks beyond what simple DA may offer. This is shown both theoretically for the population case and via simulation experiments for the finite sample case using a simple linear example. We also present real data experiments to support our case.
\end{abstract}
\section{Introduction}
\label{sec:introduction}
A classical problem in machine learning is that of regression---using i.i.d. samples from some fixed,
unknown distribution $\distribution{X, Y}$, we predict outcome $Y$ values for unlabeled treatment $X$ values. The use of \emph{regularization} techniques is crucial for this task to achieve good generalization from training to test data \cite{vapnik}. \emph{Data augmentation (DA)} \cite{da-reg,invariance_benefit} is one such method, where each sample is randomly perturbed multiple times to grow the dataset size. 
However, these regression models cannot generally be interpreted causally as the statistical relationship between $X$ and $Y$ may arise from shared common causes, known as \emph{confounders}, rather than from $X$ influencing $Y$. Removing such confounders requires independently assigning values of $X$ during data generation, known as an \textit{intervention} \cite{eoci,Pearl_2009}.

    
    Unfortunately, we seldom have access to the data generation process to be able to intervene on variables. A common workaround is to use auxiliary variables to correct for unobserved confounders \cite{backdoor,specIV,prox-causal-learning}. One such approach is that of \emph{instrumental variables (IVs)} that represent certain conditional independences in the system which can be used to identify the causal effect of $X$ on $Y$ \cite{iv_arthur,iv_rui,iv_niki}. Alas, IVs too are generally hard to find in many popular applications such as computer vision~and~natural~language processing, motivating the need for more accessible ways to mitigate~unobserved~confounding.

Recent work therefore seeks to leverage more commonly available auxiliary variables to reduce confounding-induced bias even when the causal effect itself cannot be identified \cite{causal-regularization2,deconfounding-causal-regularization,regularizing-towards-causal-invariance,anchor}. Collectively referred to as \emph{causal regularization}, these methods aim to learn predictors that generalize \emph{out-of-distribution (OOD)} by discouraging reliance on spurious (i.e., non-causal,) correlations. Since distribution shifts often correspond to interventions on parts of the data-generating process~\cite{icp,eoci},~models that fail under such shifts typically do so because they exploit confounded relationships~\cite{when-shift-happens}.~Tackling this root cause directly, causal regularization offers a principled approach for more robust prediction.

In the same vein, more ambitious works have also explored the use of common regularization techniques, such as $\ell_1$, $\ell_2$ \cite{causal-regularization} and the min-norm interpolator \cite{causal-interpolation-regularization}, for the same purpose of causal regularization. This is in contrast to the canonical use of such regularizers for estimation variance reduction and i.i.d. prediction generalization \cite{vapnik}. Other popular regularization methods, however, remain understudied in a similar context of un-identifiable causal effect estimation,~motivating~our~work.

\paragraph{Our contributions.}\label{sec:contribution}
    To this end, we provide a first analysis of DA for estimating un-identifiable causal effects using only observational data for $(X, Y)$. Our contributions, summarized in \cref{table:summary}, include:
        \hypertarget{contribution1}{(\rnum{1})} \textbf{DA as a soft intervention (\cref{sec:da-intervention}):} We show that DA can synthesize treatment interventions when the outcome function is invariant to DA, lowering bias in causal effect estimates when the intervention acts along spurious features.
    \hypertarget{contribution2}{(\rnum{2})} \textbf{Introducing IV-like regression (\cref{sec:uiv}):} Relaxing the properties of IVs, we introduce the concept of \emph{IV-like ({\myMethod})} variables. This generalization renders IV regression ineffective at identifying causal effects, but when regularized appropriately via our proposed \emph{{\myMethod} regression}, may still reduce confounding bias and improve prediction generalization across treatment interventions.
        \hypertarget{contribution3}{(\rnum{3})} \textbf{DA parameters as {\myMethod} (\cref{sec:da-uiv}):} By casting parameterized DA as {\myMethod}, we show that its composition DA+IVL with IVL regression further reduces confounding bias beyond just simple DA by essentially simulating a worst-case or adversarial~application~of~the~DA.


     We validate our approach with theoretical results in a linear setting for the infinite-sample case, and simulation and real-data experiments in the finite-sample case.
\section{Preliminaries}\label{sec:preliminaries}
    Consider treatment $X$ and outcome $Y$ taking values in $\sX\subseteq \R^m$ and $\sY\subseteq \R^l$ respectively. Given the set of functions $\sH\coloneqq \br*{ h:\sX\rightarrow\sY }$, the canonical setting described in the literature \cite{eoci, anchor, econAnalysis} deals with estimating the function $f\in \sH$ in the \emph{structural equation model (SEM)} $\M$ of the following form\footnote{Throughout this work we shall borrow and overload notation from \cite{eoci}. See Appendix for a list of symbols.}
    \begin{align}
    \label{eq:canonical-sem}
        X = \func{\tau}{ Y, Z, C, N_X } ,
        &&
        Y = \func{f}{ X } + \func{\epsilon}{ C } + N_Y ,
    \end{align}
    where $Z$, $C$, $N_X$, $N_Y$ are exogenous (and therefore mutually independent) random variables and the residual $\residual \coloneqq Y - \func{f}{ X } = \func{\epsilon}{C} + N_Y$ is assumed to be zero mean, i.e. $\EE{}{\M}{\residual} = 0$. Since $\M$ is potentially cyclic, a priori it may entail several or no distributions at all. However, here we make the assumption that for all $\pr*{ \vx_0, \vy_0 } \in \sX\times \sY$ the unique limits
    \begin{align*}
        \vx \coloneqq \lim_{t\rightarrow\infty} \vx_t = \lim_{t\rightarrow\infty} \func{\tau}{ \vy_{t-1}, \vz, \vc, \vn_X },
        &&
        \vy \coloneqq \lim_{t\rightarrow\infty} \vy_t = \lim_{t\rightarrow\infty} \func{f}{ \vx_{t-1} } + \func{\epsilon}{\vc} + \vn_Y
    \end{align*}
    exist for any $\pr*{ \vz, \vc, \vn_X, \vn_Y } \sim \distribution{Z, C, N_X, N_Y}{\M}$, meaning that the unique distribution entailed by $\M$ is in this equilibrium state. Of course, if $\M$ is acyclic, these limits always exist. Note that assuming the existence of such an equilibrium does not violate the classic \emph{independent causal mechanism (ICM)} principle \cite{eoci}; we defer interested readers to \cref{app:cyclic-sem} for further details on cyclic~SEMs~and~the~ICM.
    

\newlength{\figurepartwidth} 
\setlength{\figurepartwidth}{0.48\linewidth} 
\newlength{\tablepartwidth}
\setlength{\tablepartwidth}{0.48\linewidth} 
\begin{figure}[t] 
    \begin{minipage}[t]{\tablepartwidth} 
    \vspace{0pt}
        \centering 
        
        \captionof{table}{A picture summary of our contributions. $\rightarrow$ represents composition of operations or transformations, and $\Leftrightarrow$ represents equivalence.}\label{table:summary}
        {
        \begin{tabular}{ P{0.022\tablepartwidth} P{0.38\tablepartwidth}@{}c@{}P{0.4\tablepartwidth}}
        \toprule
            & Type of Data Augmentation      & & Topics in Causal Inference \\
        \midrule
        \tikzmarknode{toparrow}{}    & None; observational data  & $\leftarrow$ & Data generating structural model \\
            & $\downarrow$  & & $\downarrow$ \\
            & Outcome invariant~DA   & $\overset{\text{(\hyperlink{contribution1}{\rnum{1}})}}{\xLeftrightarrow{\hspace{0.5em}}}$ & Treatment (soft)~intervention \\
            & $\downarrow$  & & $\downarrow$ \\
        \tikzmarknode{bottomarrow}{}    & Worst-case or adversarial DA        & $\overset{\text{(\hyperlink{contribution3}{\rnum{3}})}}{\xLeftrightarrow{\hspace{0.5em}}}$ & \phantom{.}~Regularized~\phantom{.} \phantom{.}~\hfill~IV regression \hfill $\scriptstyle\text{(\hyperlink{contribution2}{\rnum{2}})}$ \\
        \bottomrule
        \end{tabular}
        }

        \begin{tikzpicture}[overlay, remember picture]
            \draw[-{Stealth[length=5pt, width=4pt]}, red, thick]
                ([yshift=1.75ex, xshift=1.15em]toparrow.north west) -- ([yshift=-0.5ex, xshift=1.15em]bottomarrow.south west)
                node[midway, sloped, below, font=\small, black, inner sep=2.5pt, rounded corners=1pt] {\rotatebox{180}{\parbox{3.25cm}{\centering Lower confounding bias in causal effect estimate}}};
        \end{tikzpicture}
    \end{minipage}%
    \hfill
    \begin{minipage}[t]{\figurepartwidth} 
    \vspace{5pt}
        \centering 
        
        \begin{subfigure}{.4\figurepartwidth} 
        \centering
        \adjustbox{width=.4\figurepartwidth}{
            \includegraphics{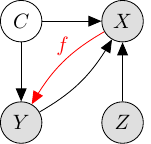}
        }
        \caption{Graph of $\M$.}
        \end{subfigure}
        \hfill
        \begin{subfigure}{.6\figurepartwidth} 
        \centering
        \adjustbox{width=.4\figurepartwidth}{
            \includegraphics{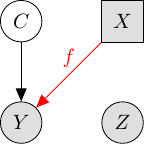}
        }
        \caption{Graph of $\doM{ X }$.}
        \end{subfigure}
        
        \captionof{figure}{Graph of $\M$ depicting an instrument $Z$ that satisfies treatment relevance, exclusion restriction, un-confoundedness and outcome relevance properties. An intervention on $X$ gives us the graph in {\captionb}. IV regression simulates such an intervention using only observational~data.}
        \label{fig:iv-reg}
    \end{minipage}%
\end{figure}

    Given a proper convex loss $\ell:\R^l\times\R^l\rightarrow\R_{+}$, \emph{empirical risk minimization (ERM)} uses a dataset $\sD\coloneqq \br*{ (\vx_i, \vy_i) }_{i=0}^n$ of $n$ samples from $\M$ to minimize an empirical version of the \emph{statistical risk}
    \begin{equation}\label{eq:erm}
        \Risk{\erm}{\M}{ h } \coloneqq \EE{}{\M}{ \loss{ Y, h\pr*{X} } } ,
    \end{equation}
    over $h \in \sH$. However, since the residual $\residual$ in \cref{eq:canonical-sem} is generally correlated with $X$,~i.e.,~$\EE{}{\M}{\residual \given X} \neq 0$, the ERM minimizer $\hat{h}^{\M}_{\erm}$ typically yields a biased estimate of $f$~\cite{Pearl_2009,eoci}. This bias arises
due to the exclusion of the (unobserved) common parent $C$ of $X$ and $Y$, i.e. a confounder, in the ERM objective (hence fittingly called the \emph{omitted-variable bias} \cite{omitted-variable-bias}) and/or the model is cyclic
(\emph{simultaneity bias}~\cite{econAnalysis,cyclic-sem-iv}, or \emph{reverse causality}~\cite{Pearl_2009} in the degenerate case). For simplicity we shall refer to either case by saying that $X$ and $Y$ are confounded and the resulting bias as the \emph{confounding bias}~\cite{Pearl_2009}.\footnote{Pearl~\cite[p.78,184]{Pearl_2009} similarly uses the term for any bias causing observational vs. interventional deviation; this also aligns with econometrics~\cite{trinity,econAnalysis}, where both are classified as sources of \emph{endogeneity} (i.e., $X\nindep \xi$).}

\subsection{Intervention for causal effect estimation}\label{sec:intervention-causal-estimation}
    We can make $X$ and the residual $\residual$ uncorrelated via an intervention\footnote{A \emph{soft} intervention replaces the mechanism $\tau$ in \cref{eq:canonical-sem} with an alternative $\tau^{\prime}$ \cite[p.~34]{eoci}. This may \emph{potentially} reduce confounding between $X$ and $Y$.} $\intervention{ X }{ X^{\prime} }$, where we explicitly set $X$ to some independently sampled $X^{\prime}$ in \cref{eq:canonical-sem} irrespective of its parents, resulting now in the new SEM $\doM{ X }{ X^{\prime} }$ or $\doM{ X }$ as a shorthand for when $X^{\prime} \sim \distribution{X}{\M}$. The distribution induced by this modified SEM is called an \emph{interventional distribution} (with respect to $\M$) under which the ERM objective from \cref{eq:erm} now defines the following \emph{causal risk (CR)} \cite{causal-regularization2,causal-interpolation-regularization,optical-device} as
    \begin{align}
    \label{eq:causal-risk}
        \Risk{ \causal }{ \M }{ h } \coloneqq \Risk{\erm}{\doM{X}}{h} = \Risk{\erm}{\doM{X}{X^{\prime}}}{h} , && \text{s.t.} && X^{\prime}\sim \distribution{X}{\M} .
    \end{align}
    Minimizing \cref{eq:causal-risk} is meaningful in two important cases where ERM fails: (\rnum{1}) \textbf{Causal effect estimation:} The minimizer $\hat{h}^{\M}_{\causal}$ of \cref{eq:causal-risk} gives us an unbiased estimate of the \emph{average treatment effect (ATE)} \cite{backdoor} $\EE{}{ \doM{X}{ \vx } }{ Y\given X = \vx } = \func{f}{\vx}$ that measures the causal influence of $X$ on $Y$. (\rnum{2}) \textbf{Robust prediction:}
    ATE based prediction of $Y$ values for unlabeled $X$ values is \emph{robust} in the sense that it can generalize across arbitrary OOD treatment interventions or shifts in the treatment distribution~\cite{causal-ood-generalizatoin}. 
    Consequently, the causal risk minimizer $\hat{h}^{\M}_{\causal}$ is also a robust predictor over the support of $\distribution{X}{\M}$. Specifically, $\hat{h}^{\M}_{\causal}$ minimizes the worst-case ERM objective over the set $\sP$ of all possible intervention distributions $\distribution{X^{\prime}}$ over the support of $\distribution{X}{\M}$ \cite{causal-ood-generalizatoin}, i.e. for $\sP\coloneqq \br*{\distribution{X^{\prime}} \given
    \func{\operatorname{supp}}{\distribution{X^{\prime}}}
    \subseteq
    \func{\operatorname{supp}}{\distribution{X}{\M}}
    }$,
    \begin{equation*}
        \hat{h}^{\M}_{\causal} \in \argmin_{h\in\sH} \max_{ \distribution{X^{\prime}} \in \sP } \Risk{\erm}{\doM{X}{X^{\prime}}}{h} .
    \end{equation*}
    To better isolate the estimation error due to confounding, we define~the~\emph{causal~excess~risk~(CER)}~\cite{causal-interpolation-regularization}
    \begin{equation*}
        \CER{\M}{h} \coloneqq \Risk{ \causal }{ \M }{ h } - \Risk{ \causal }{ \M }{ f } .
    \end{equation*}
    This removes the irreducible noise from \cref{eq:causal-risk} (see \cref{app:confounding-bias}) and directly measures how far a hypothesis $h$ deviates from the true causal function $f$ under interventions, so that $\CER{\M}{f} = 0$.
    
    Since interventions are often inaccessible for computing the risk in \cref{eq:causal-risk}, we usually rely on observational data/ distribution and additional variables to approximate them, as outlined~in~the~next~section.

\subsection{Instrumental variable regression}\label{sec:iv-regression}
    One way to get an unbiased estimate of $f$ from the observational distribution of $\M$ is to use so-called instrumental variables $Z$ with the properties \cite{Pearl_2009,eoci,iv_rui,iv_arthur,dual-iv} of: (\rnum{1}) {\bf Treatment Relevance:} $Z \nindep X$. (\rnum{2}) {\bf Exclusion Restriction:} $Z$ enters $Y$ only through $X$, i.e.~$Z\indep Y^{\doM{X}{\vx}}$.\footnote{Counterfactual definition of the exclusion restriction property~\cite[p. 248]{Pearl_2009}.}~(\rnum{3})~{\bf Un-confoundedness:} $Z\indep \residual$. (\rnum{4}) {\bf Outcome Relevance:} $Z$ carries information about $Y$, i.e. $Y\nindep Z$.
    
    Conditioning \cref{eq:canonical-sem} on $Z$ and using $\EE{}{}{ \residual \given Z } = \EE{}{}{ \residual }=0$ from the unconfoundedness property~gives
    \begin{align}\label{eq:iv-problem}
        \EE{}{\M}{ Y\given Z } = \EE{}{\M}{ f(X)\given Z } .
    \end{align}
    IV regression therefore entails solving \cref{eq:iv-problem} for $f$, which can be done by minimizing the risk \cite{dual-iv}
    \begin{equation}
    \label{eq:iv-loss}
        \Risk{\iv}{\M}{ h } \coloneqq \EE{}{\M}{ 
            \loss{ 
                Y, \EE{}{\M}{ h\pr*{ X } \given Z} 
            } 
        } .
    \end{equation}
     For linear $f(\cdot)\coloneqq\f^\top(\cdot), h(\cdot)\coloneqq\h^\top(\cdot)$ with $\f, \h \in \R^m$ and squared loss $\loss{ \vy, \vy^{\prime} } \coloneqq \sqNorm{ \vy - \vy^{\prime} }$,~this gives the two-stage-least-squares (2SLS) \cite{2SLS-3SLS} solution where the first stage regresses $X$ from $Z$,~and the second stage regresses $Y$ from predictions $\EE{}{}{ X\given Z }$ of the first stage to get~the~estimate~$\hat{h}_{\iv}^{\M}$.

\subsection{Data augmentation}\label{sec:data-augmentation}
    In this work we restrict ourselves to data augmentation with respect to which $f$ is invariant \cite{invariance_benefit,group-theory-da-jmlr}. The action of a group $\sG$ is a mapping $\delta:\sX\times\sG\rightarrow\sX$ which is compatible with the group operation. For convenience we shall write $\vg\vx \coloneqq \delta(\vx, \vg)$. We say that $f$ is \emph{invariant} under $\sG$ (or \emph{$\sG$-invariant}) if
    \begin{align*}
        f(\vg\vx) = f(\vx),\qquad\forall\; (\vg, \vx)\in \sG\times \sX.
    \end{align*}
    Less formally, we say that the map $\vg\vx$, henceforth assumed to be continuous in $\vx$, is a valid \emph{outcome-invariant} DA transformation parameterized by the vector $\vg\in\sG$. Let $\sG$ have a (unique) normalized Haar measure and $\distribution{G}$ be the corresponding distribution defined over it. For some $G\sim \distribution{G}$, the canonical application of DA seeks to minimize an empirical version of the following risk.
    \begin{align}\label{eq:data-augmentation-risk}
        \Risk{\da[G]\erm}{\M}{ h } \coloneqq \EE{}{\M}{ \loss{Y, \func{h}{GX} } } .
    \end{align}
    Note that it is sufficient to have some prior information about the symmetries of $f$ in order to be able to construct such a DA. For example, when classifying images of cats and dogs we already know that whatever the true labeling function may be, it would certainly be invariant to rotations on the images. $G$ would then represent the random rotation angle, whereas $G\vx$ would be the rotated image $\vx$.
    
    We wish to contrast the use of DA in this work with the canonical setting---to mitigate overfitting, DA is used to grow the sample size by generating multiple augmentations $(G\vx, \vy)$ for each data sample $(\vx, \vy)\sim \distribution{{X, Y}}{\M}$ \cite{invariance_benefit,group-theory-da-jmlr,savkin2020adversarial}. Such regularization, overfitting mitigation, estimation variance reduction, or i.i.d. prediction generalization is not the focus of this work and we intentionally provide \cref{eq:data-augmentation-risk} along with theoretical results that follow in the population case to emphasize that DA is not being~used~as~a~conventional~regularizer. Instead, our goal is to improve causal effect estimation and robust prediction by re-purposing DA to mitigate hidden confounding bias in the data.
    
\section{Faithfulness and Outcome Relevance in IVs}\label{sec:uiv}

The distribution $\distribution{X, Y, Z, C}{\M}$ is \emph{faithful} to the graph of $\M$ if it only exhibits independences implied by the graph \cite{eoci,pgms}.\footnote{Also known as \emph{stability} in some texts \cite[p. 48]{Pearl_2009}.} This standard assumption in IV settings renders outcome-relevance implicit and therefore rarely mentioned. In this section we discuss the case where only the first three IV properties are satisfied, i.e. outcome-relevance may not hold. Since such a $Z$ may not be a valid IV, therefore identifiability of ATE is not possible in general as the problem in \cref{eq:iv-problem} can now be misspecified, having multiple, potentially infinitely many solutions when $Y\indep Z$. Nevertheless, we shall refer to such a $Z$ as \emph{IV-like (IVL)} to emphasize that while $Z$ may not be an IV, it may still be `instrumental' for reducing confounding bias when estimating the ATE compared to the standard ERM baseline.
\paragraph{ERM regularized IV regression.}
    Despite problem misspecification for a {\myMethod} $Z$, the target function $f$ remains a minimizer for the IV risk in \cref{eq:iv-loss}. Albeit, potentially not unique---for example, a linear $h$ with squared loss leads to an under-determined problem in \cref{eq:iv-loss}. We therefore propose the following regularized version of the IV risk for such an IVL setting,
    \begin{align}\label{eq:uiv-alpha-loss}
        \Risk{\ivla}{\M}{ h } \coloneqq \Risk{\iv}{\M}{ h } + \alpha \Risk{\erm}{\M}{ h },
    \end{align}
    where $\alpha>0$ is the regularization parameter. The ERM risk as a penalty allows our estimations to have good predictive performance while the IV risk encourages solution search within the subspace where we know $f$ to be present. We refer to minimizing the risk in \cref{eq:uiv-alpha-loss} as {\it {\myMethod} regression}.
    
    Note that the motivation behind {\myMethod} regression is not the identifiability of $f$, but rather potentially better estimations of $f$ with lower confounding bias. The next section provides a concrete example.
    \begin{example}[a linear Gaussian IVL example]
\label{example:ivl}
For scalar $\sigma > 0$, non-zero matrices $\K, \E \in \R^{\ast\times m}$ and vectors $\vtau^\top, \f, \veps \in \R^m$ such that $\f^\top \vtau^\top \neq 1$ so that the following SEM $\M$ is solvable in $\pr*{X, Y}$\footnote{See \cref{app:cyclic-sem} and \cref{lemma:sem-solvability} for details on solving for and sampling of $\pr*{X, Y}$ in such linear, cyclic~SEMs.}
\begin{align*}
    X = \vtau^{\top} Y + \K^\top Z + \E^\top C + \sigma N_X ,
    &&
    Y =\f^\top X + \veps^\top C + \sigma N_Y ,
\end{align*}
where $Z, C, N_X, N_Y$ are conformable, centered Gaussian random vectors and $Z$ is {\myMethod}~w.r.t.~$\pr*{X, Y}$.\footnote{All examples assume correlated $X$ and residual $\residual$, i.e. $\EE{}{\M}{X \residual^\top} \neq {\0{}}$, as otherwise there is no confounding.}~
\end{example}
    Now, the task is to improve our estimation of $\f$ compared to standard ERM. We evaluate an estimate $\hHat^{\sD}$ using the $\CER{}$, which for a squared loss and covariance $\cov{X}{\M}$ in \cref{example:ivl}~simply~comes~out~to~be
    \begin{equation}
    \label{eq:cer}
        \CER{\M}{\hHat^{\sD}} = \sqNorm{\hHat^{\sD} - \f}_{\cov{X}{\M}} .
    \end{equation}
    Prior works use this form to quantify the error in ATE estimation \cite{causal-interpolation-regularization,causal-regularization2} or measure some notion of strength of confounding \cite{causal-regularization,dominik-optical-device-causal-reg2,optical-device}.
    Similarly, we use it to measure confounding bias of population estimates $\hHat^{\M}$ (\cref{app:confounding-bias}) and estimation error in finite sample experiments.~The~next~results~follow.
    
    \begin{restatable}[robust prediction with {\myMethod} regression]{theorem}{ivlRobustPrediction}
\label{theorem:ivl-robust-prediction}
For SEM $\M$ in \cref{example:ivl}, the following holds:
\begin{align*}
\hHat^{\M}_{\ivla} \in \argmin_{\h}\max_{\vzeta\in\mathcal{P}_{\alpha}} \Risk{\erm}{ \doM{ \K^{\top}\pr*{\cdot} }{ \vzeta } }{ \h } ,
&&
\text{s.t.}
&&
\mathcal{P}_{\alpha}\coloneqq \br*{ \vzeta \given \vzeta\vzeta^\top \preccurlyeq \left(\frac{1}{\alpha} + 1\right) \K^\top \cov{ Z }{\M} \K } .
\end{align*}
\end{restatable}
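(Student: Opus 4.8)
The plan is to reduce both sides of the claimed identity to the same quadratic form in $\h$, obtained from an $L^2$-orthogonal decomposition of the residual $Y-\h^\top X$ relative to $Z$, and then to observe that the two objectives differ only by an $\h$-independent additive constant and an overall positive factor. \textbf{Step 1.} Introduce the orthogonal projection $P$ in $L^2$ onto the subspace of square-integrable functions of $Z$, so $PV=\EE{}{\M}{V\given Z}$. By \cref{lemma:sem-solvability} the reduced form of the SEM writes $X$ as an affine function of $Z,C,N_X,N_Y$ whose $Z$-coefficient is $A\coloneqq(\I-\vtau^\top\f^\top)^{-1}$ (which exists since $\f^\top\vtau^\top\neq1$). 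Because $Z,C,N_X,N_Y$ are jointly Gaussian and mutually independent, $P$ acts linearly: $PX=A\K^\top Z$; and since $\residual=Y-\f^\top X$ is a function of $C,N_X,N_Y$ alone with $Z\indep\residual$ and $\EE{}{\M}{\residual}=0$, also $P(Y-\h^\top X)=(\f-\h)^\top A\K^\top Z$. Applying Pythagoras to $\Risk{\iv}{\M}{\h}=\norm*{(\I-P)Y}^2+\norm*{P(Y-\h^\top X)}^2$ and $\Risk{\erm}{\M}{\h}=\norm*{P(Y-\h^\top X)}^2+\norm*{(\I-P)(Y-\h^\top X)}^2$ then gives
\begin{equation*}
\Risk{\ivla}{\M}{\h}=\norm*{(\I-P)Y}^2+\alpha\bigl[(\tfrac1\alpha+1)\norm*{P(Y-\h^\top X)}^2+\norm*{(\I-P)(Y-\h^\top X)}^2\bigr],
\end{equation*}
whose first summand does not depend on $\h$.

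\textbf{Step 2.} Fix a shift $\vzeta$. The intervention $\intervention{\K^\top Z}{\vzeta}$ replaces the term $\K^\top Z$ by $\vzeta$ in the first structural equation, which in the reduced form is the additive change $X\mapsto X+A(\vzeta-\K^\top Z)$, hence $Y-\h^\top X\mapsto(Y-\h^\top X)+(\f-\h)^\top A(\vzeta-\K^\top Z)$. Here the random term $(\f-\h)^\top A\K^\top Z$ is exactly $P(Y-\h^\top X)$ from Step 1, so it cancels and is replaced by the deterministic $(\f-\h)^\top A\vzeta$, while the surviving part $(\I-P)(Y-\h^\top X)$ is a zero-mean function of $C,N_X,N_Y$ whose distribution the intervention leaves unchanged. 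There is therefore no cross term, and
\begin{equation*}
\Risk{\erm}{\doM{\K^\top\pr*{\cdot}}{\vzeta}}{\h}=\bigl((\f-\h)^\top A\vzeta\bigr)^2+\norm*{(\I-P)(Y-\h^\top X)}^2.
\end{equation*}

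\textbf{Step 3.} Put $v\coloneqq A^\top(\f-\h)$ and $M\coloneqq(\tfrac1\alpha+1)\K^\top\cov{Z}{\M}\K$. The constraint $\vzeta\vzeta^\top\preccurlyeq M$ gives $(v^\top\vzeta)^2=v^\top\vzeta\vzeta^\top v\le v^\top M v$, with equality attained by aligning $\vzeta$ with $M^{1/2}v$ inside $\Range{M}$ (using a pseudoinverse if $M$ is singular). Since $v^\top M v=(\tfrac1\alpha+1)(\f-\h)^\top A\K^\top\cov{Z}{\M}\K A^\top(\f-\h)=(\tfrac1\alpha+1)\norm*{P(Y-\h^\top X)}^2$, the inner maximum of Step 2 over $\vzeta\in\mathcal{P}_{\alpha}$ equals precisely the bracketed quantity of Step 1, so $\Risk{\ivla}{\M}{\h}=\norm*{(\I-P)Y}^2+\alpha\max_{\vzeta\in\mathcal{P}_{\alpha}}\Risk{\erm}{\doM{\K^\top\pr*{\cdot}}{\vzeta}}{\h}$. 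As $\alpha>0$ and the first term is $\h$-free, the minimizer sets over $\h$ of the two sides agree, giving the claim.

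I expect the main obstacle to be the structural bookkeeping of Steps 1--2: deriving the reduced form of the cyclic SEM with the correct coefficient $A=(\I-\vtau^\top\f^\top)^{-1}$, checking that $\intervention{\K^\top Z}{\vzeta}$ perturbs $X$ only and only additively by $A(\vzeta-\K^\top Z)$, and justifying that $\EE{}{\M}{\,\cdot\given Z}$ collapses to the linear operator $A\K^\top$ — which leans on joint Gaussianity of the exogenous variables together with the retained unconfoundedness property $Z\indep\residual$ (outcome relevance is never invoked, consistent with $Z$ being merely IVL). Given these structural facts, the Pythagorean rearrangements and the quadratic maximization are routine; the only delicate corner is a rank-deficient $\K^\top\cov{Z}{\M}\K$, handled by restricting to its range.
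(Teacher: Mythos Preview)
Your proposal is correct and follows essentially the same route as the paper's proof: both reduce the intervened ERM risk to a ``baseline plus quadratic in $\vzeta$'' form via the reduced-form coefficient $A=(\I-\vtau^\top\f^\top)^{-1}=\mM_{m\times m}$, maximize the quadratic over $\mathcal{P}_\alpha$ using $\vzeta\vzeta^\top\preccurlyeq M\Rightarrow(v^\top\vzeta)^2\le v^\top Mv$, and identify the result with $\Risk{\ivla}{\M}{\h}$ up to an $\h$-independent constant and a factor of $\alpha$. The only difference is packaging: you organize everything around the $L^2$-projection $P$ onto functions of $Z$ and invoke Pythagoras, whereas the paper carries the explicit decomposition $X=\tZ+\tC+\tN$ and does the trace algebra by hand; the content of the two arguments is the same.
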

\textit{Proof.} See \cref{app:theorem-ivl-robust-prediction} for the proof. \qed
    
    \begin{restatable}[causal estimation with {\myMethod} regression]{theorem}{ivlCausalEstimation}
\label{theorem:ivl-causal-estimation}
    In SEM $\M$ of \cref{example:ivl}, for $\alpha < \infty$, we have
    \begin{align*}
        \CER{\M}{\hHat^{\M}_{\ivla}} \leq \CER{\M}{\hHat^{\M}_{\erm}} ,
        &&
        \text{equality iff}
        &&
        \EE{}{\M}{X \given Z } \perpAS \EE{}{\M}{X \given \residual } .
    \end{align*}
\end{restatable}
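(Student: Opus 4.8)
The plan is to reduce both estimators to population closed forms in the linear--Gaussian model of \cref{example:ivl} and then settle the comparison by a short exact identity for the CER gap. Throughout write $\mathbf{S}:=\cov{X}{\M}$, $\hat X:=\EE{}{\M}{X\given Z}$, $\mathbf{A}:=\cov{\hat X}{\M}$, and $\vb:=\EE{}{\M}{X\residual}$, where $\residual=Y-\f^\top X$ is the (scalar) residual. Since the SEM in \cref{example:ivl} is solvable (\cref{lemma:sem-solvability}), $(X,Y,Z,C)$ and hence $(X,Z,\residual)$ are jointly centered Gaussian, so every conditional expectation in play is linear; in particular $\hat X$ is linear in $Z$ (so $\mathbf{A}\succeq\mathbf{0}$ and $\mathbf{A}=\EE{}{\M}{\hat X\hat X^\top}$), $\mathbf{S}\succ\mathbf{0}$ because $\sigma>0$, and unconfoundedness of the IVL $Z$ gives $\EE{}{\M}{\residual\given Z}=\EE{}{\M}{\residual}=0$. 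Expanding the squared-loss risks for linear $\h$ I would get $\Risk{\iv}{\M}{\h}=\sqNorm{\h-\f}_{\mathbf{S}}$-free, namely $\Risk{\iv}{\M}{\h}=\sqNorm{\h-\f}_{\mathbf{A}}+\text{const}$, using $\EE{}{\M}{Y\given Z}=\f^\top\hat X$ and hence $\EE{}{\M}{\hat X\,Y}=\mathbf{A}\f$; and $\Risk{\erm}{\M}{\h}=\sqNorm{\h-\f}_{\mathbf{S}}-2(\h-\f)^\top\vb+\text{const}$. Minimising the strongly convex objectives (Hessians $2\mathbf{S}$ and $2(\mathbf{A}+\alpha\mathbf{S})$, positive definite for $0<\alpha<\infty$) yields the unique population minimisers $\hHat^{\M}_{\erm}=\f+\mathbf{S}^{-1}\vb$ and $\hHat^{\M}_{\ivla}=\f+\mathbf{S}_{\alpha}^{-1}\vb$ with $\mathbf{S}_{\alpha}:=\mathbf{S}+\tfrac{1}{\alpha}\mathbf{A}$.

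Feeding these into $\CER{\M}{\hHat}=\sqNorm{\hHat-\f}_{\mathbf{S}}$ from \cref{eq:cer} and writing $\vd_1:=\hHat^{\M}_{\erm}-\f=\mathbf{S}^{-1}\vb$, $\vd_2:=\hHat^{\M}_{\ivla}-\f=\mathbf{S}_{\alpha}^{-1}\vb$, the claim becomes $\vd_2^\top\mathbf{S}\vd_2\le\vd_1^\top\mathbf{S}\vd_1$. I would establish this via the identity
\begin{equation*}
\CER{\M}{\hHat^{\M}_{\erm}}-\CER{\M}{\hHat^{\M}_{\ivla}}
=\sqNorm{\vd_1-\vd_2}_{\mathbf{S}}+\frac{2}{\alpha}\,\vd_2^\top\mathbf{A}\vd_2 ,
\end{equation*}
obtained by substituting $\mathbf{S}\vd_1=\vb=\mathbf{S}_{\alpha}\vd_2$ and $\mathbf{S}_{\alpha}=\mathbf{S}+\tfrac{1}{\alpha}\mathbf{A}$ into $\vd_1^\top\mathbf{S}\vd_1-\vd_2^\top\mathbf{S}\vd_2=(\vd_1-\vd_2)^\top\mathbf{S}(\vd_1-\vd_2)+2(\vd_1-\vd_2)^\top\mathbf{S}\vd_2$ and noting $(\vd_1-\vd_2)^\top\mathbf{S}\vd_2=\tfrac{1}{\alpha}\vd_2^\top\mathbf{A}\vd_2$. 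Both summands are nonnegative since $\mathbf{S}\succ\mathbf{0}$, $\mathbf{A}\succeq\mathbf{0}$, which gives the inequality. Equality forces both to vanish: $\sqNorm{\vd_1-\vd_2}_{\mathbf{S}}=0$ gives $\vd_1=\vd_2$, and conversely $\vd_1=\vd_2$ gives $\tfrac{1}{\alpha}\mathbf{A}\vd_1=\mathbf{S}_{\alpha}\vd_1-\mathbf{S}\vd_1=\vb-\vb=\mathbf{0}$, so the whole right-hand side collapses exactly when $\mathbf{A}\vd_1=\mathbf{0}$, i.e.\ when $\cov{\hat X}{\M}\,(\hHat^{\M}_{\erm}-\f)=\mathbf{0}$ (here finiteness of $\alpha$ is used so this, rather than the trivial $\alpha=\infty$ case, is the characterisation).

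It then remains to identify $\cov{\hat X}{\M}(\hHat^{\M}_{\erm}-\f)=\mathbf{0}$ with the stated a.s.\ orthogonality. Since $\mathbf{A}=\EE{}{\M}{\hat X\hat X^\top}$, for any vector $\mathbf{v}$ we have $\mathbf{A}\mathbf{v}=\mathbf{0}\iff\mathbf{v}^\top\mathbf{A}\mathbf{v}=\EE{}{\M}{(\mathbf{v}^\top\hat X)^2}=0\iff\mathbf{v}^\top\hat X=0$ a.s. Taking $\mathbf{v}=\hHat^{\M}_{\erm}-\f=\mathbf{S}^{-1}\vb$ and inserting the Gaussian identity $\EE{}{\M}{X\given\residual}=\bigl(\vb/\EE{}{\M}{\residual^{2}}\bigr)\residual$ — valid because $\residual$ is scalar, centered and jointly Gaussian with $X$, with $\EE{}{\M}{\residual^{2}}>0$ and hence $\residual\neq0$ a.s.\ (as $\sigma>0$) — the condition $\hat X^\top\mathbf{S}^{-1}\vb=0$ a.s.\ becomes $\EE{}{\M}{X\given Z}^\top\cov{X}{\M}^{-1}\EE{}{\M}{X\given\residual}=0$ a.s., which is precisely $\EE{}{\M}{X\given Z}\perpAS\EE{}{\M}{X\given\residual}$.

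The routine parts are the two risk expansions and the algebraic identity for the CER gap. The step I expect to require the most care is the final translation: converting the linear-algebraic equality $\cov{\hat X}{\M}(\hHat^{\M}_{\erm}-\f)=\mathbf{0}$ into a.s.-orthogonality of the two conditional-expectation random vectors, where one must use (i) joint Gaussianity so conditional expectations are linear, (ii) nondegeneracy of $\residual$ so one may divide it out, and (iii) the $\cov{X}{\M}$-weighting implicit in both the definition of CER and the relation $\perpAS$. A secondary point is justifying existence/linearity of the population minimisers, which rests on solvability of the (possibly cyclic) SEM via \cref{lemma:sem-solvability} and on $\cov{X}{\M}\succ\mathbf{0}$.
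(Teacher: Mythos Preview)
Your inequality argument is correct and takes a genuinely different route from the paper. The paper arrives at the same closed forms $\hHat^{\M}_{\erm}-\f=\cov{X}{\M}^{-1}\vb$ and $\hHat^{\M}_{\ivla}-\f=(\cov{X}{\M}+\tfrac{1}{\alpha}\cov{\hat X}{\M})^{-1}\vb$, but then invokes a simultaneous-diagonalization lemma (\cref{lemma:simultaneous-diagonalization}) to write $\cov{X}{\M}=\mS^\top\mS$ and $\cov{\hat X}{\M}=\mS^\top\mD\mS$ with $\mD$ diagonal, reducing the comparison to the coordinatewise bound $\sqNorm{(\In{m}+\tfrac{1}{\alpha}\mD)^{-1}\mS^{-\top}\vb}\le\sqNorm{\mS^{-\top}\vb}$. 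Your exact identity $\CER{\M}{\hHat^{\M}_{\erm}}-\CER{\M}{\hHat^{\M}_{\ivla}}=\sqNorm{\vd_1-\vd_2}_{\cov{X}{\M}}+\tfrac{2}{\alpha}\vd_2^\top\cov{\hat X}{\M}\vd_2$ sidesteps diagonalization entirely and is arguably cleaner; the paper's diagonal form, in turn, makes the equality case read off from which entries of $\mD$ vanish, whereas yours packages it as $\cov{\hat X}{\M}\vd_1=\mathbf 0$.

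There is, however, a slip in your final translation. In this paper $\perpAS$ denotes \emph{Euclidean} almost-sure orthogonality ($U^\top V=0$ a.s.)---see the list of symbols---with no weighting. Your derived equality condition is $\cov{\hat X}{\M}\,\cov{X}{\M}^{-1}\vb=\mathbf 0$, equivalently $\EE{}{\M}{X\given Z}^\top\cov{X}{\M}^{-1}\EE{}{\M}{X\given\residual}=0$ a.s., which is a $\cov{X}{\M}^{-1}$-\emph{weighted} orthogonality, not the Euclidean $\EE{}{\M}{X\given Z}^\top\EE{}{\M}{X\given\residual}=0$ a.s.\ that the theorem literally asserts. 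So your point~(iii) about ``$\cov{X}{\M}$-weighting implicit in $\perpAS$'' is a misreading, and the loop to the statement as written is not closed. It is worth noting that the paper's own proof stumbles at precisely this spot: it passes from ``$\mS^{-\top}\vb\in\ker\mD$'' to ``$\vb\in\ker(\mS^\top\mD\mS)$'', but these are not equivalent in general (the first rewrites as $\cov{\hat X}{\M}\cov{X}{\M}^{-1}\vb=\mathbf 0$, the second as $\cov{\hat X}{\M}\vb=\mathbf 0$). The upshot is that your algebra is right and the equality characterization you obtain is in fact the one that matches the inequality you proved; it simply does not coincide, in general, with the unweighted $\perpAS$ clause as printed.
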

\textit{Proof.} See \cref{app:theorem-ivl-causal-estimation} for the proof. \qed

\Cref{theorem:ivl-robust-prediction} shows that {\myMethod} regression achieves optimal predictive performance across treatment interventions within the perturbation set $\sP_{\alpha}$ defined by $\alpha$. \Cref{theorem:ivl-causal-estimation} further states that this strictly reduces confounding bias in ATE estimates iff the perturbations align with spurious features of $X$, as indicated by the equality condition (also necessary for identifiability in linear IV settings \cite{causal-identifiability-ate,causal-ood-generalizatoin}).

\section{Causal Effect Estimation using Data Augmentation}\label{sec:causal-data-augmentation}
\newlength{\figurepartwidthNew} 
\setlength{\figurepartwidthNew}{0.53\linewidth} 

\newlength{\tablepartwidthNew}
\setlength{\tablepartwidthNew}{0.43\linewidth} 


\begin{figure}[t] 

    \begin{minipage}[t]{\figurepartwidthNew} 
        \centering 
        
        \begin{subfigure}{.465\figurepartwidthNew} 
        \centering
        \adjustbox{width=.5\figurepartwidthNew}{
            \includegraphics{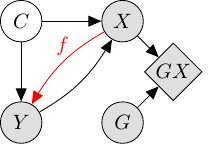}
        }
        \caption{Graph of $\A$ post DA.}
        \label{fig:da-graph}
        \end{subfigure}
        \hfill
        \begin{subfigure}{.535\figurepartwidthNew} 
        \centering
        \adjustbox{width=.3625\figurepartwidthNew}{
            \includegraphics{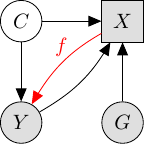}
        }
        \caption{Graph of $\doA{ \tau }{ G\tau }$}
        \label{fig:intervention-da-equivalence}
        \end{subfigure}

        \setcounter{figure}{1}
        
        \captionof{figure}{The observational distribution of $(GX, Y, G, C)$ and $(X, Y, G, C)$ for graphs {\captiona} and {\captionb} respectively are the same. The former applies DA on $X$, whereas the later applies a (soft) intervention on $X$. Furthermore, for the graph in {\captionb}, $G$ is \myMethod.}
    \end{minipage}%
    \hfill 
    \begin{minipage}[t]{\tablepartwidthNew} 
        \centering 
        
        
        \begin{subfigure}{\tablepartwidthNew} 
        \centering
        \adjustbox{width=.578\tablepartwidthNew, trim=0pt 3pt 0pt 0pt, clip}{
            \includegraphics{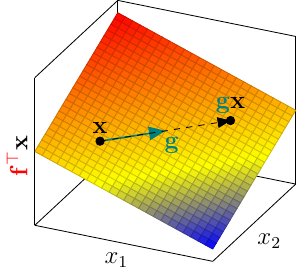}
        }
        \end{subfigure}
        
        \captionof{figure}{The ground truth function $\mathbf{f}$ in \cref{example:da}. The DA applied here corresponds to randomly translating the data samples along their level-set by adding random noise sampled from the null-space of $\mathbf{f}$.}
        \label{fig:da-cartoon}
    \end{minipage}%
\end{figure}
We dedicate this section to the main topic and point of this work---discussing the potential of data augmentation for improving predictive performance across interventions and reducing confounding bias in ATE estimates. To that effect, for the rest of this work we shall consider the following SEM $\A$
\begin{align}\label{eq:problem-setup-sem}
    X = \func{\tau}{Y, C, N_X} ,
    &&
    Y = \func{f}{X} + \func{\epsilon}{C} + N_Y ,
\end{align}
which is assumed to have a unique stationary distribution with exogenous $C, N_X, N_Y$ and the residual $\residual \coloneqq Y - \func{f}{X}$ is zero-mean, i.e. $\EE{}{}{ \residual } = 0$. We also have access to DA transformations $GX$ of $X$ parameterized by $G\sim \distribution{G}{{\A}}$ such as described in \cref{sec:data-augmentation}. \Cref{fig:da-graph} shows the graph of $\A$ post~DA.

Given samples for only $\pr*{X, Y}$ and some valid DA parameterized by $G$, the task is to improve predictive performance across interventions and reduce confounding bias in ATE estimates. We now make two observations in the following sections and state the respective results that follow~thereof.

\subsection{Data augmentation as a soft intervention}
\label{sec:da-intervention}
Consider a (soft) intervention on $\A$ where we substitute the mechanism $\tau$ of $X$ with $G\tau$. With some abuse of notation, we shall represent this SEM by $ \doA{\tau}{G\tau} $ the graph of which is shown in \cref{fig:intervention-da-equivalence}. Note that this SEM also has a unique stationary distribution (proof in \cref{app:proposition-da-intervention-distribution-existence}). Comparing the DA mechanism in $\A$ (\cref{fig:da-graph}) and the intervention $ \doA{\tau}{G\tau} $ (\cref{fig:intervention-da-equivalence}), we see:

\begin{observation}[soft intervention with DA]
\label{observation:da-as-intervention}
Distributions $\distribution{GX, Y, G, C}{\A}$ and $\distribution{X, Y, G, C}{ \doA{\tau}{G\tau} }$ are identical.
\end{observation}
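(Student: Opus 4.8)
The plan is to realize both tuples on a single probability space carrying the shared exogenous variables $G, C, N_X, N_Y$, and to show that for almost every realization the two tuples coincide; equality of the joint laws is then immediate. The facts I would use are: our standing assumption that the structural equations of $\A$ have a unique equilibrium $(X,Y)$, a measurable function of $(C,N_X,N_Y)$ independent of the initialization $(\vx_0,\vy_0)$; the analogous existence-and-uniqueness statement for $\doA{\tau}{G\tau}$ from \cref{app:proposition-da-intervention-distribution-existence}; the $\sG$-invariance $f(\vg\vx)=f(\vx)$; and the continuity of $\vg\vx=\delta(\vx,\vg)$ in $\vx$.

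First I would fix a realization $(\vg,\vc,\vn_X,\vn_Y)$ and let $(\vx,\vy)$ be the corresponding equilibrium of $\A$, so that $\vx=\tau(\vy,\vc,\vn_X)$ and $\vy=f(\vx)+\epsilon(\vc)+\vn_Y$. I claim that $(\vg\vx,\vy)$ solves the equilibrium equations of $\doA{\tau}{G\tau}$ for the same realization, namely $\vx'=\delta(\tau(\vy',\vc,\vn_X),\vg)$ and $\vy'=f(\vx')+\epsilon(\vc)+\vn_Y$. Plugging in $\vx'=\vg\vx$, $\vy'=\vy$: the first equation becomes $\vg\vx=\delta(\tau(\vy,\vc,\vn_X),\vg)=\delta(\vx,\vg)=\vg\vx$, using $\vx=\tau(\vy,\vc,\vn_X)$; the second becomes $\vy=f(\vg\vx)+\epsilon(\vc)+\vn_Y=f(\vx)+\epsilon(\vc)+\vn_Y=\vy$, the middle equality being exactly the $\sG$-invariance of $f$. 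By uniqueness of the stationary solution of $\doA{\tau}{G\tau}$, its equilibrium for this realization is $(\vg\vx,\vy)$. Since $G$ and $C$ are exogenous in both models and carry the same realization $(\vg,\vc)$, the $\doA{\tau}{G\tau}$-tuple $(X,Y,G,C)$ equals the $\A$-tuple $(GX,Y,G,C)$ almost surely, and the two distributions agree.

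An equivalent route that avoids invoking uniqueness of the fixed point is to couple the defining iterations: run the $\A$-iteration from $(\vx_0,\vy_0)$ and the $\doA{\tau}{G\tau}$-iteration from $(\vg\vx_0,\vy_0)$; a one-line induction using $f(\vg\vx_t)=f(\vx_t)$ yields $\vx'_t=\vg\vx_t$ and $\vy'_t=\vy_t$ for all $t$, and letting $t\to\infty$ with $\delta(\cdot,\vg)$ continuous transfers this to the limits.

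I expect the single load-bearing point --- and the reason the statement holds at all --- to be that the $Y$-mechanism is untouched by replacing $\tau$ with $G\tau$, precisely because $f(GX)=f(X)$: this is what lets the $\A$-equilibrium be carried over to the intervened model by post-composing the treatment with the group action. The remaining items (measurability of the equilibrium maps, existence of the stationary law for the intervened SEM) are assumed or supplied by the cited appendix, so I would not belabor them.
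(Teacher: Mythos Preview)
Your proposal is correct and matches the paper's approach: the paper states the observation without proof in the main text but supplies the content via \cref{proposition:da-intervention-distribution-existence}, whose proof is precisely your second route---an induction showing $\vy^{\doA{\tau}{G\tau}}_t=\vy^{\A}_t$ and $\vx^{\doA{\tau}{G\tau}}_t=\vg\vx^{\A}_t$, followed by passing to the limit via continuity of $\vg(\cdot)$. Your first route (verify $(\vg\vx,\vy)$ is a fixed point, then invoke uniqueness-with-respect-to-initialization) is a clean shortcut that the paper does not take explicitly, but it is equivalent and arguably tidier once the appendix proposition is available.
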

\vspace{-0.5\baselineskip}
We can hence treat samples generated from $\A$ via DA as if they were instead generated from $ \doA{\tau}{G\tau} $ by intervening on $X$. This allows us to re-write the DA+ERM risk from \cref{eq:data-augmentation-risk} as,
\begin{align*}
    \Risk{\da[G]\erm}{\A}{ h } = \Risk{\erm}{ \doA{\tau}{G\tau} }{ h },
\end{align*}
to emphasize that DA is equivalent to a (soft) intervention and as such can be used to reduce confounding bias when estimating $f$, as we will show with the following example.

\begin{example}[a linear Gaussian DA example]
\label{example:da}
For scalars $\kappa, \sigma > 0$, non-zero matrices $\K, \E \in \R^{\ast\times m}$ and vectors $\vtau^\top, \f, \veps \in \R^m$ such that $\f^\top \vtau^\top \neq \kappa^{-1}$ so that the following SEM $\A$ is solvable in $(X, Y)$
\begin{align*}
    X = \kappa\cdot \vtau^\top Y + \E^\top C + \sigma N_X ,
    &&
    Y = \f^\top X + \kappa\cdot \veps^\top C + \sigma N_Y ,
    &&
    GX \coloneqq X + \gamma\cdot \K^\top G ,
\end{align*}
where $G, C, N_X, N_Y$ are conformable, centered Gaussian random vectors, $\kappa$ determines how much $\pr*{X, Y}$ are confounded and $\Range{\K^\top}\subseteq\Null{\f^\top}$ so that $GX$ is a valid outcome invariant DA transformation of $X$ parameterized by $G$ with \emph{strength} $\gamma > 0$. This transformation can be viewed as translating $X$ along its level-set as shown in \cref{fig:da-cartoon} and represents our prior knowledge about the symmetries of $\f$ for the purposes of this example.
\end{example}

\begin{restatable}[causal estimation with DA+ERM]{theorem}{daCausalEstimation}
\label{theorem:da-causal-estimation}
For SEM $\A$ in \cref{example:da}, the following holds:
\begin{align*}
    \CER{\A}{ \hHat^{\A}_{\da[G]\erm} } \leq \CER{\A}{ \hHat^{\A}_{\erm} } ,
    &&
    \text{equality iff}
    &&    
    \EE{}{\A}{ GX \given G } \perpAS \EE{}{\A}{ X \given \residual } .
\end{align*}
\end{restatable}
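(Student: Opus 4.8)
The plan is to reduce $\hHat^{\A}_{\da[G]\erm}$ to an ordinary linear least-squares estimator whose normal equations differ from the plain-ERM ones only by a positive semidefinite ``bump'' on the Gram matrix, and then to compare the two causal excess risks via a Loewner-order inequality, reading off the equality case from when that inequality is tight. By \cref{observation:da-as-intervention} (or, directly, by expanding the squared loss), $\hHat^{\A}_{\da[G]\erm}$ minimizes $\EE{}{\A}{\pr*{Y-\h^{\top}GX}^{2}}$ over linear $\h$, hence solves $\EE{}{\A}{GX\,(GX)^{\top}}\,\h=\EE{}{\A}{GX\,Y}$. Substituting $GX=X+\gamma\K^{\top}G$ and using, from \cref{example:da}, that the exogenous variables are centered and mutually independent (so under $\A$ one has $G\indep(X,Y)$ and $\EE{}{\A}{G}=\mathbf 0$, which kills every $G$-cross-term), that outcome-invariance means $\K\f=\mathbf 0$, and that $Y=\f^{\top}X+\residual$ with $\EE{}{\A}{\residual}=0$, I would get
\begin{align*}
\EE{}{\A}{GX\,(GX)^{\top}}=\cov{X}{\A}+\gamma^{2}\K^{\top}\cov{G}{\A}\K\eqqcolon\mM,
&&
\EE{}{\A}{GX\,Y}=\cov{X}{\A}\f+\vb,\quad\vb\coloneqq\EE{}{\A}{X\residual};
\end{align*}
since moreover $\gamma^{2}\K^{\top}\cov{G}{\A}\K\,\f=\mathbf 0$ by outcome-invariance, this collapses to $\hHat^{\A}_{\da[G]\erm}=\mM^{-1}\bigl(\cov{X}{\A}\f+\vb\bigr)=\f+\mM^{-1}\vb$, whereas the same computation dropping the DA term gives $\hHat^{\A}_{\erm}=\f+\cov{X}{\A}^{-1}\vb$. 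Abbreviating $\mA\coloneqq\cov{X}{\A}\succ\mathbf 0$ and $\mD\coloneqq\gamma^{2}\K^{\top}\cov{G}{\A}\K\succcurlyeq\mathbf 0$, so $\mM=\mA+\mD$, and plugging into \cref{eq:cer}, this yields $\CER{\A}{\hHat^{\A}_{\erm}}=\vb^{\top}\mA^{-1}\vb$ and $\CER{\A}{\hHat^{\A}_{\da[G]\erm}}=\vb^{\top}\mM^{-1}\mA\mM^{-1}\vb$.

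The inequality then becomes the matrix statement $\mM^{-1}\mA\mM^{-1}\preccurlyeq\mA^{-1}$, which I would prove by conjugating with $\mM$: it is equivalent to $\mA\preccurlyeq\mM\mA^{-1}\mM$, and expanding, $\mM\mA^{-1}\mM-\mA=(\mA+\mD)\mA^{-1}(\mA+\mD)-\mA=2\mD+\mD\mA^{-1}\mD\succcurlyeq\mathbf 0$, since $\mD\succcurlyeq\mathbf 0$ and $\mA^{-1}\succ\mathbf 0$. (Equivalently, whitening by $\mA^{-1/2}$ turns the claim into $(\mathbf I+\tilde{\mD})^{-2}\preccurlyeq\mathbf I$ with $\tilde{\mD}\coloneqq\mA^{-1/2}\mD\mA^{-1/2}\succcurlyeq\mathbf 0$.) Either way, $\CER{\A}{\hHat^{\A}_{\da[G]\erm}}\le\CER{\A}{\hHat^{\A}_{\erm}}$.

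For the equality case, the gap $\vb^{\top}\bigl(\mA^{-1}-\mM^{-1}\mA\mM^{-1}\bigr)\vb=\vb^{\top}\mM^{-1}\bigl(2\mD+\mD\mA^{-1}\mD\bigr)\mM^{-1}\vb$ has both summands nonnegative, so it vanishes iff $\mD\mM^{-1}\vb=\mathbf 0$; as this forces $\mM^{-1}\vb=\mA^{-1}\vb$, it is equivalent to $\mD\mA^{-1}\vb=\mathbf 0$, and since $\cov{G}{\A}\succ\mathbf 0$ makes $\K^{\top}\cov{G}{\A}\K\vw=\mathbf 0\iff\K\vw=\mathbf 0$, to $\K\mA^{-1}\vb=\mathbf 0$, i.e.\ $\K\bigl(\hHat^{\A}_{\erm}-\f\bigr)=\mathbf 0$: the DA perturbation direction $\Range{\K^{\top}}$ misses the confounded (``spurious'') part of $X$ that ERM mistakes for signal. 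It remains to match this to the stated condition $\EE{}{\A}{GX\given G}\perpAS\EE{}{\A}{X\given\residual}$: in \cref{example:da} one computes $\EE{}{\A}{GX\given G}=\gamma\K^{\top}G$, which lies a.s.\ in $\Range{\K^{\top}}$, and writing out the Gaussian conditional expectation $\EE{}{\A}{X\given\residual}$ in terms of $\vb$ and $\cov{X}{\A}$, the almost-sure orthogonality of the two reduces to $\K\cov{X}{\A}^{-1}\vb=\mathbf 0$.

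I expect the crux to lie in the bookkeeping of the reduction step---especially verifying that outcome-invariance makes $\gamma^{2}\K^{\top}\cov{G}{\A}\K\,\f=\mathbf 0$, so that the DA inflates the Gram matrix while the excess of $\hHat^{\A}_{\da[G]\erm}$ over $\f$ depends only on the unchanged confounding vector $\vb$---and in the final identification of the matrix criterion $\K\cov{X}{\A}^{-1}\vb=\mathbf 0$ with the probabilistic statement $\EE{}{\A}{GX\given G}\perpAS\EE{}{\A}{X\given\residual}$, where reconciling the two requires tracking the $\cov{X}{\A}$-weighting that relates $\EE{}{\A}{X\given\residual}$ to the bias direction $\cov{X}{\A}^{-1}\vb$. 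The Loewner inequality itself, once the setup is in place, is routine.
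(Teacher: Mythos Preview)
Your inequality argument is correct and follows the same blueprint as the paper: both establish $\hHat^{\A}_{\da[G]\erm}-\f=(\mA+\mD)^{-1}\vb$ and $\hHat^{\A}_{\erm}-\f=\mA^{-1}\vb$ with $\mA=\cov{X}{\A}$, $\mD=\gamma^{2}\K^{\top}\cov{G}{\A}\K$, $\vb=\EE{}{\A}{X\residual}$, and then compare the $\mA$-weighted norms. The paper does this by simultaneously diagonalizing $\mA$ and $\mD$ (\cref{lemma:simultaneous-diagonalization}) to reduce to $\norm{(\In{m}+\mD')^{-1}\vw}\le\norm{\vw}$ for a diagonal $\mD'\succcurlyeq\mathbf 0$; you instead conjugate by $\mM=\mA+\mD$ and expand $\mM\mA^{-1}\mM-\mA=2\mD+\mD\mA^{-1}\mD$. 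The two routes are equivalent and yours is arguably more direct.

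The gap is in your final identification. Your algebraic equality criterion $\K\mA^{-1}\vb=\mathbf 0$ is derived carefully, but the claim that ``writing out $\EE{}{\A}{X\given\residual}$ in terms of $\vb$ and $\cov{X}{\A}$'' produces it does not go through as written: in \cref{example:da} the residual $\residual$ is a scalar, so the Gaussian regression formula gives $\EE{}{\A}{X\given\residual}=\vb\,\residual/\VV{}{}{\residual}$, with no $\cov{X}{\A}$ appearing. Almost-sure orthogonality of $\EE{}{\A}{GX\given G}=\gamma\K^{\top}G$ against this vector (using $G\indep\residual$, both nondegenerate) then reduces to $\K\vb=\mathbf 0$, not $\K\mA^{-1}\vb=\mathbf 0$. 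The paper obtains the stated probabilistic condition via \cref{lemma:gaussian-conditional-orthogonality} applied to the kernel condition $\cov{\tG}\vb=\mathbf 0$---which is the form it extracts from its diagonalization step---so the discrepancy between your criterion and the paper's sits precisely in whether an $\mA^{-1}$ factor survives when passing from the norm inequality to the kernel statement. You should either supply an explicit argument linking $\K\mA^{-1}\vb=\mathbf 0$ to the stated $\perpAS$ condition, or revisit the paper's passage from $\mD'\mS^{-\top}\vb=\mathbf 0$ to ``$\vb\in\Null{\mS^{\top}\mD'\mS}$'' to see how that factor is handled there.
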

\textit{Proof.} See \cref{app:theorem-da-causal-estimation} for the proof. \qed

That is, DA strictly reduces confounding bias in ATE estimate iff the induced intervention perturbs $X$ along spurious features. Importantly, \cref{theorem:da-causal-estimation} suggests that lower confounding~bias~is~not a `free lunch' with outcome invariance of DA and practitioners may need domain knowledge to construct DA that targets spurious features. Fortunately however, \cref{theorem:da-causal-estimation} also suggests~that~with~outcome invariance, DA should not perform worse than ERM. We say that DA+ERM \emph{dominates} ERM on causal estimation \cite[p. 48]{domination}. Practitioners may therefore be advised to generously use such DA, as it achieves regularization in the worst case, and mitigates confounding bias as a `bonus' in the best case.

\subsection{Worst-case data augmentation with {\myMethod} regression}
\label{sec:da-uiv}
We once again point our attention to the graph of $ \doA{\tau}{G\tau} $ from \cref{fig:intervention-da-equivalence} to observe that:

\begin{observation}[IV-like DA parameters]
\label{observation:da-as-ivl}
    In SEM $ \doA{\tau}{G\tau} $, the DA parameters $G$ are {\myMethod}.
\end{observation}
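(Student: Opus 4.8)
The plan is to verify, for $Z \coloneqq G$, the first three defining properties of an instrument directly in the post-intervention SEM $\doA{\tau}{G\tau}$, whose structural equations are $X = \func{G\tau}{Y, C, N_X}$ and $Y = \func{f}{X} + \func{\epsilon}{C} + N_Y$, and then to note that the fourth property (outcome relevance) may genuinely fail. The only structural facts I would use are that in $\A$ from \cref{eq:problem-setup-sem} the DA parameter $G$ is an exogenous root node (hence $G \indep C$, $G \indep N_X$, $G \indep N_Y$), and that the intervention $\intervention{\tau}{G\tau}$ merely rewires $G$ to feed the mechanism of $X$ while leaving $G$ a root and the mechanism of $Y$ untouched --- this is exactly the graph in \cref{fig:intervention-da-equivalence}. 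By \cref{observation:da-as-intervention} the joint law of $\pr*{X, Y, G, C}$ in $\doA{\tau}{G\tau}$ coincides with that of $\pr*{GX, Y, G, C}$ in $\A$, which is convenient whenever it is easier to reason in $\A$.

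First, \emph{treatment relevance}: in $\doA{\tau}{G\tau}$ the treatment $X$ is the $G$-transformed value of $\func{\tau}{Y, C, N_X}$, so $X$ depends functionally on $G$; equivalently, by \cref{observation:da-as-intervention}, $G \nindep X$ here iff $G \nindep GX$ in $\A$, which holds for any active augmentation (indeed it is equivalent to $\distribution{X}{\A}$ not being $\sG$-invariant, e.g.\ $GX = X + \gamma \K^\top G$ with $\gamma > 0$, $\K \neq \mathbf{0}$ in \cref{example:da}). Second, \emph{un-confoundedness}: the residual in this SEM is $\residual = Y - \func{f}{X} = \func{\epsilon}{C} + N_Y$, a deterministic function of $\pr*{C, N_Y}$ only, and since $G$ is a root independent of $\pr*{C, N_X, N_Y}$ we get $G \indep \residual$.

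Third --- the only step that needs a little care --- the (counterfactual) \emph{exclusion restriction}: performing the additional atomic intervention $\intervention{X}{\vx}$ on $\doA{\tau}{G\tau}$ deletes every edge into $X$, including the edge from $G$ and the feedback edge from $Y$, so the cycle is severed and no equilibrium argument is needed; the resulting outcome is $Y^{\intervention{X}{\vx}} = \func{f}{\vx} + \func{\epsilon}{C} + N_Y$, once more a function of $\pr*{C, N_Y}$ and hence independent of $G$, giving $G \indep Y^{\intervention{X}{\vx}}$. Finally, outcome relevance need not hold: in \cref{example:da}, $\Range{\K^\top} \subseteq \Null{\f^\top}$ forces $\func{f}{GX} = \func{f}{X}$, so in $\doA{\tau}{G\tau}$ the outcome $Y = \func{f}{X} + \func{\epsilon}{C} + N_Y$ carries no information about $G$ and in fact $Y \indep G$. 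Having verified properties one through three, $G$ is IVL with respect to $\pr*{X, Y}$ in $\doA{\tau}{G\tau}$. The main obstacle is purely bookkeeping: being explicit that (a) $G$ is independent of the remaining exogenous variables, which is what makes $G \indep \residual$ go through, and (b) the $\intervention{X}{\vx}$ intervention in the exclusion-restriction definition breaks the cyclic dependence, so that $Y^{\intervention{X}{\vx}}$ is a bona fide (non-equilibrium) function of $\pr*{C, N_Y}$ alone.
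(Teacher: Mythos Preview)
Your proposal is correct. The paper itself does not give a formal proof of this observation at all: it merely points the reader to the graph of $\doA{\tau}{G\tau}$ in \cref{fig:intervention-da-equivalence} and states that $G$ is {\myMethod} by inspection of that graph (root node $G$ with its only outgoing edge into $X$). Your argument makes explicit exactly what that graphical reading amounts to --- exogeneity of $G$ for un-confoundedness, edge-deletion under $\intervention{X}{\vx}$ for exclusion, and the additive linear example for the possible failure of outcome relevance --- so it is strictly more detailed than, but fully compatible with, the paper's one-line justification. The one place you go beyond the paper is in noting that treatment relevance is equivalent to $\distribution{X}{\A}$ not being $\sG$-invariant; the paper leaves this implicit (it would follow from faithfulness of the edge $G\to X$), and your characterisation is both correct and a useful clarification.
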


In light of this we can now re-write the IV and {\myMethod} risks for $ \doA{\tau}{G\tau} $ to respectively read
\begin{align*}
    \Risk{\da[G]\iv}{\A}{ h } = \Risk{\iv}{ \doA{\tau}{G\tau} }{ h },  
    &&  
    \Risk{\da[G]\ivla}{\A}{ h } = \Risk{\ivla}{ \doA{\tau}{G\tau} }{ h }.
\end{align*}

\begin{restatable}[worst-case DA with DA+{\myMethod} regression]{corollary}{worstCaseDA}
\label{corollary:worst-case-da}
For SEM $\A$ in \cref{example:da}, it holds that
\begin{align*}
\hHat^{\A}_{\da[G]\ivla} \in \argmin_{\h}\max_{\vg\in\sG_{\alpha}} \Risk{\da[\vg]\erm}{\A}{ \h } ,
&&
\text{s.t.}
&&
\sG_{\alpha}\coloneqq \br*{ 
    \vg \given  \K^\top \vg \vg^\top \K \preccurlyeq \pr*{ \frac{1}{\alpha} + 1 } \K^\top \cov{ G }{\A} \K 
} .
\end{align*}
\end{restatable}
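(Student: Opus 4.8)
The plan is to obtain the statement as a corollary of \cref{theorem:ivl-robust-prediction}, by recognizing $\doA{\tau}{G\tau}$ (for the $\A$ of \cref{example:da}) as an instance of the linear Gaussian {\myMethod} model of \cref{example:ivl} and then translating back to $\A$. First I would write out the intervened SEM: replacing the mechanism $\tau$ of $X$ by $G\tau = \tau + \gamma\cdot\K^\top G$ gives
\begin{align*}
    X = \kappa\cdot\vtau^\top Y + \E^\top C + \sigma N_X + \gamma\cdot\K^\top G,
    &&
    Y = \f^\top X + \kappa\cdot\veps^\top C + \sigma N_Y,
\end{align*}
which is exactly the SEM of \cref{example:ivl} under the relabelling $Z\mapsto G$, $\K\mapsto\gamma\K$, $\vtau\mapsto\kappa\vtau$, $\veps\mapsto\kappa\veps$ (leaving $C, N_X, N_Y, \sigma$ untouched). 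I would then verify that the hypotheses of \cref{example:ivl} transfer: $\gamma\K$ and $\E$ are non-zero, $G$ is centered Gaussian, the solvability requirement $\f^\top(\kappa\vtau^\top)\neq 1$ is precisely $\f^\top\vtau^\top\neq\kappa^{-1}$ of \cref{example:da}, and $G$ is {\myMethod} w.r.t.\ $(X,Y)$ in $\doA{\tau}{G\tau}$ by \cref{observation:da-as-ivl}. Since $G$ is exogenous, the intervention leaves its distribution unchanged, so $\cov{G}{\doA{\tau}{G\tau}} = \cov{G}{\A}$.

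Next I would invoke the two risk identities already recorded in the text. From $\Risk{\da[G]\ivla}{\A}{\h} = \Risk{\ivla}{\doA{\tau}{G\tau}}{\h}$ for all $\h$, the two objectives share the same minimizers, so $\hHat^{\A}_{\da[G]\ivla}$ is a minimizer of $\Risk{\ivla}{\doA{\tau}{G\tau}}{\cdot}$. Applying \cref{theorem:ivl-robust-prediction} to the model $\doA{\tau}{G\tau}$ under the relabelling above then shows that $\hHat^{\A}_{\da[G]\ivla}$ minimizes, over $\h$, the worst over $\vzeta\in\mathcal{P}_\alpha$ of the ERM risk of the SEM obtained from $\doA{\tau}{G\tau}$ by the further soft intervention that overrides the term $\gamma\cdot\K^\top G$ with $\vzeta$, where $\mathcal{P}_\alpha = \br*{ \vzeta \given \vzeta\vzeta^\top \preccurlyeq \pr*{\tfrac1\alpha + 1}\gamma^2\K^\top\cov{G}{\A}\K }$ (the extra $\gamma^2$ coming from the substitution $\K\mapsto\gamma\K$).

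It remains to identify this inner problem with the worst-case DA objective $\max_{\vg\in\sG_\alpha}\Risk{\da[\vg]\erm}{\A}{\h}$, which I would do by the change of variables $\vzeta = \gamma\K^\top\vg$. On one hand, overriding $\gamma\cdot\K^\top G$ by $\gamma\cdot\K^\top\vg$ in $\doA{\tau}{G\tau}$ produces precisely the SEM $\doA{\tau}{\vg\tau}$, whose ERM risk equals $\Risk{\da[\vg]\erm}{\A}{\h}$ by the DA-as-soft-intervention reasoning of \cref{observation:da-as-intervention} --- here outcome invariance, $\f^\top\K^\top = 0$, is what makes the augmented and intervened joint laws with $Y$ agree. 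On the other hand $(\gamma\K^\top\vg)(\gamma\K^\top\vg)^\top = \gamma^2\K^\top\vg\vg^\top\K$, so after cancelling the positive factor $\gamma^2$ the constraint $\vzeta\in\mathcal{P}_\alpha$ becomes exactly $\K^\top\vg\vg^\top\K \preccurlyeq \pr*{\tfrac1\alpha + 1}\K^\top\cov{G}{\A}\K$, i.e.\ $\vg\in\sG_\alpha$. Finally this correspondence is onto: since $\Range{\K^\top\cov{G}{\A}\K}\subseteq\Range{\K^\top}$ and $0\preccurlyeq M\preccurlyeq N$ forces $\Range{M}\subseteq\Range{N}$, every $\vzeta\in\mathcal{P}_\alpha$ lies in $\Range{\K^\top}$ and hence has the form $\gamma\K^\top\vg$ for some (automatically admissible) $\vg$. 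Thus the two inner maxima agree for every $\h$, their $\argmin_\h$ sets coincide, and the claim follows. I expect the only slightly delicate bookkeeping to be propagating the substitution $\K\mapsto\gamma\K$ through the quadratic constraint (so that the $\gamma^2$'s cancel on both sides) and the range argument confining $\mathcal{P}_\alpha$ to $\Range{\K^\top}$ so that the change of variables is surjective onto it; everything else is a direct appeal to \cref{theorem:ivl-robust-prediction}, \cref{observation:da-as-intervention}, and \cref{observation:da-as-ivl}.
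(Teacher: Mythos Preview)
Your proposal is correct and follows exactly the route the paper takes: the paper's proof is the one-line ``follows from \cref{observation:da-as-intervention}, \cref{observation:da-as-ivl} and \cref{theorem:ivl-robust-prediction}'', and you have simply unpacked that citation, carrying the substitution $\K\mapsto\gamma\K$ through the constraint and supplying the range argument that makes the change of variables $\vzeta=\gamma\K^\top\vg$ surjective onto $\mathcal{P}_\alpha$. These bookkeeping points are details the paper leaves implicit, not a different approach.
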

\textit{Proof.} The result follows from \cref{observation:da-as-intervention}, \cref{observation:da-as-ivl} and \cref{theorem:ivl-robust-prediction}. \qed

\begin{restatable}[causal estimation with DA+{\myMethod} regression]{corollary}{daivlCausalEstimation}
\label{corollary:da-ivl-causal-estimation}
For $\alpha, \gamma < \infty$ in SEM $\A$ from \cref{example:da},
\begin{align*}
    \CER{\A}{ \hHat^{\A}_{\da[G]\ivla} } \leq \CER{\A}{ \hHat^{\A}_{\da[G]\erm} } ,
    &&
    \text{equality iff}
    &&
    \EE{}{\A}{ GX \given G } \perpAS \EE{}{\A}{ X \given \residual } .
\end{align*}
\end{restatable}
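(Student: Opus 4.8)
The plan is to obtain this as a corollary of \cref{theorem:ivl-causal-estimation}, exactly the way \cref{corollary:worst-case-da} follows from \cref{theorem:ivl-robust-prediction}: reinterpret DA as an intervention and the DA parameters as IVL variables. First I would record the reduction. Combining the risk identities displayed just above with \cref{observation:da-as-intervention} gives $\Risk{\da[G]\ivla}{\A}{h}=\Risk{\ivla}{\doA{\tau}{G\tau}}{h}$ and $\Risk{\da[G]\erm}{\A}{h}=\Risk{\erm}{\doA{\tau}{G\tau}}{h}$, hence $\hHat^{\A}_{\da[G]\ivla}=\hHat^{\doA{\tau}{G\tau}}_{\ivla}$ and $\hHat^{\A}_{\da[G]\erm}=\hHat^{\doA{\tau}{G\tau}}_{\erm}$. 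Next I would verify that $\doA{\tau}{G\tau}$, built from the SEM $\A$ of \cref{example:da}, is again a linear Gaussian IVL model of the form in \cref{example:ivl}: substituting $G\tau$ for $\tau$ turns the treatment equation into $X=\kappa\vtau^\top Y+\gamma\K^\top G+\E^\top C+\sigma N_X$ with the $Y$-equation unchanged, which matches \cref{example:ivl} under the relabelling $(Z,\K,\vtau^\top,\veps)\mapsto(G,\gamma\K,\kappa\vtau^\top,\kappa\veps)$; the solvability requirement $\f^\top(\kappa\vtau^\top)\neq1$ is exactly the hypothesis $\f^\top\vtau^\top\neq\kappa^{-1}$ of \cref{example:da}, $G$ is IVL in $\doA{\tau}{G\tau}$ by \cref{observation:da-as-ivl}, and confoundedness carries over because adding the exogenous term $\gamma\K^\top G$ does not change $\EE{}{}{X\residual^\top}$. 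Thus \cref{theorem:ivl-causal-estimation} applies to $\doA{\tau}{G\tau}$ for $\alpha<\infty$.

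Applying it yields $\CER{\doA{\tau}{G\tau}}{\hHat^{\A}_{\da[G]\ivla}}\le\CER{\doA{\tau}{G\tau}}{\hHat^{\A}_{\da[G]\erm}}$ with equality iff $\EE{}{\doA{\tau}{G\tau}}{X\given Z}\perpAS\EE{}{\doA{\tau}{G\tau}}{X\given\residual}$, and it remains to transport this to $\A$. Since intervening by $G\tau$ leaves the $Y$-mechanism --- and therefore the causal map $\f$ and, because $\f^\top\K^\top=0$, the residual $\residual$ --- untouched, and since $\distribution{GX,Y,G,C}{\A}=\distribution{X,Y,G,C}{\doA{\tau}{G\tau}}$ by \cref{observation:da-as-intervention}, the two causal excess risks agree on the estimators at hand, so the inequality becomes the asserted one. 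For the equality clause, $\EE{}{\doA{\tau}{G\tau}}{X\given Z}$ reads $\EE{}{\A}{GX\given G}$, while $\EE{}{\doA{\tau}{G\tau}}{X\given\residual}=\EE{}{\A}{GX\given\residual}=\EE{}{\A}{X\given\residual}$, the last step because the exogenous DA parameter $G$ is independent of $\residual$ in $\A$, so $\EE{}{\A}{\gamma\K^\top G\given\residual}=0$; this is the displayed condition, and it coincides with the one in \cref{theorem:da-causal-estimation}, as it should, being a property of the intervened distribution alone and not of $\alpha$.

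The step I expect to be the main obstacle is the bookkeeping over which covariance weights the causal excess risk: the claim is written with $\CER{\A}{\cdot}$ on both sides, but the reduction produces $\CER{\doA{\tau}{G\tau}}{\cdot}$, and $\cov{X}{\doA{\tau}{G\tau}}=\cov{GX}{\A}=\cov{X}{\A}+\gamma^2\K^\top\cov{G}{\A}\K$ is not $\cov{X}{\A}$. One clean resolution is the convention --- already in force in \cref{theorem:da-causal-estimation} --- that once DA is active the causal target concerns interventions on the augmented treatment $GX$, making $\CER{\doA{\tau}{G\tau}}{\cdot}$ the relevant object. Alternatively one argues the estimate comparison directly: writing out the quadratic $\Risk{\da[G]\ivla}{\A}{\h}$ and setting its gradient to zero gives $\hHat^{\A}_{\da[G]\ivla}=\bigl(\cov{X}{\A}+\tfrac{1+\alpha}{\alpha}\gamma^2\K^\top\cov{G}{\A}\K\bigr)^{-1}\cov{X,Y}{\A}$, i.e.\ the DA+ERM estimator of \cref{example:da} run at the inflated augmentation strength $\gamma\sqrt{1+1/\alpha}>\gamma$, as the worst-case reading of \cref{corollary:worst-case-da} suggests; then, using $\f^\top\K^\top=0$ to write $\hHat-\f=(\cov{X}{\A}+\lambda\K^\top\cov{G}{\A}\K)^{-1}\cov{X,\residual}{\A}$ at augmentation strength $\sqrt{\lambda}$, a simultaneous diagonalisation of $\cov{X}{\A}$ and $\K^\top\cov{G}{\A}\K$ turns $\lambda\mapsto\CER{\A}{\hHat}$ into a sum of terms proportional to $(1+\lambda\sigma_i)^{-2}$, hence non-increasing --- precisely the monotonicity behind \cref{theorem:da-causal-estimation}, now invoked between strengths $\gamma^2$ and $\tfrac{1+\alpha}{\alpha}\gamma^2$ instead of between $0$ and $\gamma^2$ --- with flatness of this map recovering the same equality condition as in \cref{theorem:da-causal-estimation}, namely the displayed one.
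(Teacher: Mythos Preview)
Your approach matches the paper's exactly: the paper's proof is the single line ``follows directly from \cref{theorem:ivl-causal-estimation} and \cref{observation:da-as-ivl},'' i.e.\ apply the IVL causal-estimation theorem to $\doA{\tau}{G\tau}$ after recognising $G$ as IVL there. Your reduction and verification that $\doA{\tau}{G\tau}$ is an instance of \cref{example:ivl} are precisely the unpacking of that sentence.

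Where you go beyond the paper is the covariance bookkeeping you flag as the main obstacle, and this is a fair point: a literal application of \cref{theorem:ivl-causal-estimation} to $\doA{\tau}{G\tau}$ yields the inequality in the $\cov{X}{\doA{\tau}{G\tau}}=\cov{GX}{\A}$ norm, whereas the corollary is stated with $\CER{\A}{\cdot}$, i.e.\ the $\cov{X}{\A}$ norm. The paper leaves this implicit. Your second resolution---simultaneously diagonalising $\cov{X}{\A}$ and $\K^\top\cov{G}{\A}\K$ and reading both estimators as DA+ERM at strengths $\gamma^2$ and $(1+1/\alpha)\gamma^2$, then invoking the coordinatewise monotonicity $(1+\lambda d_i)^{-2}$---is clean and delivers the inequality directly in the $\cov{X}{\A}$ norm with the stated equality condition; it is effectively the proof of \cref{theorem:da-causal-estimation} rerun between two positive augmentation levels rather than between $0$ and $\gamma^2$, and is arguably more self-contained than the paper's one-liner.
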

\textit{Proof.} The result follows directly from \cref{theorem:ivl-causal-estimation} and \cref{observation:da-as-ivl}. \qed

Using DA parameters as IVL therefore simulates a worst-case, or adversarial application of DA within a set of transforms $\sG_{\alpha}$. Of course \cref{corollary:worst-case-da} can also be viewed as a predictor that generalizes to treatment interventions encoded by $\sG_{\alpha}$. As is intuitive, such a worst-case intervention improves our ATE estimation so long as the features of $X$ intervened along include some that are spurious (\cref{corollary:da-ivl-causal-estimation}). DA and IVL regression may therefore be used in composition if the application can benefit from regularization and/ or better prediction generalization across DA-induced interventions, with a `bonus' of lower confounding bias if the DA also augments any spurious features of $X$.

\section{Related Work}\label{sec:related-work}
{
\newlength{\clipLeftSweep}
\newlength{\clipRightSweep}
\newlength{\clipBottomSweep}
\newlength{\clipTopSweep}

\setlength{\clipLeftSweep}{1.75cm}
\setlength{\clipRightSweep}{0.375cm}
\setlength{\clipBottomSweep}{0.35cm}
\setlength{\clipTopSweep}{0.3cm}

\setlength{\fboxsep}{0pt}

\begin{figure*}[t]
\centerfloat
\begin{subfigure}{0.35194\linewidth}
\centering
    \raisebox{0.15ex}{\includegraphics[width=\linewidth,trim={0.3cm \clipBottomSweep{} \clipRightSweep{} \clipTopSweep{}},clip]{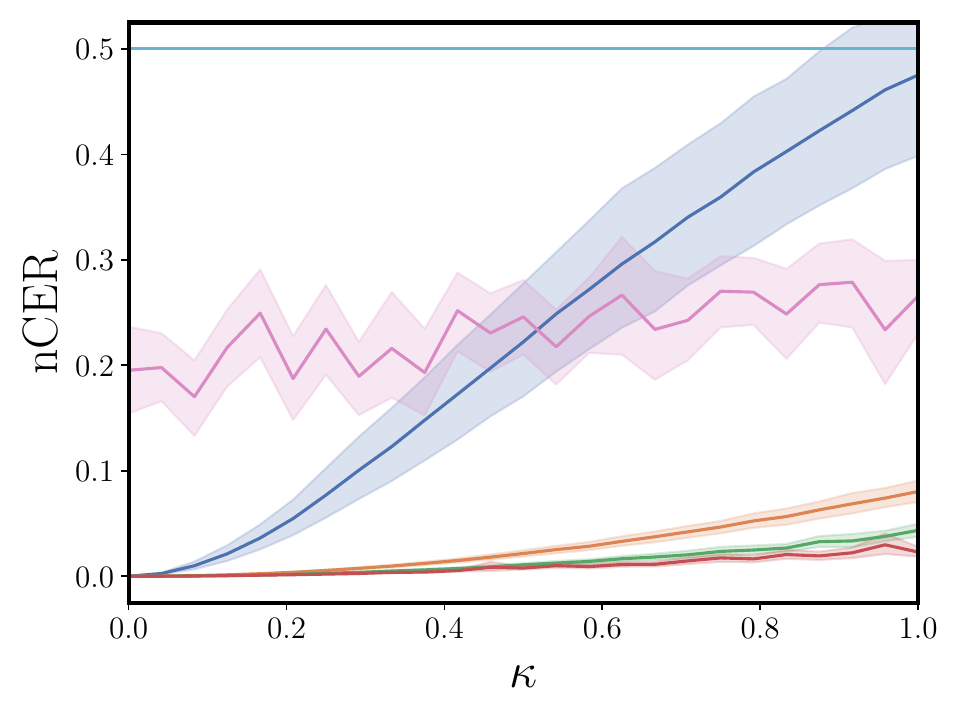}}
    \caption{$\gamma = 1$, $\kappa\in[0, 1]$}
    \label{fig:kappa-sweep}
\end{subfigure}
\hfill
\begin{subfigure}{0.31903\linewidth}
\centering
    \includegraphics[width=\linewidth,trim={\clipLeftSweep{} \clipBottomSweep{} \clipRightSweep{} \clipTopSweep{}},clip]{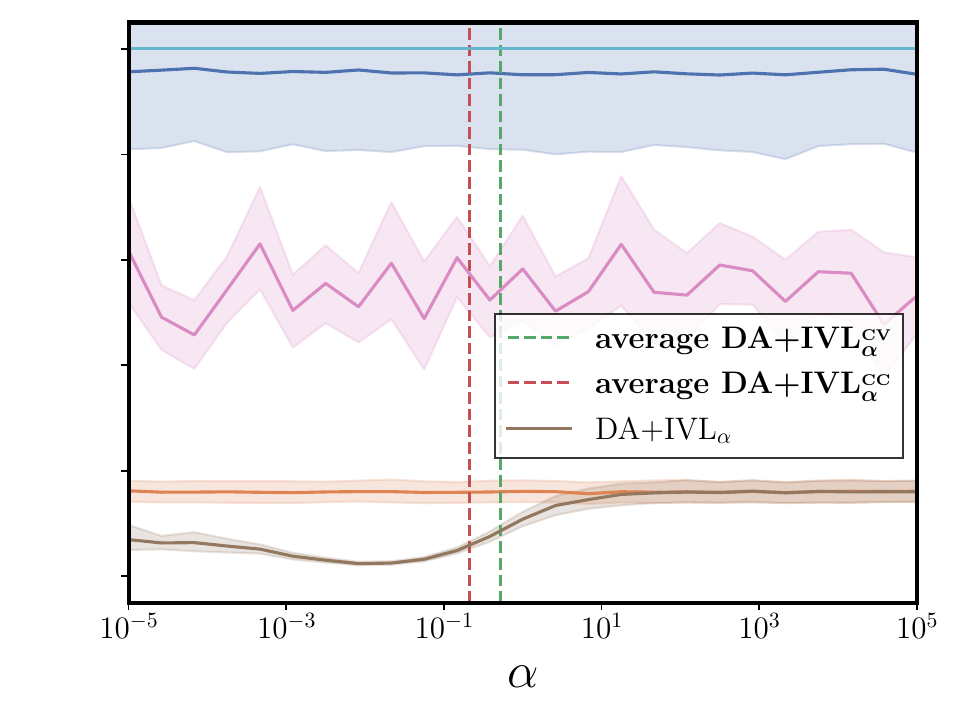}
    \caption{$\gamma, \kappa = 1$, $\alpha\in[10^{-5}, 10^5]$}
    \label{fig:alpha-sweep}
\end{subfigure}
\hfill
\begin{subfigure}{0.31903\linewidth}
\centering
    \includegraphics[width=\linewidth,trim={\clipLeftSweep{} \clipBottomSweep{} \clipRightSweep{} \clipTopSweep{}},clip]{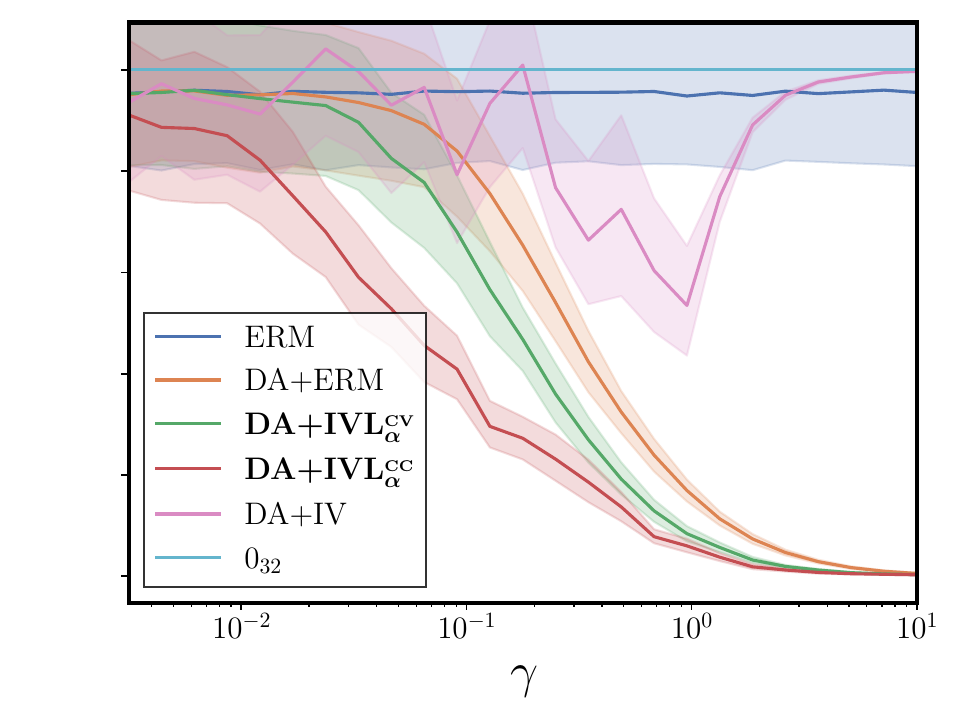}
    \caption{$\gamma \in [10^{-2.5}, 10]$, $\kappa=1$}
    \label{fig:gamma-sweep}
\end{subfigure}
\caption{Simulation experiment for a linear Gaussian SEM. $\kappa$ represents the amount of confounding, $\gamma$ is the strength of DA and $\alpha$ is the {\myMethod} regularization parameter. Each data-point represents the average ${\nCER{}}$ over $25$ trials with a $95\%$ confidence interval (CI).}
\label{fig:simulation-experiment}
\end{figure*}

}

\textbf{Causal regularization} is perhaps the most appropriate classification for this work. These methods aim for more robust prediction by mitigating the upstream problem of confounding bias in a more accessible way than is required for full identification. This is done, for example, by relaxing properties of auxiliary variables \cite{causal-regularization2,deconfounding-causal-regularization,regularizing-towards-causal-invariance,anchor}, as we have done via our IVL approach. Most relevant, however, are methods that re-purpose common regularizers, canonically used for estimation variance reduction and i.i.d. prediction generalization, for confounding bias mitigation. Of note is \cite{causal-regularization}, where a certain linear modeling assumption allows the estimation of $\sqNorm{\f}$ from observational $(X, Y)$ data, which is then used to develop a cross-validation scheme for $\ell_1,\ell_2$ regularization. \cite{causal-interpolation-regularization} conducted a similar theoretical analysis for the min-norm interpolator. To the best of our knowledge, we are the first to study the same for DA---re-purposing yet another ubiquitous regularizer to mitigate confounding~bias. 

\textbf{Domain generalization (DG)} \cite{krikamol-domain-generalization} methods aim for prediction generalization to unseen test domains via \emph{robust optimization (RO)} \cite{RO} over a perturbation set $\sP$ of possible test domains $\rho\in\sP$ as
\begin{equation*}
\Risk{\text{{RO}}}{\sP}{ h } \coloneqq \max_{\rho\in\sP} \Risk{\erm}{\rho}{ h } ,
\end{equation*}
Since generalizing to arbitrary test domains is impossible, the choice of perturbation set encodes one's assumptions about which test domains might be encountered. Instead of making such assumptions a priori, it is often assumed to have access to data from multiple training domains which can inform one's choice of perturbation set. This setting is explored in group distributionally robust optimization (DRO) \cite{dro}. Variations have been used to mitigate confounding bias and subsequently generalize to treatment interventions when used with interventional data \cite{icp,causal_invariance_nonlinear}, confounder information (i.e. entire graph) \cite{riskextrap,15,cocycle} or some proxy thereof in the form of environments \cite{arjovsky2020invariant,ac-vocal,owb-comain-rl,riskextrap}. We, however, do not assume access to any of these and instead synthesize interventions via DA.


\textbf{Counterfactual DA} strategies have been the primary lens for causal analyses of DA \cite{ilse2020selecting,notJustPrettyPictures,causalDA4LLM,mocoda,caiac,causal-matching,contrastive}. These aim for prediction robustness to treatment interventions via DA simulated \emph{counterfactuals}.\footnote{Representing an SEM with exogenous noise distribution conditioned on some variable $Y=\vy$ by $\A_{Y=\vy}$, the counterfactual SEM $\counterfactualA{Y=\vy}{X}{\vx}$ is an intervention $\intervention{X}{\vx}$ on $\A_{Y=\vy}$. The resulting \emph{counterfactual distribution} then captures questions like: ``After observing $Y=\vy$, what would have been~had~$X=\vx$~been~true.''} As with counterfactual reasoning more broadly, this requires strong assumptions---such as access to the full SEM \cite{notJustPrettyPictures,causalDA4LLM}, auxiliary variables \cite{ilse2020selecting,causalDA4LLM,causal-matching,contrastive}, or causal graphs \cite{mocoda,caiac}. By contrast, we show that outcome invariance of DA suffices for treatment intervention robustness without invoking counterfactuals. Moreover, prior work has largely overlooked causal effect estimation, often assuming reverse-causal settings where the ATE becomes trivial \cite{ilse2020selecting,causalDA4LLM,notJustPrettyPictures}. Ours is the first framework to study ATE estimation via DA with minimal structural assumptions.

\textbf{Invariant prediction} based methods aim to make predictions based on statistical relationships that remain stable across all domains in $\sP$. A common assumption, for instance, is that $\distribution{Y\mid X}$ is invariant across $\sP$, with only the marginal $\distribution{X}$ being allowed to vary. Invariance is also closely linked to causal discovery---following the classic ICM principle~\cite{eoci}, causal mechanisms remain stable under interventions on inputs \cite{causal-ood-generalizatoin,when-shift-happens}. This connection has inspired approaches that enforce invariance conditions to recover causal structures \cite{icp,causal_invariance_nonlinear}. IV regression can also be viewed as one such method, where the goal is to learn predictors whose residuals are invariant to the instruments \cite{iv_rui,iv_arthur,dual-iv,deep-iv-arthur,specIV}. More broadly, the principle of invariance, whether motivated by causality or otherwise,~has~proven~useful for improving prediction generalization across heterogeneous settings~\cite{anchor,arjovsky2020invariant,bo,regularizing-towards-causal-invariance,7DA,fanny,zz1-droid,rice,krikamol-domain-generalization}.~

\section{Experiments}\label{sec:experiments}
We began by presenting results in the infinite-sample setting to emphasize that mitigating confounding bias is fundamentally not a sample size issue, i.e., not solvable through traditional regularization alone. In this section, we turn to the finite-sample regime and empirically evaluate the effectiveness of DA in reducing hidden confounding bias. Importantly, we do not use DA for its conventional purpose of augmenting data to improve i.i.d. generalization or reduce estimation variance. Throughout all experiments, we therefore fix the number of samples in the augmented dataset to match that of the original dataset since our focus lies squarely on robust prediction via confounding bias mitigation.

Finding baselines for evaluating our results is however a challenge---the problem of mitigating confounding bias given only observational $(X, Y)$ data and symmetry knowledge via DA is quite underexplored. Nevertheless, for the sake of completeness we make an effort to re-purpose existing methods from domain generalization, invariance learning and causal inference literature to be used as baselines. These methods often require access to additional variables (e.g. IVs, confounders, domains/environments, etc.), and to maintain fairness we will replace these with DA parameters $G$. Such a comparison is conceptually valid since by virtue of being DG methods, they are essentially solving a robust loss of a similar form as in \cref{corollary:worst-case-da}, giving us meaningful baselines for DA+IVL.

In addition to standard ERM, DA and IV regression, our baselines include DRO \cite{dro}, invariant risk minimization (IRM) \cite{arjovsky2020invariant}, invariant causal prediction (ICP) \cite{icp}, regularization with invariance on causal essential set (RICE) \cite{rice}, variance risk extrapolation (V-REx) and minimax risk extrapolation (MM-REx) \cite{riskextrap}. We also include the causal regularization method by Kania and Wit \cite{causal-regularization2} and the $\ell_1, \ell_2$ approaches by Janzing \cite{causal-regularization}. We discretize $G$ if the method accepts only discrete variables. For {\myMethod} regression, we select the regularization parameter $\alpha$ in a variety of ways, including vanilla cross validation (CV), level-based CV (LCV) and confounder correction (CC) as described in \cref{app:implementation-details}. Other implementation details are provided in \cref{app:experiments}, and the code to reproduce our results is publicly released at \url{https://github.com/uzairakbar/causal-data-augmentation}.

To make CER based evaluation more interpretable for our experiments, we propose the normalization
\begin{align*}
    \nCER{\M}{h} \coloneqq \frac{
    \CER{\M}{h}
    }{
    \CER{\M}{h} + \CER{\M}{ h_0 }
    } \in \sqPr*{0, 1} ,
    &&
    \func{h_0}{ \cdot } \coloneqq \EE{}{\doM{X}}{Y} ,
\end{align*}
where $h_0$ represents the null treatment effect, i.e. when $X$ has no causal influence on $Y$, then $\EE{}{\doM{X}}{Y \given X} = \EE{}{\doM{X}}{Y}$. The normalized CER (nCER) can be considered a generalization of the metrics used by \cite{causal-regularization,optical-device,dominik-optical-device-causal-reg2} in linear settings and similarly has the interesting property that it is $0$ for the ground-truth causal solution $h=f\neq h_0$ but $1$ if there is pure confounding~for~$h\neq f = h_0$. Janzing argues in \cite{optical-device,dominik-optical-device-causal-reg2} that using an Euclidean norm instead of the weighted norm in \cref{eq:cer} is more relevant for causal settings, which also motivates our choice when evaluating results of the simulation and optical-device experiments described below. Conceptually, this is equivalent to evaluation based on the causal risk in \cref{eq:causal-risk} under the interventional distribution $X^{\prime}\sim \normalDistribution{\0{m}}{\mathbf{I}_{m}}$.

\subsection{Simulation experiment}\label{sec:sim-exp}
        For the finite sample results of the linear SEM $\A$ from  \cref{example:da}, by taking $m=32$, $k=31$ (dimension of $G$), $\sigma=0.1$ and fixing $\vtau^\top = {\0{m}}$,\footnote{Simulation results are similar under a cyclic setting with a non-trivial $\vtau$, and discussed under \cref{app:experiments}.} we sample a new $\f, \veps$ and $\E \in \R^{m\times m}$ from a standard normal distribution for each of the $32$ experiments for every combination of $\kappa$ and $\gamma$. Each time we construct a $\K \coloneqq \mV_0$ with $k$ rows as orthonormal basis of $\Null{ \f }$, such that the SVD of $\f$ is
        \begin{equation*}
            \f = 
            \begin{bmatrix}
                \vu & \mU_0
            \end{bmatrix}
            \begin{bmatrix}
                \lambda & \0{1}{(m-1)}
                \\
                \0{(m - 1)}{1} & \0{(m - 1)}{(m - 1)}
            \end{bmatrix}
            \begin{bmatrix}
                \vv^\top
                \\
                \mV_0^\top
            \end{bmatrix}.
        \end{equation*}
        Although this construction of $\K$ relies on direct knowledge of $\f$, which is of course unavailable in practice, we include it here purely for illustrative purposes. We treat access to $\K$ as having prior knowledge about the structural symmetries of $\f$, noting that this information alone is insufficient~to~recover~$\f$.
        
        We then generate $n=2048$ samples of $(X, Y)$ for each experiment. For ERM we use a closed form linear OLS solution. For DA$+$IV, we make use of linear 2SLS. Finally, DA+$\ivla$ was implemented using a closed form linear OLS solution between empirical versions (see \cref{proposition:ivl-closed-form-solution}) of
        \begin{align*}
            X^{\prime}\coloneqq \sqrt{\alpha}X+ \pr*{\sqrt{1+\alpha} - \sqrt{\alpha}} \EE{}{}{ X\given Z } ,
            &&
            Y^{\prime}\coloneqq \sqrt{\alpha} Y + \pr*{\sqrt{1+\alpha}-\sqrt{\alpha}}\EE{}{}{ Y\given Z } .
        \end{align*}
        Our first experimental result in \cref{fig:kappa-sweep} compares the different estimation methods across varying levels of confounding $\kappa \in [0, 1]$. As expected, ERM performance degrades with increasing confounding. Applying DA alone already brings us closer to the causal solution, while DA+IVL achieves even better performance. DA+IV regression is unstable and generally performs poorly as it~is~under-determined.~

        Next, we fix the confounding and DA strengths at $\kappa = \gamma = 1$, and sweep over the regularization parameter $\alpha \in [10^{-5}, 10^5]$ for DA+$\ivla$. \Cref{fig:alpha-sweep} shows that optimal performance is achieved for intermediate values of $\alpha$, confirming that arbitrarily small values of $\alpha$, while beneficial in the theoretical population setting (as suggested by \cref{eq:ivl-to-erm} in the proof of \cref{theorem:ivl-causal-estimation}), are suboptimal for finite samples.\footnote{We conjecture that this may be due to outcome invariance not holding exactly in practice. A more rigorous investigation is deferred to future work in order to keep the current manuscript more focused.} We also find that both CV and CC strategies effectively select reasonable~values~of~$\alpha$.~
        
        Lastly, \cref{fig:gamma-sweep} examines sensitivity to the DA strength $\gamma \in [10^{-2.5}, 10]$, for fixed confounding strength $\kappa = 1$. As expected, stronger DA results in stronger interventions on $X$, which improves causal effect estimation. However, we also observe diminishing returns; when the variation induced by DA is either too small or too large, DA+$\ivla$ does not yield significant improvements~over~the~DA+ERM~baseline.

        {

\newlength{\clipLeftBaseline}
\newlength{\clipRightBaseline}
\newlength{\clipBottomBaseline}
\newlength{\clipTopBaseline}

\setlength{\clipLeftBaseline}{3.75cm}
\setlength{\clipRightBaseline}{0.35cm}
\setlength{\clipBottomBaseline}{0.5cm}
\setlength{\clipTopBaseline}{0.3cm}

\setlength{\fboxsep}{0pt}

\begin{figure*}[t]
\centerfloat
\begin{subfigure}{.385\linewidth}
\centering
    \includegraphics[width=\linewidth,trim={0.33cm \clipBottomBaseline{} \clipRightBaseline{} \clipTopBaseline{}},clip]{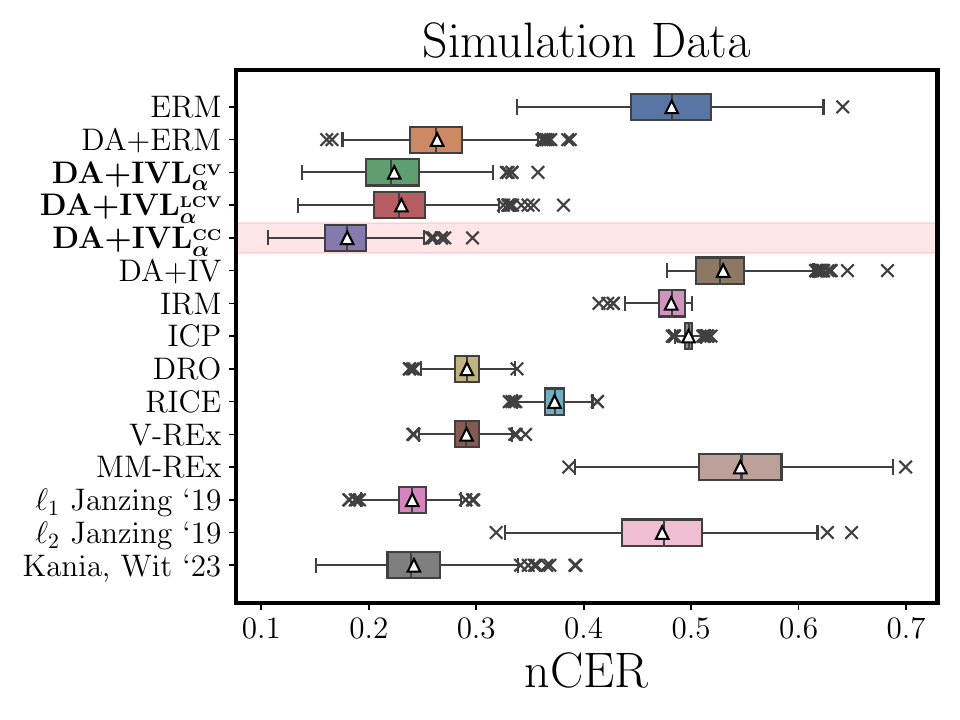}
\end{subfigure}
\hfill
\begin{subfigure}{.3000\linewidth}
\centering
    \includegraphics[width=\linewidth,trim={\clipLeftBaseline{} \clipBottomBaseline{} \clipRightBaseline{} \clipTopBaseline{}},clip]{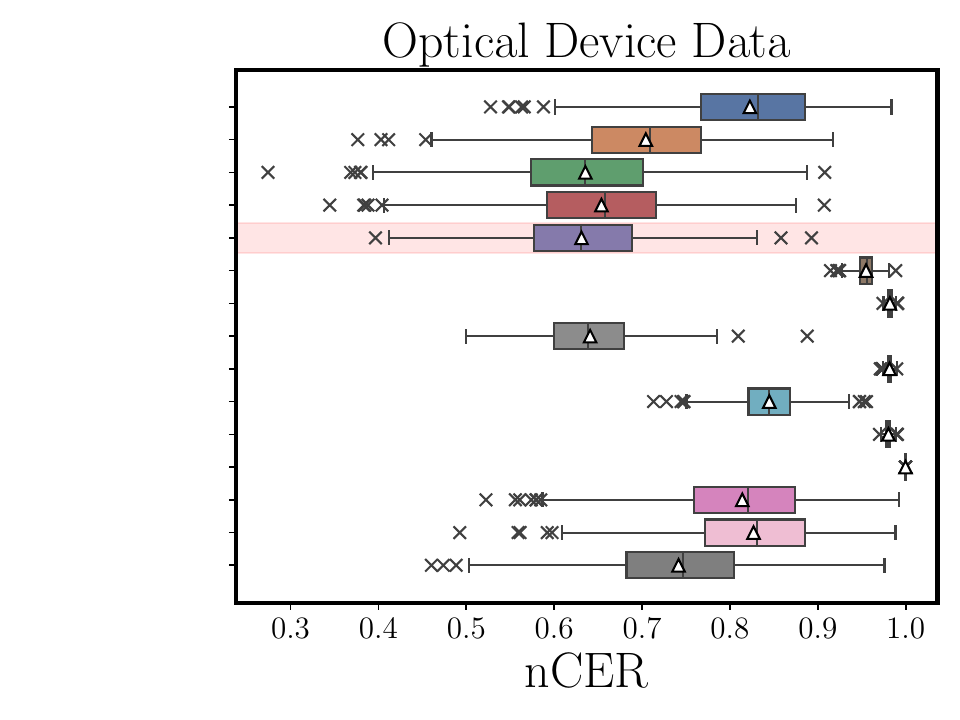}
\end{subfigure}
\hfill
\begin{subfigure}{.3000\linewidth}
\centering
    \includegraphics[width=\linewidth,trim={\clipLeftBaseline{} \clipBottomBaseline{} \clipRightBaseline{} \clipTopBaseline{}},clip]{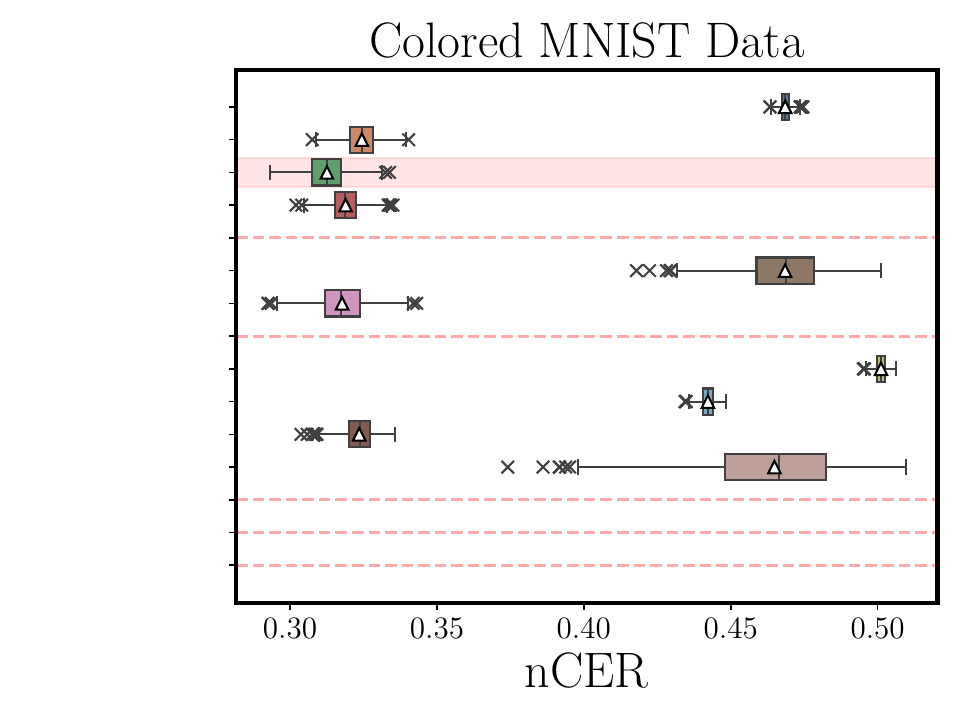}
\end{subfigure}
\caption{Experiment results; common OOD generalization benchmarks compared against the ERM, DA+ERM and DA+IV baselines including DA+{\myMethod}.}
\label{fig:baselines-benchmarks-experiments}
\end{figure*}

}

        For completeness, we also benchmark our approach against other baseline methods on 16 distinct simulation SEMs with 2048 samples each. Aggregated results are presented in \cref{fig:baselines-benchmarks-experiments}~(left~most).~

\subsection{Real data experiments}\label{sec:experiments-real-data}

\paragraph{Optical device dataset.}\label{app:optical-device}
The dataset from \cite{optical-device} consists of $3\times3$ pixel images $X$ displayed on a laptop screen that cause voltage readings $Y$ across a photo-diode. A hidden confounder $C$ controls two LEDs; one affects the webcam capturing $X$, the other affects the photo-diode measuring $Y$. The ground-truth predictor $\f$ is computed by first regressing $Y$ on $(\phi(X), C)$, where $\phi(X)$ are polynomial features of $X$ with degree $d\in\{1, \cdots, 5\}$ that best explains the data (degree 2 in most cases). The component corresponding to $C$ is then removed to recover $\f$. We add Gaussian noise $G\sim\normalDistribution{\mathbf{0}}{\cov{X}{}/10}$ for DA and fit the methods from \cref{sec:sim-exp} on features $\phi(GX)$ for $n = 1000$ samples across 12 datasets. Note that using the same ground-truth polynomial degree for $\phi$ during evaluation is important here so as to avoid introducing statistical bias from model-misspecification as our analysis squarely focuses on confounding bias. \Cref{fig:baselines-benchmarks-experiments} (middle) shows the results, where DA+ERM improves over ERM, and DA+{\myMethod} performs even better, outperforming other baselines.

\paragraph{Colored MNIST.}\label{app:c-mnist}
We evaluate on the colored MNIST dataset \cite{arjovsky2020invariant}, where labels are spuriously correlated with image color during training, but this correlation is flipped at test time. We use the same neural architecture and parameters as \cite{arjovsky2020invariant} across all baselines, training with the IV-based objective described in the \cref{app:iv-supp-implementation}. DA is implemented via small perturbations to hue, brightness, contrast, saturation, and translation, each parameterized by $G \sim \betaDistribution{2}{2}$. Although these do not directly manipulate color, the actual spurious feature, they still help reduce confounding. Results in \cref{fig:baselines-benchmarks-experiments} (rightmost) show that ERM underperforms, DA+ERM provides substantial gains, and DA+$\ivla$ performs competitively with the best DG baselines, with DA+$\ivla[CV]$ achieving the best overall performance. Interested readers may also visit \cref{app:cmnist-experiment}, where we clarify the connection of the colored MNIST model with the cyclic SEM from \cref{eq:problem-setup-sem}.

\section{Limitations}
\label{app:limitations}

\paragraph{Necessity and practicality of prior knowledge.}
As discussed in \cref{sec:causal-data-augmentation}, outcome invariance alone does not suffice to lower confounding bias and practitioners may need domain knowledge to construct DA that targets spurious features as well. Alternatively, one can also take a `carpet bombing' approach by exhausting all available outcome invariant DA in hope that some may align with spurious features. Nevertheless, under outcome invariance, our methods should perform no worse than standard ERM.


Fundamentally, causal estimation from purely observational data is impossible without untestable assumptions. For instance, the IV (or IVL) assumptions of un-confoundedness~and~exclusion restriction are inherently untestable and must be justified through domain knowledge. Moreover, the requirement of alignment with spurious features in \cref{theorem:ivl-causal-estimation} is not an artifact of our IVL relaxation—it is a rephrasing of the exclusion principle that underlies identifiability in IV regression. If an IV does not influence $Y$ through the spurious features of $X$, the corresponding causal components of $f$ cannot be identified \cite{causal-ood-generalizatoin}. IVLs, being relaxations of IVs, inherit these same untestable premises.

Viewed through the lens of IVs/IVLs (\cref{observation:da-as-ivl}), our assumptions on DA are arguably more modest than they may initially seem, especially since a symmetry-based DA model has well-established precedent in the literature \citep{invariance_benefit,group-theory-da-jmlr,7DA,3DA,4DA,5DA,6DA,8DA,9DA,10DA}. This correspondence can be summarized as follows:
\begin{align*}
    \overbrace{
\underbrace{\text{outcome-invariance}}_{\text{popular model for DA}}
+
\underbrace{\text{spurious targets}
}_{\text{benign failure if violated}}
}^{\text{un-testable DA assumptions}}
&&
\Longleftrightarrow
&&
\overbrace{\text{un-confoundedness}+\text{exclusion}}^{\text{un-testable IV/IVL assumptions}}
\end{align*}
In this light, our framework may in fact be quite practical in domains where valid IVs (or other auxiliary variables) are scarce, but plausible outcome-invariances—i.e.,~data~augmentations—are~abundant.

Finally, we recognize the hesitation in committing to strict notions of outcome invariance in practice and leave a more thorough exploration of approximate or even violated invariance to future work.

\paragraph{Choice of $\alpha$.}
Selecting the IVL regularization parameter $\alpha$ in finite-sample settings is not straightforward. As outlined in \cref{app:ivl-supplement}, we propose several strategies that work well empirically, though some may appear less principled since $\alpha$ is tuned via cross-validation within the same distribution, even though the task concerns OOD generalization. This challenge is not unique to IVL, but rather a broader limitation common to DG methods~\cite{lost-domain-generalization}.
\section{Conclusion}
We conclude that our proposed causal framework for data augmentation (DA) enables re-purposing the widely used i.i.d. generalization tool for OOD generalization across treatment interventions. By interpreting outcome-invariant DA as interventions and IV-like variables, our approach reduces confounding bias and consequently improves both causal effect~estimation~and~robust~prediction.~


\section*{Acknowledgments}
To my co-authors for their patience, to Zulfiqar for being my rubber duck and saving the OpenReview submissions minutes before the deadline, and to all of ML pyos for lightening the chaos with comedy. Thank you.

This work was supported by the NSF (ECCS-2401391, IIS-2403240),~and~ONR~(N000142512173).

\newpage


{\small
\bibliography{bibliography_final}
}

\newpage
\appendix

\appendixmaketitle{Appendix---\paperTitle}

{
\hypersetup{linkcolor=blue}
\startcontents[appendices]
\printcontents[appendices]{l}{1}{\section*{Contents}\setcounter{tocdepth}{2}}
}


\newpage

\nomenclature[a02]{$\R^{n\times \ast}$}{$n\times \ast$ Euclidean space; dimension $\ast$ conformal with \& inferred from context.}
\nomenclature[a03]{$x$}{Scalar.}
\nomenclature[a04]{$\vx$}{Vector. When $\vx^\top$ is described as a vector, it means $\vx$ is a flat $1\times\ast$ matrix.}
\nomenclature[a05]{$\mathbf{X}$}{Matrix.}
\nomenclature[a06]{$\sX$}{Set.}
\nomenclature[a06]{$X$}{Random vector.}

\nomenclature[a07]{$\M$}{SEM.}
\nomenclature[a07.5]{$X^{\M}$}{Random vector $X$ with its SEM $\M$ specified when unclear from context.}
\nomenclature[a08]{$\distribution{X}{\M}$}{Distribution of $X$ entailed by $\M$. Superscript dropped if clear~from~context.~}
\nomenclature[a11.75]{$\EE{}{\M}{X}$}{Expected value of $X$ under distribution $\distribution{X}{\M}$.}
\nomenclature[a11]{$\cov{X}{\M}$}{Variance--covariance matrix of $X$ under distribution $\distribution{X}{\M}$.}
\nomenclature[a11.5]{$\cov{X, Y}{\M}$}{Cross--covariance matrix of $X$ and $Y$ under distribution $\distribution{X, Y}{\M}$.}

\nomenclature[a12]{$\intervention{X}{\vx}$}{Intervention---$X$ is set to $\vx$ during data generation.}

\nomenclature[a13]{$\intervention{X}$}{Shorthand for $\intervention{X}{X^{\prime}}$ where $X^{\prime} \sim \distribution{X}{\M}$ is i.i.d. to $X$.}

\nomenclature[a14]{$\doM{X}{\vx}$}{Intervention SEM.}

\nomenclature[a15]{$\M_{X = \vx}$}{SEM with mechanisms of $\M$, but exogenous~noise~distribution~$\distribution{N\mid X=\vx}{\M}$.~}

\nomenclature[a16]{$\counterfactualM{Y = \vy}{X}{\vx}$}{Counterfactual SEM---intervention SEM of $\M_{Y = \vy}$.}

\nomenclature[a17]{$X \indep Y$}{Random vectors $X, Y$ are statistically independent, i.e. $\distribution{Y\mid X}{\M} = \distribution{Y}{\M}$.}

\nomenclature[a18]{$\vx \perp \vy$}{$\vx, \vy$ are perpendicular, i.e. $\vx^\top \vy = 0$. For~random~vectors,~$X^\top Y=0$~a.s.~}

\nomenclature[a19]{$\hat{h}^{\M}$}{Population/ infinite-sample estimate based on distribution $\distribution{}{\M}$.}

\nomenclature[a20]{$\hat{h}^{\sD}$}{Finite-sample estimate based on samples in the dataset $\sD$.}


\printnomenclature

\newpage
\section{Confounding Bias}\label{app:confounding-bias}

\paragraph{Statistical vs. causal inference.} The target estimand for the statistical risk in \cref{eq:erm} is the Bayes optimal predictor $\EE{}{\M}{Y\given X = \vx}$. And the target estimand for the causal risk in \cref{eq:causal-risk} is the average treatment effect (ATE) $\EE{}{ \doM{X}{ \vx } }{ Y\given X = \vx } = \func{f}{\vx}$. As such, \emph{statistical inference} is concerned with \emph{predictions} of outcome $Y$, whereas \emph{causal inference} is concerned with~\emph{estimating}~ $\func{f}{\vx}$.

\paragraph{Statistical vs. confounding bias.} Both types of inference are subject to bias. \emph{Statistical bias} arises due to misspecification of the hypothesis class $\sH$, whereas confounding bias arises due to how the data are generated. The former is therefore a property of the estimator while the later is a property of the data itself. For an estimator $\hat{h}^{\sD}$ with the expected value $\bar{h}(\cdot)=\EE{\sD}{\M}{\hat{h}^{\sD}(\cdot)}$, we define these as
\begin{align*}
    \text{Statistical bias at $\vx$}\coloneqq&\; \EE{}{\M}{Y\given X = \vx} - \bar{h}(\vx),\\
    \text{Confounding bias at $\vx$}\coloneqq&\; f(\vx) - \EE{}{\M}{Y\given X = \vx} .
\end{align*}

\paragraph{Bias-variance decomposition of the causal risk.} Because the treatment $X$ and residual $\xi$ are not correlated under $\doM{X}$ in \cref{eq:canonical-sem}, for any loss function $\ell$ that admits a `clean' or `additive' bias-variance decomposition \cite{bias-variance}, the causal risk in \cref{eq:causal-risk} also admits a bias-variance decomposition. Using squared loss as an example, we have for some hypothesis $\hat{h}^{\sD}$,
\begin{align*}
    &\Rightarrow \Risk{\causal}{\M}{ \hat{h}^{\sD} }
    \\
    &= \EE{}{\doM{X}}{ \sqNorm{ Y - \func{\hat{h}^{\sD}}{X} } } 
    , 
    \\
    \tag{Structural eq. of $Y$.}
    &= \EE{}{\doM{X}}{ \sqNorm{ f(X) + \xi - \func{\hat{h}^{\sD}}{X} } }
    , 
    \\
    \tag{Cross term is 0 as $\xi\indep X^{\doM{X}}$.}
    &= 
    \EE{}{\doM{X}}{ \sqNorm{ \xi } }
    +
    \EE{}{\doM{X}}{ \sqNorm{
    f(X) - \func{\hat{h}^{\sD}}{X}
    } }
    ,
    \\
    \tag{$\distribution{X}{\M}$, $\distribution{X}{\doM{X}}$ identical by construction.}
    &= 
    \underbrace{
    \EE{}{\doM{X}}{ \sqNorm{ \xi } }
    }_{\text{irreducible noise}}
    +
    \underbrace{
    \EE{}{\M}{ \sqNorm{
    f(X) - \func{\hat{h}^{\sD}}{X}
    } }
    }_{\text{estimation error, }\CER{\M}{\hat{h}^{\sD}}=} .
\end{align*}
We can show by following standard procedure that
\begin{align*}
    \EE{\sD}{\M}{\CER{\M}{ \hat{h}^{\sD} }}
    &= 
    \underbrace{
    \EE{X}{\M}{ \sqNorm{ f(X) - \func{\bar{h}}{X} } }
    }_{\text{(average) bias}^2}
    +
    \underbrace{
    \EE{\sD}{\M}{ \EE{X}{\M}{ \sqNorm{ \func{\bar{h}}{X} - \func{\hat{h}^{\sD}}{X} } } }
    }_{\text{variance}}
    .
\end{align*}
Since $\func{\hat{h}^{\M}}{X} = \func{\bar{h}}{X} $ for any population estimate, CER equals the average squared~estimation~bias
\begin{align*}
    \CER{\M}{ \hat{h}^{\M} } 
    =
    \EE{X}{\M}{ \sqNorm{
    f(X) - \func{\hat{h}^{\M}}{X}
    } }
    =
    \EE{X}{\M}{ \sqNorm{
    f(X) - \func{\bar{h}}{X} 
    } }
    .
\end{align*}
For a rich enough hypothesis class, the ERM estimate coincides with the Bayes optimal predictor $\func{\hat{h}^{\M}_{\erm}}{\cdot} = \EE{}{\M}{Y\given X = \cdot}$ and the CER exactly equals the (average squared) confounding bias as we define it above. For a general estimate $\hat{h}^{\sD}$, however, the CER also contains statistical bias. Nevertheless, our claims of ``better causal estimation via reducing confounding bias'' rest on the fact that we are essentially manipulating the data via DA and/or using treatment randomization sources in the form of IVLs. And recall that confounding bias is a property of the data.

\newpage
\section{Simultaneity as Cyclic Structures in Equilibrium}\label{app:cyclic-sem}

\subsection*{Linear cyclic assignments}
SEMs with cyclic structures have been well studied both in the linear case \cite{chain-graph,discovering-cyclic-scm,linear-cyclic-sem}, as well as the non-linear case \cite{additive-nonlinear-cyclic-sem,general-cyclic-sem}. Here we briefly provide a causal interpretation to linear simultaneous equations as SEMs with cyclic assignments.

Consider a square matrix $\T \in \R^{d\times d}$ and the SEM
\begin{align}\label{eq:sem-structural-form}
    W = \T W + N\;,
\end{align}
where random noise vector $N$ is exogenous and $\T$ allows for a cyclic structure. We enforce $\pr*{ \In{d} - \T }$ to be invertible so that the above equation has a unique solution $W$ for any given $N$. Re-writing the \textit{structural form} in \cref{eq:sem-structural-form} into a \textit{reduced form}, the distribution over $W$ is defined by
\begin{align}\label{eq:sem-reduced-form}
    W = \pr*{ \In{d} - \T }^{-1}N\;.
\end{align}
One way we can present a causal interpretation of the above solution is to view it as a stationary point to the following sequence of random vectors $W_t$
\begin{align*}
    W_t = \T W_{t-1} + N\;,
\end{align*}
which converges if $\T$ has a spectral norm strictly smaller than one so that $\T^{t}\rightarrow 0$ as $t\rightarrow \infty$. The structural form \cref{eq:sem-structural-form} essentially describes the iterative application of this operation. And in the limit the distribution of $\lim_{t\rightarrow\infty}W^t$ will be the same as the reduced form \cref{eq:sem-reduced-form}. Although equivalent, reduced form of a cyclic SEM (if one exists) obscures the causal relations in the data generation process.

Furthermore, we restrict our models to not have any ``self-cycles'' (an edge from a vertex to itself). So, e.g., the matrix $\T$ in \cref{eq:sem-structural-form} has all zero diagonal entries. This not only simplifies our analysis by providing a simple and intuitive interpretation for our definition of DA in \cref{sec:data-augmentation}, but it also ensures that non-linear SEMs entail unique, well-defined distributions under mild assumptions \cite{general-cyclic-sem,discovering-cyclic-scm}.

Similarly we can write the example SEM $\M$ from \cref{example:ivl} in this (block matrix) form as
\begin{align*}
    \underbrace{\begin{bmatrix}
        X\\
        Y
    \end{bmatrix}}_{W}
    &=
    \underbrace{\begin{bmatrix}
        \0{m}{m} & \vtau^\top\\
        \f^\top & \0{1}{1}
    \end{bmatrix}}_{\T}
    \underbrace{\begin{bmatrix}
        X\\
        Y
    \end{bmatrix}}_{W}
     +
     \underbrace{
     \begin{bmatrix}
        \K^\top\\
        \0{1}{k}
    \end{bmatrix}
    Z
    +
    \begin{bmatrix}
        \E^\top \\
        \veps^\top
    \end{bmatrix}
    C
     +
     \sigma\cdot
     \begin{bmatrix}
        N_X\\
        N_Y
    \end{bmatrix}
    }_{N}
    ,
\end{align*}
For this simple case, $\pr*{\In{(m+1)} - \T}$ is always invertible so long as $\f^\top\vtau^\top \neq 1$ from \cref{lemma:sem-solvability}. Or we can also restrict $\abs*{ \f^\top\vtau^\top } < 1$ to ensure that the spectral norm of $\T$ is strictly smaller than 1. We sample from this SEM by first sampling all of the exogenous variables $Z, C, N_X, N_Y$ and then solving the above system for each sample of $X, Y$ via the reduced form in \cref{lemma:sem-solvability}.

\subsection*{A motivating example}
 Cyclic SEMs were first discussed in the econometrics literature \cite{cowles-contributions-econometrics} to model various observational phenomena, and often solved via 2SLS based IV regression \cite{cyclic-sem-iv} since it is computationally less costly compared to solving the entire system \cite{2SLS-3SLS}. A classic example from economics \cite{cobweb-1,cobweb-2} is that of a \textit{supply and demand model} $\M$ where the relation of price $P$ of a good with quantity $Q$ of demand can be thought of as a cyclic feed-back loop where producers adjust their price in response to demand of the good and consumers change their demand in response to price of a good. In contrast, a change in consumer tastes or preferences would be an exogenous change on the demand curve and can therefore be used as an IV $Z$.
\begin{align*}
\text{consumer demand:} &\qquad Q = \tau \cdot P + \gamma\cdot Z + N_Q\;,\\
    \text{producer price:} &\qquad P = f\cdot Q + N_P\;.
\end{align*}
Where scalars $f,\tau$ are such that $\abs*{ f\cdot\tau } < 1$ so that the system converges to an equilibrium. We say that the measurements made for $P$ and $Q$ are at the equilibrium state of the market\footnote{In fact, such a feed-back model of supply and demand was initially developed to understand the irregular fluctuations of prices/quantities that are observed in some markets when not at equilibrium \cite{cobweb-1}.} with zero mean measurement noise $N_P, N_Q$ respectively.

\paragraph{Mitigating simultaneity bias for causal effect estimation.} If we now want to \textit{estimate} the effect of demand on price $f$, standard regression will produce a biased estimate $\hat{f}^\M_{\erm} = f + \frac{\text{Cov}(Q, N_P)}{\VV{}{}{Q}}$ because of the simultaneity causing $Q$ and $N_P$ to be correlated (to see this, substitute model of $P$ into the model of $Q$). We can now use IV regression to get an unbiased estimate of the effect of demand on price in the market as $\hat{f}^\M_{\iv}=f$.

\paragraph{Mitigating spurious correlations for robust prediction.} Similarly, if the producer wants to \textit{predict} the effect on demand if price is changed (i.e. intervened on), naive ERM will not be a good choice because it will also capture the spurious correlation from $Q\rightarrow P$. We therefore use three-stage-least-squares (3SLS) \cite{3SLS,2SLS-3SLS} (or similar methods) to estimate the ATE $\hat{\tau}^\M_{\text{3SLS}}=\EE{}{ \doM{P}{.} }{ Q \given P= . }$ where we use the first two stages to estimate $\hat{f}^\M_{\iv}$, followed by ERM to regress from the residuals $\hat{N}_P \coloneqq P - \hat{f}^\M_{\iv}\cdot Q$ to $Q$ in the third stage.



\subsection*{Implications for independence of causal mechanisms}

Here we clarify how the equilibrium assumption/interpretation of cyclic SEMs is not at odds with the classic independent causal mechanism (ICM) principle \cite{eoci}. Note that our SEM formulation in \cref{eq:canonical-sem} is a direct instantiation of the ICM principle as described by Peters et al. \cite{eoci}. The two equations represent the autonomous mechanisms, and their independence is captured by the mutual independence of the exogenous noise terms $N_X, N_Y$. The simultaneity in our model is not a violation of ICM, but rather the equilibrium state resulting from the interaction of these two independent mechanisms. Assuming the existence of this equilibrium is a statement about the scope of systems under analysis, and not about the nature of the mechanisms themselves. Indeed, surgically changing $\tau$ to some $\tau^{\prime}$, for example, does not in itself alter $f$ and vice versa. And precisely because of the ICM, this may or may not make the system unstable depending on the nature of $\tau^{\prime}$. Nevertheless, in our setting, \cref{proposition:da-intervention-distribution-existence} (\cref{app:proposition-da-intervention-distribution-existence}) shows that soft interventions induced by outcome-invariant DA are \emph{always} stable.


\newpage
\section{IV Regression Supplement}\label{app:iv-supplement}
\label{app:iv-supp-implementation}

\paragraph{Two-stage estimators.}\label{app:iv-alternatives-2-stage}

Minimizing the risk in \cref{eq:iv-loss} is known as two-stage IV regression. Another two-stage IV regression approach that we use in our theoretical results is to minimize the risk \cite{prox-causal-learning,anchor}
\begin{align*}
    \Risk{\iv_{\text{LB}}}{\M}{ h } \coloneqq \EE{}{\M}{ 
        \sqNorm{ 
            \EE{}{\M}{ Y \given Z} - \EE{}{\M}{ h\pr*{ X } \given Z} 
        }
    } .
\end{align*}
This can be shown to lower-bound (hence the subscript LB) the risk in \cref{eq:iv-loss} under squared loss~\cite{prox-causal-learning}.
\begin{align}
\nonumber
&\Rightarrow \Risk{\iv}{\M}{h} = \EE{}{}{
    \sqNorm{
        Y - \EE{}{}{
            \func{h}{X} \given Z
        }
    }
} , 
\\
\tag{Adding and subtracting $\EE{}{}{Y\given Z}$.}
&= \EE{}{}{
    \sqNorm{
        \pr*{
            Y - \EE{}{}{ Y\given Z}
        }
        +
        \pr*{
            \EE{}{}{ Y\given Z} - \EE{}{}{
                \func{h}{X} \given Z
            }
        }
    }
} ,
\\
\tag{Expand squared norm.}
&= \EE{}{}{
    \sqNorm{
        Y - \EE{}{}{ Y\given Z}
    }
}
+
\EE{}{}{
    \sqNorm{
        \EE{}{}{ Y\given Z} - \EE{}{}{
                \func{h}{X} \given Z
        }
    }
}
\\
\nonumber
&\phantom{
    {}= \EE{}{}{
        \sqNorm{
            Y - \EE{}{}{ Y\given Z}
        }
    }
}
+ 2 \EE{}{}{
    \pr*{
        Y - \EE{}{}{ Y\given Z}
    }^\top 
    \pr*{
        \EE{}{}{ Y\given Z}
        -
        \EE{}{}{ \func{h}{X} \given Z}
    }
} ,
\\
\label{eq:iv-2sls-expanded}
&= \EE{}{}{
    \sqNorm{
        Y - \EE{}{}{ Y\given Z}
    }
}
+
\EE{}{}{
    \sqNorm{
        \EE{}{}{ Y\given Z} - \EE{}{}{
                \func{h}{X} \given Z
        }
    }
} , 
\\
\tag{Tower rule, scalar $Y$.}
&= \EE{}{}{
    \sqNorm{
        \EE{}{}{ Y\given Z} - \EE{}{}{
                \func{h}{X} \given Z
        }
    }
}
+
\EE{}{}{
    \EE{}{}{
        \pr*{
            Y - \EE{}{}{ Y\given Z}
        }^2
    \given Z }
} ,
\\
\label{eq:iv-2sls-cmr}
&= \EE{}{}{
    \sqNorm{
        \EE{}{}{ Y \given Z} - \EE{}{}{ \func{h}{X} \given Z }
    }
} + \EE{}{}{ \VV{}{}{ Y \given Z} } = \Risk{\iv_{\text{LB}}}{\M}{h} + \EE{}{}{ \VV{}{}{ Y \given Z} }, 
\end{align}
where \cref{eq:iv-2sls-cmr} follows from the definition of conditional variance and we get \cref{eq:iv-2sls-expanded} by setting the cross term to zero since
\begin{align}
\nonumber
&\Rightarrow
\EE{}{}{
    \pr*{
        Y - \EE{}{}{ Y\given Z}
    }^\top 
    \pr*{
        \EE{}{}{ Y\given Z}
        -
        \EE{}{}{ \func{h}{X} \given Z}
    }
} 
\\
\tag{Tower rule.}
&= \EE{}{}{
    \EE{}{}{
        \pr*{
            Y - \EE{}{}{ Y\given Z}
        }^\top 
        \pr*{
            \EE{}{}{ Y\given Z}
            -
            \EE{}{}{ \func{h}{X} \given Z}
        }
    \given Z }
} ,
\\
\label{eq:towik-first-use}
&= \EE{}{}{
    \EE{}{}{
        \pr*{
            Y - \EE{}{}{ Y\given Z}
        }^\top
    \given Z }
        \pr*{
            \EE{}{}{ Y\given Z}
            -
            \EE{}{}{ \func{h}{X} \given Z}
        }
} ,
\\
\nonumber
&= \EE{}{}{
        \pr*{
            \EE{}{}{ Y\given Z} - \EE{}{}{ Y\given Z}
        }^\top
        \pr*{
            \EE{}{}{ Y\given Z}
            -
            \EE{}{}{ \func{h}{X} \given Z}
        }
} ,
\\
\nonumber
&= \EE{}{}{
        \mathbf{0}^\top
        \pr*{
            \EE{}{}{ Y\given Z}
            -
            \EE{}{}{ \func{h}{X} \given Z}
        }
} = 0 ,
\end{align}
where \cref{eq:towik-first-use} follows from the ``taking out what is known'' rule, i.e.,
\begin{align}
\label{eq:towis-rule}
\EE{}{}{ g(B) A \given B} = g(B) \EE{}{}{A \given B} .
\end{align}

\paragraph{Generalized method of moments.}\label{app:iv-alternatives-gmm}

    The IV regression in our colored-MNIST experiment uses the popular \emph{generalized methods of moments (GMM)}~\cite{gmm,deep-gmm,gan-gmm}, or equivalently the \emph{conditional moment restriction (CMR)}~\cite{prox-causal-learning} framework which tries to directly solve for the fact that in \cref{eq:canonical-sem}~with~scalar~$Y$~
    \begin{equation*}
\EE{}{\M}{ \xi \given Z} = \EE{}{\M}{ Y-f\pr*{X} \given Z} = 0 ,
    \end{equation*}
    which holds as a direct consequence of un-confoundedness of $Z$. For any $q: \mathcal{Z} \to \mathbb{R}$, it then follows
    \begin{equation*}
     \mathbb{E}^{\M}\bigl[\bigl(Y-f(X)\bigr)\cdot q(Z)\bigr] = 0\;.
    \end{equation*}
    The GMM-IV estimate of $f$ therefore tries to enforce this condition \cite{gmm,deep-gmm,gan-gmm} by minimizing the risk
    \begin{align*}
    R^{\M}_{\text{IV}_{\text{GMM}}}(h) \coloneqq \sum_{i=1}^{\mu}\mathbb{E}^{\M}\bigl[\bigl(Y - h(X)\bigr)\cdot q_i(Z)\bigr]^2 = \sqNorm{ \EE{}{\M}{ \pr*{Y-h(X)} \cdot \mathbf{q}(Z) } },
    \end{align*}
    where $\mathbf{q}(\cdot)\in \R^\mu$ represents a vector form of the set of $\mu$ arbitrary real-valued functions $q_i$. A more general form of the above GMM based IV risk is to weight the norm by~some~SPD~$\mW$~\cite{econometric-methods,gmm,deep-gmm}
    \begin{align*} 
    R^{\M}_{\text{IV}_{\text{GMM--}\mW}}(h) \coloneqq \sqNorm{ \EE{}{\M}{ \pr*{Y-h(X)} \cdot \mathbf{q}(Z) } }_{\mW} ,
    \end{align*}
    which gives the most statistically efficient estimator, minimizing the asymptotic variance, for $\mW = \cov{Z}{-1}$ \cite{econometric-methods,gmm,deep-gmm}. We use the same for our colored-MNIST experiments, together with the identity function $\mathbf{q}(Z) = Z$. This gives us the final loss of the form
    \begin{align*}
    R^{\M}_{\text{IV}_{\text{GMM--}\cov{Z}{-1}}}(h) &= \sqNorm{  \EE{}{\M}{ Z \cdot \pr*{Y-h(X)} } }_{\cov{Z}{-1}} .
    \end{align*}
And the empirical version of which can be written as follows
\begin{align}\label{eq:iv-loss-k-class-empirical}
    \Risk{\iv_{\text{GMM--}\cov{Z}{-1}}}{\sD}{ h } \coloneqq  
        \pr*{ 
            \hat{\vy} - \h\pr*{ \hat{\mathbf{X}} } 
        }^\top \hat{\mZ} \hat{\mZ}^{\dagger} \pr*{ 
            \hat{\vy} - \h\pr*{ \hat{\mathbf{X}} } 
        } ,
\end{align}
where for dataset samples $(\vx_i, y_i, \vz_i)\in \sD$, we construct the vector $\hat{\vy}\coloneqq [y_0, \cdots, y_n]^\top$, matrices $\hat{\mathbf{X}}\coloneqq [\vx_0^\top, \cdots, \vx_n^\top]^\top$, $\hat{\mZ}\coloneqq \begin{bmatrix}
    \vz_0 &
    \cdots &
    \vz_n
\end{bmatrix}^\top$ with pseudo-inverse $\hat{\mZ}^{\dagger}$ and define $\h\pr*{ \hat{\mathbf{X}} } \coloneqq [h(\vx_0), \cdots, h(\vx_n)]^\top$.

\newpage
\section{{\myMethod} Regression Supplement}
\label{app:ivl-supplement}

\paragraph{Closed form solution in the linear case.}
\label{app:implementation-details}

The following result gives us a way to compute a closed-form solution to the {\myMethod}$_\alpha$ regression problem in the linear Gaussian case. An empirical version of this is used for our linear experiments.

\begin{restatable}[{\myMethod}$_\alpha$ closed form solution]{proposition}{ivlClosedFormSolution}
\label{proposition:ivl-closed-form-solution}
For SEM $\M$ in \cref{example:ivl}, $\hat{\h}^{\M}_{\ivla}$ is the closed form linear OLS solution between
\begin{align*}
X^{\prime} \coloneqq a X + b \EE{}{}{ X\given Z } , 
&&
Y^{\prime} \coloneqq a Y + b \EE{}{}{ Y\given Z } ,
\end{align*}
where
\begin{align*}
a \coloneqq \sqrt{\alpha} , 
&&
b \coloneqq \sqrt{1 + \alpha} - \sqrt{\alpha} .
\end{align*}
\end{restatable}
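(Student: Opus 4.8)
The plan is to reduce the {\myMethod}$_\alpha$ risk to an ordinary least-squares objective between the transformed variables $X'$ and $Y'$. First I would write out the {\myMethod}$_\alpha$ risk from \cref{eq:uiv-alpha-loss} for the linear hypothesis $\h^\top(\cdot)$ under squared loss, obtaining
\begin{align*}
\Risk{\ivla}{\M}{\h^\top(\cdot)}
= \EE{}{\M}{\sqNorm{Y - \EE{}{\M}{\h^\top X \given Z}}}
+ \alpha\, \EE{}{\M}{\sqNorm{Y - \h^\top X}}.
\end{align*}
Using linearity, $\EE{}{\M}{\h^\top X \given Z} = \h^\top \EE{}{\M}{X \given Z}$, and the fact that $\EE{}{\M}{Y \given Z} = \EE{}{\M}{f(X)\given Z}$ (this is \cref{eq:iv-problem}, which holds because $Z$ is un-confounded—one of the three retained IVL properties), I would replace the first term's $Y$ inside the conditional-expectation residual by $\EE{}{\M}{Y\given Z}$ plus a mean-zero piece orthogonal to anything $Z$-measurable. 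Concretely, $Y - \h^\top\EE{}{\M}{X\given Z} = (Y - \EE{}{\M}{Y\given Z}) + (\EE{}{\M}{Y\given Z} - \h^\top\EE{}{\M}{X\given Z})$, and the cross term vanishes by the tower rule exactly as in the $\Risk{\iv_{\text{LB}}}{\M}{\cdot}$ computation in \cref{app:iv-supp-implementation} (eqs.~\eqref{eq:iv-2sls-expanded}–\eqref{eq:iv-2sls-cmr}). So up to the $\h$-independent constant $\EE{}{\M}{\VV{}{}{Y\given Z}}$, the IV term equals $\EE{}{\M}{\sqNorm{\EE{}{\M}{Y\given Z} - \h^\top\EE{}{\M}{X\given Z}}}$.

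Next I would expand the candidate OLS objective $\EE{}{\M}{\sqNorm{Y' - \h^\top X'}}$ with $X' = aX + b\,\EE{}{\M}{X\given Z}$, $Y' = aY + b\,\EE{}{\M}{Y\given Z}$. Writing $U \coloneqq Y - \h^\top X$ (the ERM residual) and $V \coloneqq \EE{}{\M}{Y\given Z} - \h^\top\EE{}{\M}{X\given Z}$ (the IV residual), we have $Y' - \h^\top X' = aU + bV$, hence
\begin{align*}
\EE{}{\M}{\sqNorm{Y' - \h^\top X'}}
= a^2\,\EE{}{\M}{\sqNorm{U}} + 2ab\,\EE{}{\M}{U^\top V} + b^2\,\EE{}{\M}{\sqNorm{V}}.
\end{align*}
The key algebraic fact is $\EE{}{\M}{U^\top V} = \EE{}{\M}{\sqNorm{V}}$: conditioning on $Z$ and applying ``taking out what is known'' (eq.~\eqref{eq:towis-rule}), $\EE{}{\M}{U\given Z} = \EE{}{\M}{Y\given Z} - \h^\top\EE{}{\M}{X\given Z} = V$, and since $V$ is $Z$-measurable, $\EE{}{\M}{U^\top V} = \EE{}{\M}{\EE{}{\M}{U\given Z}^\top V} = \EE{}{\M}{\sqNorm{V}}$. (The same identity also shows $\EE{}{\M}{\sqNorm{U}} = \EE{}{\M}{\sqNorm{V}} + \EE{}{\M}{\VV{}{}{U\given Z}}$, i.e.\ the ERM risk splits into the IV-LB risk plus a conditional-variance constant.) Substituting gives $\EE{}{\M}{\sqNorm{Y'-\h^\top X'}} = a^2\,\EE{}{\M}{\VV{}{}{U\given Z}} + (a+b)^2\,\EE{}{\M}{\sqNorm{V}}$, up to $\h$-independent terms. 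With $a+b = \sqrt{1+\alpha}$ and $a^2 = \alpha$, this reads $\alpha\,\EE{}{\M}{\sqNorm{U}} + \EE{}{\M}{\sqNorm{V}}$ plus a constant—precisely $\Risk{\ivla}{\M}{\h^\top(\cdot)}$ up to an additive constant not depending on $\h$. Hence the two objectives have the same minimizer, and $\hat{\h}^{\M}_{\ivla}$ is the OLS solution regressing $Y'$ on $X'$; solving the normal equations yields the explicit form $\hat{\h}^{\M}_{\ivla} = \cov{X'}{\M,-1}\cov{X',Y'}{\M}$ in the non-degenerate case.

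I expect the main obstacle to be bookkeeping rather than anything deep: carefully tracking which cross terms vanish (and why—tower rule versus un-confoundedness versus the $Z$-measurability of $V$), and making sure the $\h$-independent constants ($\EE{}{\M}{\VV{}{}{Y\given Z}}$-type terms) are genuinely independent of $\h$ so that dropping them preserves the argmin. A secondary subtlety is the matrix/vector transpose conventions since $Y$ is scalar here (so $\sqNorm{\cdot}$ on the $Y$-side is just a square) while $X,\h$ are $m$-dimensional—the identity $\EE{}{\M}{U^\top V} = \EE{}{\M}{\sqNorm{V}}$ should be checked to hold componentwise/coordinate-free. I would also briefly remark on uniqueness: the OLS solution is unique when $\cov{X'}{\M}$ is invertible, which is generically the case; when it is not, both problems are under-determined in the same way, so the statement still reads correctly as an identification of one valid minimizer. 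Finally, the choice $a = \sqrt{\alpha}$, $b = \sqrt{1+\alpha}-\sqrt{\alpha}$ is essentially forced by requiring $a^2 = \alpha$ and $(a+b)^2 = 1+\alpha$, which is how I would motivate it in the write-up.
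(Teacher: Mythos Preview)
Your proposal is correct and follows essentially the same route as the paper: expand $\EE{}{\M}{\sqNorm{Y'-\h^\top X'}}$ into the $a^2$, $b^2$, and $2ab$ pieces, evaluate the cross term via the tower rule / ``taking out what is known'' to get $\EE{}{\M}{U^\top V}=\EE{}{\M}{\sqNorm{V}}$, and then match coefficients to recover $\Risk{\ivla}{\M}{\h}$ up to the $\h$-independent constant $\EE{}{\M}{\VV{}{}{Y\given Z}}$. The only cosmetic difference is that the paper collapses the algebra using $b^2+2ab=1$ directly, whereas you detour through $(a+b)^2=1+\alpha$ and the identity $\EE{}{\M}{\sqNorm{U}}=\EE{}{\M}{\sqNorm{V}}+\EE{}{\M}{\VV{}{}{U\given Z}}$ before arriving at the same expression; also note that the intermediate equalities you flag as ``up to $\h$-independent terms'' are in fact exact, so you can drop that qualifier.
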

\textit{Proof.} See \cref{app:proposition-ivl-closed-form-solution} for the proof. \qed

 For the empirical version of \cref{proposition:ivl-closed-form-solution} we fit a closed-form OLS regressor between
\begin{align*}
X^{\prime} \coloneqq \sqrt{\alpha} X + \pr*{\sqrt{1 + \alpha} - \sqrt{\alpha}} \hat{\mZ}\hat{\mZ}^{\dagger} X ,   
&&  
Y^{\prime} \coloneqq \sqrt{\alpha} Y + \pr*{\sqrt{1 + \alpha} - \sqrt{\alpha}} \hat{\mZ}\hat{\mZ}^{\dagger} Y ,
\end{align*}
where $\hat{\mZ}, \hat{\mZ}^{\dagger}$ are as defined in \cref{eq:iv-loss-k-class-empirical}.

\paragraph{Choice of regularization parameter.}\label{sec:alpha}

We try the following approaches to select the~parameter~$\alpha$.

\textit{Cross validation (CV)}, or any variation thereof. We specifically use the following two in our experiments; (\rnum{1}) vanilla CV with $20\%$ samples held-out for validation (\rnum{2}) {\it level cross validation (LCV)} for when $Z$ is discrete, where hold-out data corresponding to $20\%$ of the levels of $Z$ for validation.

\textit{Confounder correction (CC)},
where in a linear setting we follow an approach similar to \cite{causal-regularization} by estimating the length of the true solution $f$ from the observational data $\mathcal{D}$. We then chose $\alpha$ such that the length of $\hat{h}_{\da\ivla}^{\mathcal{D}}$ is closest to the estimated length of the ground truth solution.

\newpage
\section{Experiment Supplement}\label{app:experiments}

For the methods that use {\it stochastic gradient descent (SGD)}, we use a learning rate of $0.01$, batch size of $256$ for $16$ epochs. For baselines that require a discrete domains/environments, we uniformly discretize each dimension of $G$ into $2$ bins. Higher discretization bins renders most baselines ineffective since each domain/environment rarely has more than 1 sample. To keep the comparison fair, however, we also discretize $G$ for {\myMethod}$_\alpha$ regression when using LCV. For the colored MNIST experiment, all CV implementations including baselines use 5-folds for a random search over an exponentially distributed regularization parameter with rate parameter of 1. Same is the case for simulation and optical device experiments, except that DA+IVL methods use a log-uniform distributed regularization parameter over $[10^{-4}, 1]$. Since RICE \cite{rice} grows the dataset size by augmenting each sample $T$ times, we provide it a $1/T$ sub-sample of the original data for fair comparison. Similarly, the causal regularization method by Kania and Wit \cite{causal-regularization2} expects two datasets, a perturbed and an un-perturbed one, which we substitute with 1/2 augmented data and 1/2 original data respectively.

\subsection{Simulation experiment}\label{app:linear-experiment}


{
\setlength{\clipLeftSweep}{1.75cm}
\setlength{\clipRightSweep}{0.375cm}
\setlength{\clipBottomSweep}{0.35cm}
\setlength{\clipTopSweep}{0.3cm}

\setlength{\fboxsep}{0pt}

\begin{figure*}[t]
\centerfloat
\begin{subfigure}{0.35194\linewidth}
\centering
    \raisebox{0.15ex}{\includegraphics[width=\linewidth,trim={0.3cm \clipBottomSweep{} \clipRightSweep{} \clipTopSweep{}},clip]{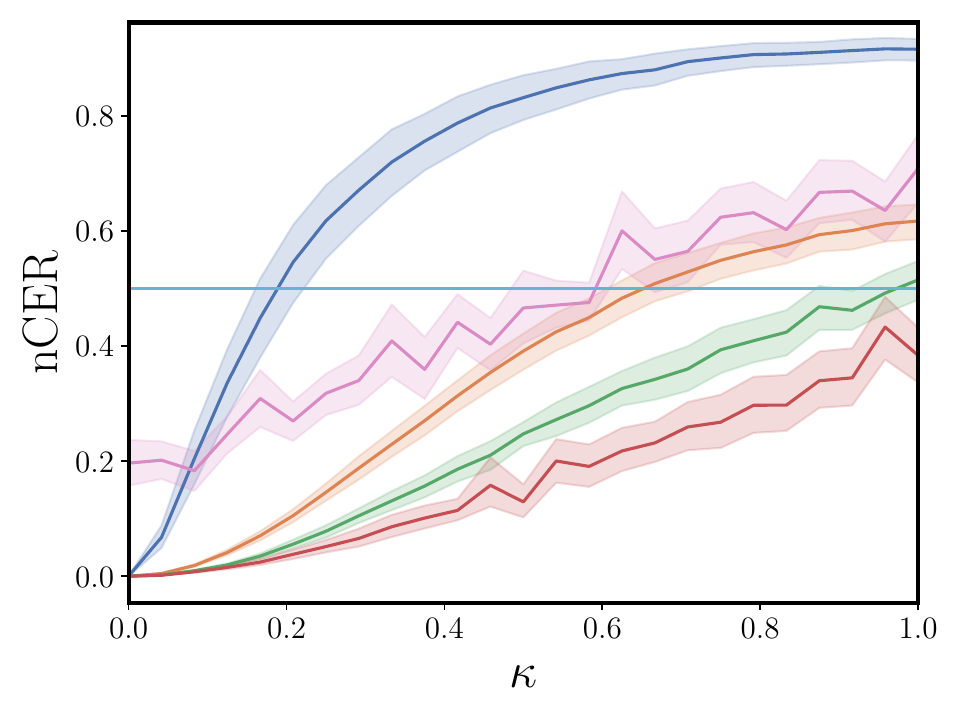}}
    \caption{$\gamma = 1$, $\kappa\in[0, 1]$}
    \label{fig:cyclic-kappa-sweep}
\end{subfigure}
\hfill
\begin{subfigure}{0.31903\linewidth}
\centering
    \includegraphics[width=\linewidth,trim={\clipLeftSweep{} \clipBottomSweep{} \clipRightSweep{} \clipTopSweep{}},clip]{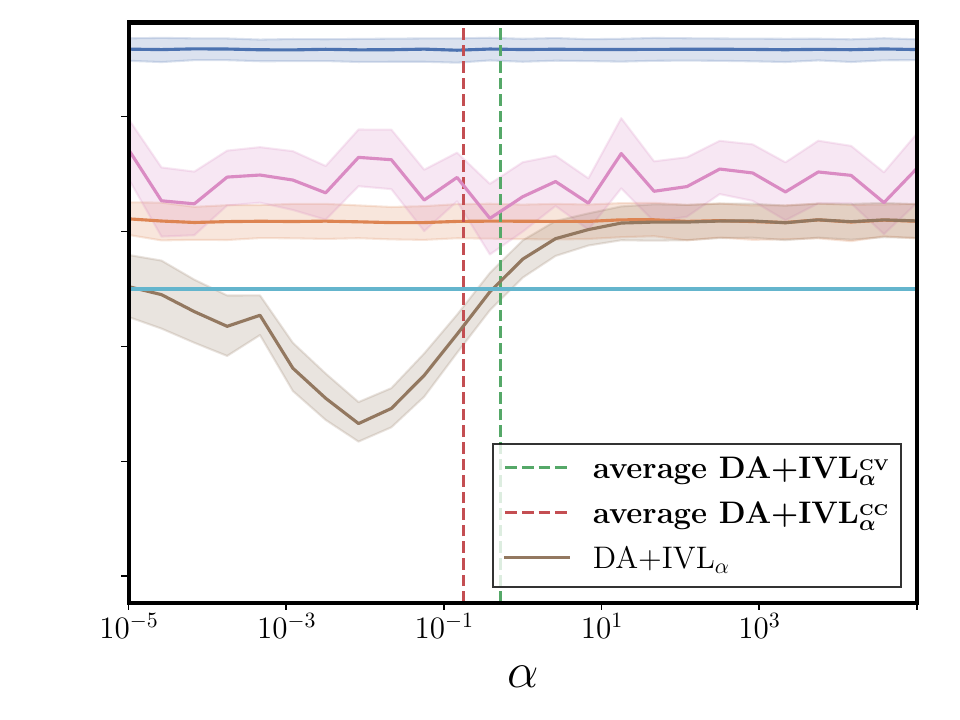}
    \caption{$\gamma, \kappa = 1$, $\alpha\in[10^{-5}, 10^5]$}
    \label{fig:cyclic-alpha-sweep}
\end{subfigure}
\hfill
\begin{subfigure}{0.31903\linewidth}
\centering
    \includegraphics[width=\linewidth,trim={\clipLeftSweep{} \clipBottomSweep{} \clipRightSweep{} \clipTopSweep{}},clip]{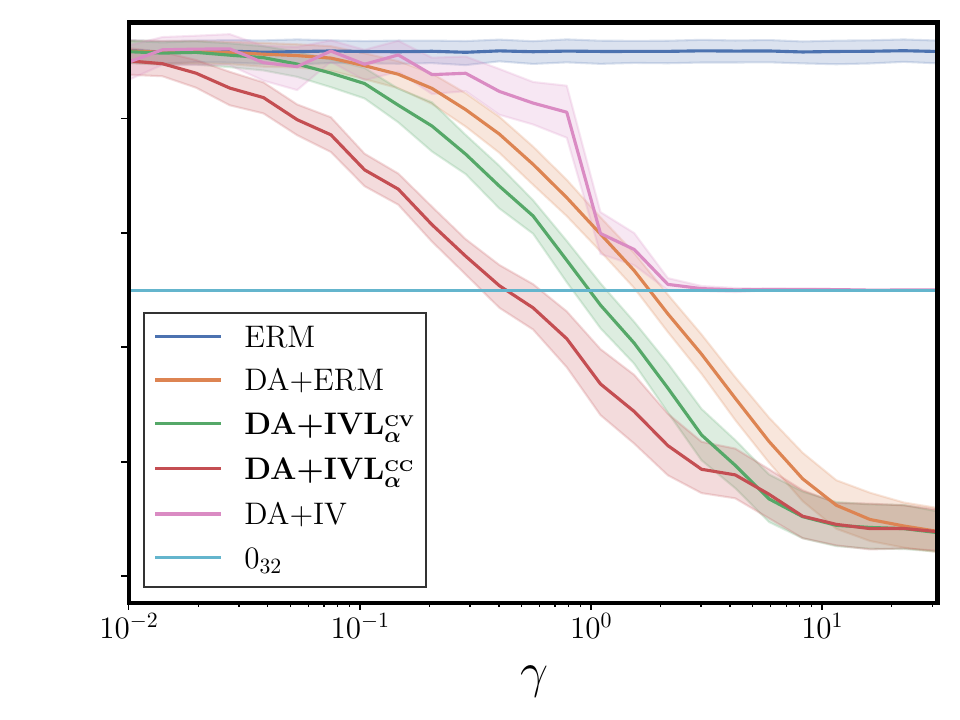}
    \caption{$\gamma \in [10^{-2}, 10^{1.5}]$, $\kappa=1$}
    \label{fig:cyclic-gamma-sweep}
\end{subfigure}
\caption{Simulation of the linear Gaussian SEM of \cref{example:da} with the same setting as \cref{fig:simulation-experiment}, but $\vtau^\top, \f$ sampled uniformly over a unit sphere, representing a cyclic structure. Each data-point represents the average ${\nCER{}}$ over $25$ trials with a $95\%$ CI.}
\label{fig:cyclic-simulation-experiment}
\end{figure*}
}


{

\setlength{\clipLeftSweep}{1.75cm}
\setlength{\clipRightSweep}{0.375cm}
\setlength{\clipBottomSweep}{0.35cm}
\setlength{\clipTopSweep}{0.3cm}

\setlength{\fboxsep}{0pt}

\begin{figure*}[t]
\centerfloat
\begin{subfigure}{0.35194\linewidth}
\centering
    \raisebox{0.15ex}{\includegraphics[width=\linewidth,trim={0.3cm \clipBottomSweep{} \clipRightSweep{} \clipTopSweep{}},clip]{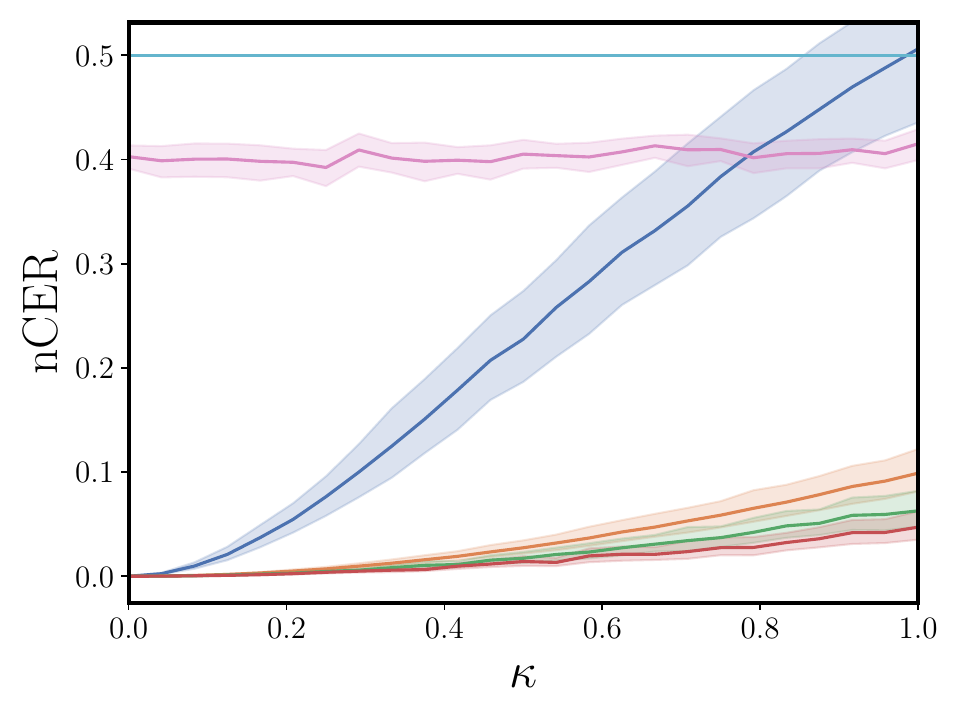}}
    \caption{$\gamma = 1$, $\kappa\in[0, 1]$}
    \label{fig:random-kappa-sweep}
\end{subfigure}
\hfill
\begin{subfigure}{0.31903\linewidth}
\centering
    \includegraphics[width=\linewidth,trim={\clipLeftSweep{} \clipBottomSweep{} \clipRightSweep{} \clipTopSweep{}},clip]{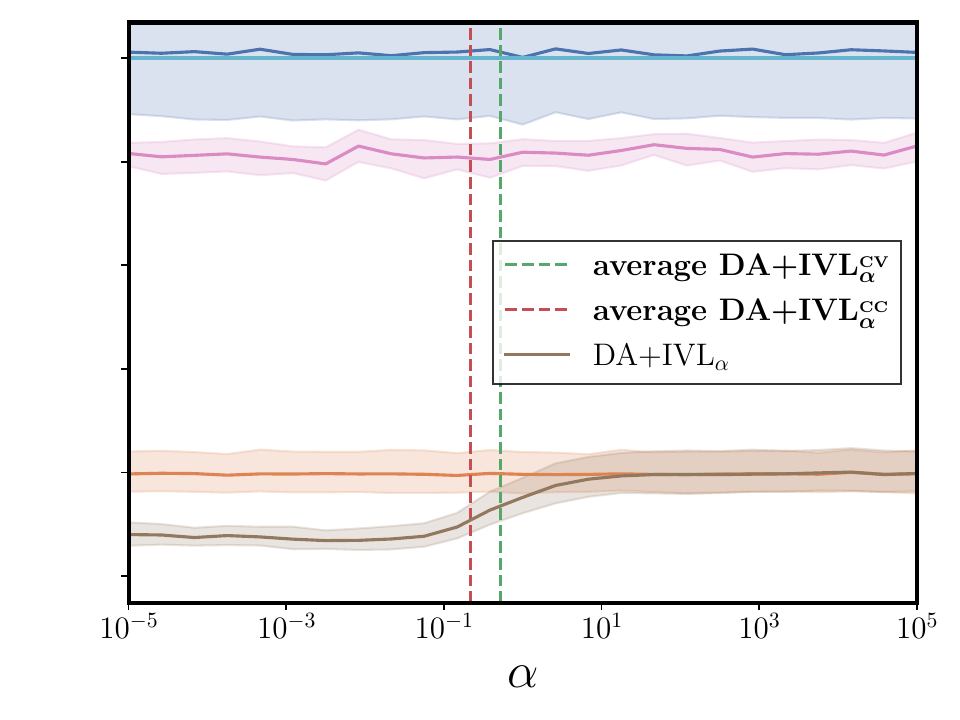}
    \caption{$\gamma, \kappa = 1$, $\alpha\in[10^{-5}, 10^5]$}
    \label{fig:random-alpha-sweep}
\end{subfigure}
\hfill
\begin{subfigure}{0.31903\linewidth}
\centering
    \includegraphics[width=\linewidth,trim={\clipLeftSweep{} \clipBottomSweep{} \clipRightSweep{} \clipTopSweep{}},clip]{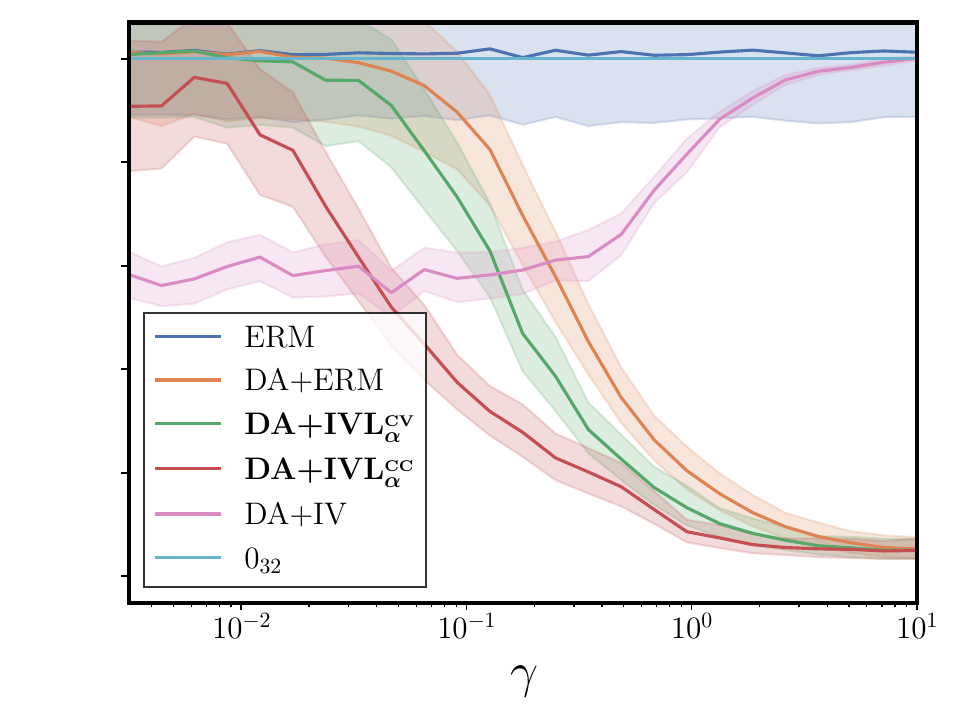}
    \caption{$\gamma \in [10^{-2.5}, 10]$, $\kappa=1$}
    \label{fig:random-gamma-sweep}
\end{subfigure}
\caption{Same experiment as \cref{fig:simulation-experiment}, but with $\K$ constructed by randomly selecting each basis of $\Null{\f^\top}$ with a probability of $2/3$, simulating the effect of knowing only \emph{some} symmetries of $\f$. Each data-point represents the average ${\nCER{}}$ over $25$ trials with a $95\%$ CI.}
\label{fig:random-simulation-experiment}
\end{figure*}

}
For the parameter sweep experiments of \cref{fig:simulation-experiment}, we generate a treatment of dimension $m=32$, but for the OOD baseline comparison experiment in \cref{fig:baselines-benchmarks-experiments} we use $m=16$. Furthermore, for the OOD baseline comparison experiment in \cref{fig:baselines-benchmarks-experiments}, we randomly pick each basis of $\Null{\f}$ with a probability $1/3$ to construct $\K$ (i.e., we know only some, but not all symmetries of $\f$).

We also provide additional linear simulation experiment results in \cref{fig:cyclic-simulation-experiment,fig:random-simulation-experiment}---the former simulates a cyclic structure with a non-zero $\vtau$, and the later simulates a case where only some, but not all symmetries of $\f$ are known. The results of both are consistent with our original experiment in \cref{fig:simulation-experiment}.

\subsection{Optical device experiment}\label{app:optical-device-experiment}

\begin{table}[t]
    \caption{
        $\nCER{}$ $\pm$ one standard error (SE) across the 12 optical-device datasets for various choices of DA. \textbf{Bold} and \emph{italic} denote the lowest and second-lowest average ${\nCER{}}$, respectively. Superscripts $\color{blue}\ast$ and $\color{green}\dagger$ indicate a significant improvement over ERM or \emph{both} ERM \emph{and} DA+ERM, respectively, beyond a margin of SE. Lastly, {\color{red}---} indicates that the method was too expensive for the~value~to~be~computed.
    }\label{table:optical_device}
    \centering
    \begin{tabular}{@{}r l@{}l@{}l@{}l@{}}
        \toprule
        Method & rotate > hflip > vflip\;\;\;\, & random-permutation\;\;\;\, & gaussian-noise\;\;\;\;\;\; & all \\
        \midrule
        ERM & $0.827 \pm 0.079$ & $0.827 \pm 0.079$ & $0.827 \pm 0.079$ & $0.823 \pm 0.083$ \\
        DA+ERM & $\mathit{ 0.617 \pm 0.085^{\color{blue}\ast} }$ & $\mathbf{ 0.513 \pm 0.082^{\color{blue}\ast} }$ & $0.707 \pm 0.090^{\color{blue}\ast}$ & $\mathit{ 0.513 \pm 0.075^{\color{blue}\ast} }$ \\
        \bf DA+IVL$\bm_{\bm\alpha}^{_\text{CV}}$ & $0.623 \pm 0.087^{\color{blue}\ast}$ & $0.540 \pm 0.085^{\color{blue}\ast}$ & $\mathit{ 0.641 \pm 0.092^{\color{blue}\ast} }$ & $0.533 \pm 0.083^{\color{blue}\ast}$ \\
        \bf DA+IVL$\bm_{\bm\alpha}^{_\text{LCV}}$ & $0.619 \pm 0.087^{\color{blue}\ast}$ & $0.534 \pm 0.082^{\color{blue}\ast}$ & $0.662 \pm 0.091^{\color{blue}\ast}$ & $0.574 \pm 0.087^{\color{blue}\ast}$ \\
        \bf DA+IVL$\bm_{\bm\alpha}^{_\text{CC}}$ & $0.623 \pm 0.085^{\color{blue}\ast}$ & $\mathit{ 0.527 \pm 0.082^{\color{blue}\ast} }$ & $\mathbf{ 0.639 \pm 0.076^{\color{blue}\ast} }$ & $\mathbf{ 0.509 \pm 0.078^{\color{blue}\ast} }$ \\
        DA+IV & $0.689 \pm 0.065^{\color{blue}\ast}$ & $0.973 \pm 0.011$ & $0.955 \pm 0.011$ & $0.640 \pm 0.083^{\color{blue}\ast}$ \\
        IRM & $0.972 \pm 0.010$ & $0.960 \pm 0.015$ & $0.970 \pm 0.009$ & $0.953 \pm 0.018$ \\
        ICP & $\mathbf{ 0.544 \pm 0.019^{\color{green}\dagger} }$ & $\mathit{ 0.527 \pm 0.012^{\color{blue}\ast} }$ & $0.646 \pm 0.054^{\color{green}\dagger}$ & {\color{red}---} \\
        DRO & $0.975 \pm 0.005$ & $0.959 \pm 0.012$ & $0.981 \pm 0.003$ & $0.952 \pm 0.014$ \\
        RICE & $0.966 \pm 0.014$ & $0.960 \pm 0.012$ & $0.974 \pm 0.005$ & $0.959 \pm 0.016$ \\
        V-REx & $0.962 \pm 0.024$ & $0.957 \pm 0.013$ & $0.979 \pm 0.005$ & $0.925 \pm 0.037$ \\
        MM-REx & $0.978 \pm 0.013$ & $1.000 \pm 0.000$ & $1.000 \pm 0.000$ & $1.000 \pm 0.000$ \\
        $\ell_1$ Janzing `19 & $0.821 \pm 0.081$ & $0.821 \pm 0.081$ & $0.821 \pm 0.081$ & $0.817 \pm 0.077$ \\
        $\ell_2$ Janzing `19 & $0.823 \pm 0.076$ & $0.823 \pm 0.076$ & $0.823 \pm 0.076$ & $0.828 \pm 0.079$ \\
        Kania, Wit `23 & $0.652 \pm 0.084^{\color{blue}\ast}$ & $0.559 \pm 0.084^{\color{blue}\ast}$ & $0.727 \pm 0.088^{\color{blue}\ast}$ & $0.543 \pm 0.080^{\color{blue}\ast}$ \\
        \bottomrule
    \end{tabular}
\end{table}

In the simulation and optical device experiments, we fit a linear function $h(.)\coloneqq \h\in \R^m$ for a squared loss in all of our risk metrics. For IVL$_\alpha$ regression, we use the closed-form OLS solution from \cref{app:implementation-details}. We also use a closed-form solution for ERM, DA+ERM and DA+IV (2SLS) baselines. The rest of the baselines (other than ICP) use SGD.


In \cref{table:optical_device}, we report further experiments on the optical device dataset with various DA choices. The findings continue to confirm our main hypothesis: DA+IVL dominates DA+ERM, which itself dominates ERM. We never observe an opposite trend with statistical significance.

\subsection{Colored-MNIST experiment}

In the colored MNIST experiment, we use the same 3-layer neural network (NN) architecture for $h$ across all methods comprising of a fully-connected input layer of input dimension $m$, hidden layer of input/output dimension $256$ and output classification layer with a Sigmoid function. Each layer is separated by an intermediary \emph{rectified linear unit} activation function. For the IV risk, we use the empirical version of the GMM based risk from \cref{eq:iv-loss-k-class-empirical}.

\subsubsection*{Colored-MNIST as a cyclic SEM---From invariant prediction to estimating causal effects}\label{app:cmnist-experiment}

\begin{figure}[ht]
\centerfloat
\includegraphics{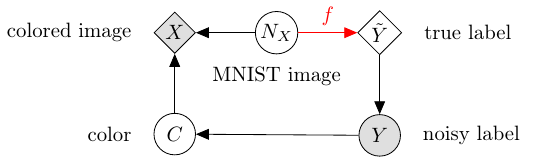}
\caption{The data generation DAG for colored-MNIST as discussed by the original authors \cite{arjovsky2020invariant}. They aim to learn a predictor $h:\sX\rightarrow\sY$ such that it is invariant to changes in $\distribution{X\mid Y}{}$. We argue that this DAG view of colored-MNIST does not make it obvious how the true labeling function $f(\vx)$ is related to the ATE $\EE{}{ \doM{X}{ \vx } }{Y\given X= \vx }$, which we believe is because it is virtually equivalent to the reduced form of our structural form presented in \cref{fig:daiv-color-mnist}.}
\label{fig:c-mnist-original}
\end{figure}
\begin{figure}[ht]
\centerfloat
\begin{subfigure}{.45\linewidth}
\centering
\includegraphics{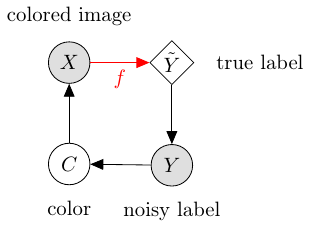}
\caption{Graph for generating colored-MNIST data.}
\end{subfigure}
\hfill
\begin{subfigure}{.55\linewidth}
\centering
\includegraphics{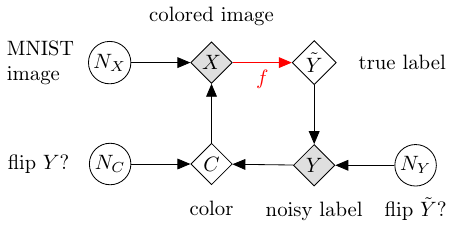}
\caption{Augmented graph---exogenous variables explicitly shown.}
\end{subfigure}
\caption{A cyclic SEM perspective of the colored-MNIST data---an MNIST image $N_X$ is assigned color $C$ to produce a colored-MNIST image $X$. This is then passed through the ground-truth labeling function $f$ to produce the true label $\tilde{Y}$. We flip this with probability 0.25 to produce the observed label $Y$, which in turn is flipped with probability $e$ (at train time $e\in\{0.1, 0.2\}$ and $e=0.9$ at test time) to produce the color $C$. These assignments are iteratively applied for any joint sample of the exogenous variables $N_X, N_Y, N_C$ starting at arbitrary values of endogenous variables until convergence to the unique stationary point $X, Y, C$ (and $\tilde{Y}$).}
\label{fig:daiv-color-mnist}
\end{figure}

In this section we give a cyclic SEM perspective of the colored-MNIST experiment from \cite{arjovsky2020invariant}. The task is binary classification of colored images $X$ from the MNIST dataset into low digits ($y=0$ for digits from $0$ to $4$) and high digits ($y=1$ for digits from $5$ to $9$). The difficulty of the task arises from there being a higher spurious correlation between the color $C$ of the images ($c=0$ for blue and $c=1$ for green) and (noisy) labels $Y$ as compared to the correlation between the digits in the image and the label.

Consider the following cyclic SEM in \cref{fig:daiv-color-mnist}.
\begin{equation}
\begin{aligned}
\nonumber
\vn_X &\sim \distribution{N_X}, n_Y \sim \bernoulliDistribution{0.25}, n_c \sim \bernoulliDistribution{e} && \text{sample all exogenous variables}
\\
\nonumber
X &= \verb!color!(C, \vn_X) && \text{apply color $C$ to the image}
\\
\nonumber
\tilde{Y} &= f(X) && \text{generate ground-truth label with true labeling function}
\\
\nonumber
Y &= \xor{ \tilde{Y}, n_Y } && \text{flip the label with probability $0.25$}
\\
\nonumber
C &= \xor{ Y, n_C } && \text{generate color by flipping $Y$ with probability $e$},
\end{aligned}
\end{equation}
where we first randomly sample an un-colored MNIST image $\vn_X$, and some Bernoulli distributed label noise $n_Y\sim \bernoulliDistribution{0.25}$ and color noise $n_C\sim \bernoulliDistribution{e}$ which is different for each environment $e\in\{0.1, 0.2\}$. Then for some initial arbitrary values $\vx_0$, $\tilde{y}_0$, $y_0$ and $c_0$ respectively for the observed colored image $X$, the ground-truth label $\tilde{Y}$, the observed noisy label $Y$ and the image color $C$, we iteratively apply the following assignments from the SEM
\begin{equation}
\begin{aligned}
\nonumber
\vx_t &= \verb!color!(c_{t-1}, \vn_X) && \text{apply color $C$ to the image}
\\
\nonumber
\tilde{y}_t &= f(\vx_{t-1}) && \text{generate ground-truth label with true labeling function}
\\
\nonumber
y_t &= \xor{ \tilde{y}_{t-1}, n_Y } && \text{flip the label with probability $0.25$}
\\
\nonumber
c_t &= \xor{ y_{t-1}, n_C } && \text{generate color by flipping $Y$ with probability $e$},
\end{aligned}
\end{equation}
until they converge while keeping all sampled exogenous variables $\vn_X, n_Y, n_C$ fixed. It is straightforward to show that this SEM will converge after a maximum of $t=5$ iterations\footnote{Following the mechanisms $c_0 \rightarrow \vx_1\rightarrow \tilde{y}_2 \rightarrow y_3 \rightarrow c_4 \rightarrow \vx_5$, we see that $(\vx_4, y_4, c_4) = (\vx_5, y_5, c_5)$ (same for $\tilde{y}_4 = \tilde{y}_5$).} due to the invariance of $f$ to the color of the image $C$. Furthermore, this stationary-point will be uniquely determined by our exogenous samples $\vn_X, n_Y, n_C$. And this is how we generate one sample $(\vx, y)$ for our colored-MNIST experiment. We repeat this process to generate a sample $(\vx, y)$ for each of $n$ samples $\vn_X, n_Y, n_C$.

Note that the ground-truth labeling function $f$ can only correctly predict the labels $75\%$ of the time. At test time we flip the correlation between the label $Y$ and the image color $C$ by setting $e=0.9$. Also, the above cyclic SEM for colored-MNIST produces the same distribution for $(X, Y)$ as \cite{arjovsky2020invariant}.

The above cyclic SEM perspective of colored-MNIST is interesting because it makes it clear that colored-MNIST is essentially a causal effect estimation task. Specifically, we can estimate the true labeling function $f$ by estimating the ATE $\EE{}{ \doM{X}{\vx} }{ Y\given X= \vx }$ since
\begin{align*}
    \EE{}{ \doM{X}{\vx} }{ Y\given X=\vx } 
    &= \EE{}{ \doM{X}{\vx} }{ \xor{ f(X), N_Y } \given X=\vx },
    \\
    \tag{$N_Y \indep X^{ \doM{X}{\vx} }$.}
    &= \EE{}{\M}{ \xor{ f(\vx), N_Y } },
    \\
    \tag{Definition of \texttt{xor}.}
    &= \EE{}{ \M }{ f(\vx) + N_Y - 2 f(\vx) N_Y },
    \\
    &= f(\vx) + \EE{}{\M}{ N_Y } - 2 f(\vx) \EE{}{\M}{ N_Y },
    \\
    &= \pr*{ 1 - 2\EE{}{\M}{ N_Y } } f(\vx) + \EE{}{\M}{ N_Y },
    \\
    &= 0.5 f(\vx) + 0.25\;. \tag{$N_Y\sim B(0.25)$.}
\end{align*}
Because this is a binary classification task, we have
\begin{align*}
    \texttt{round}\pr*{
    \EE{}{ \doM{X}{\vx} }{ Y\given X=\vx }
    } = f(\vx) .
\end{align*}

This is in contrast to the original DAG perspective of colored-MNIST shown in \cref{fig:c-mnist-original}, where the connection to the estimation of the causal mechanism $f$ is not immediately obvious. We argue that this is because the DAG in \cref{fig:c-mnist-original} is virtually equivalent to the reduced form of our structural form presented in \cref{fig:daiv-color-mnist}.

\newpage
\section{Proofs}\label{app:proofs}



\subsection{Proof of \cref{proposition:ivl-closed-form-solution}---{\myMethod} regression closed form solution in the linear case}
\label{app:proposition-ivl-closed-form-solution}
{
\allowdisplaybreaks
\vspace{\ProofSep}
\ivlClosedFormSolution*
\begin{proof}
The OLS solution for $(X^{\prime}, Y^{\prime})$ minimizes the following ERM risk
\begin{align}
\nonumber
&\Rightarrow \EE{}{}{ 
    \sqNorm{ 
    Y^{\prime} - \h^\top X^{\prime}
    } 
} 
\\
\tag{Substitute in definitions of $X^{\prime}, Y^{\prime}$.}
&= \EE{}{}{
    \sqNorm{
        a Y + b \EE{}{}{ Y\given Z} 
        - \h^\top \pr*{
            a X + b \EE{}{}{ X\given Z} 
        }
    }
} , 
\\
\tag{Distribute the subtraction.}
&= \EE{}{}{
    \sqNorm{
        a \pr*{ 
            Y - \h^\top X 
        }
        +
        b \pr*{ 
            \EE{}{}{ Y\given Z}
            -
            \h^\top \EE{}{}{ X\given Z}
        }
    }
} , 
\\      
\tag{Expand squared norm.}
\nonumber
&= a^2 \EE{}{}{ 
    \sqNorm{ 
        Y - \h^\top X
    }
}
+
b^2 \EE{}{}{ 
    \sqNorm{ 
        \EE{}{}{ Y\given Z}
        -
        \h^\top \EE{}{}{ X\given Z}
    }
}
\\
\label{eq:augmented-ols-expanded}
&\phantom{
    {}= a^2 \EE{}{}{ 
        \sqNorm{ 
            Y - \h^\top X
        }
    }
}
+ 2 a b \EE{}{}{
    \pr*{
        Y - \h^\top X
    }^\top 
    \pr*{
        \EE{}{}{ Y\given Z}
        -
        \h^\top \EE{}{}{ X\given Z}
    }
} .
\end{align}
First we note that from the definitions of $a, b$ we have
\begin{align}
\label{eq:ab2alpha}
a^2 = \sqrt{\alpha} ,
&&
b^2 + 2 a b = \pr*{\sqrt{1+\alpha}-\sqrt{\alpha}}^2 + 2 \sqrt{\alpha} \pr*{\sqrt{1+\alpha}-\sqrt{\alpha}} = 1 .
\end{align}
Now we evaluate the cross term in \cref{eq:augmented-ols-expanded}
\begin{align}
\nonumber
&\Rightarrow \EE{}{}{
    \pr*{
        Y - \h^\top X
    }^\top 
    \pr*{
        \EE{}{}{ Y\given Z}
        -
        \h^\top \EE{}{}{ X\given Z}
    }
}
\\
\tag{Law of iterated expectation.}
&= \EE{}{}{
    \EE{}{}{
        \pr*{
            Y - \h^\top X
        }^\top 
        \pr*{
            \EE{}{}{ Y\given Z}
            -
            \h^\top \EE{}{}{ X\given Z}
        }
    \given Z }
} , 
\\
\tag{Taking out what is known; \cref{eq:towis-rule}.}
&= \EE{}{}{
    \EE{}{}{
        \pr*{
            Y - \h^\top X
        }^\top 
    \given Z
    }
    \pr*{
        \EE{}{}{ Y\given Z}
        -
        \h^\top \EE{}{}{ X\given Z}
    }
}
\\
\nonumber
&= \EE{}{}{
    \pr*{
        \EE{}{}{ Y\given Z}
        -
        \h^\top \EE{}{}{ X\given Z}
    }^\top
    \pr*{
        \EE{}{}{ Y\given Z}
        -
        \h^\top \EE{}{}{ X\given Z}
    }
}
\\
\nonumber
&= \EE{}{}{
    \sqNorm{
        \EE{}{}{ Y\given Z}
        -
        \h^\top \EE{}{}{ X\given Z}
    }
} .
\end{align}
Substituting this back in \cref{eq:augmented-ols-expanded} we get
\begin{align}
\nonumber
&\Rightarrow
\EE{}{}{ 
    \sqNorm{ 
    Y^{\prime} - \h^\top X^{\prime}
    } 
} 
\\
\nonumber
&= a^2 \EE{}{}{ 
    \sqNorm{ 
        Y - \h^\top X
    }
}
+
\pr*{ b^2 + 2 a b } \EE{}{}{ 
    \sqNorm{ 
        \EE{}{}{ Y\given Z}
        -
        \h^\top \EE{}{}{ X\given Z}
    }
} , 
\\
\tag{From \cref{eq:ab2alpha}.}
&= \alpha \EE{}{}{ 
    \sqNorm{ 
        Y - \h^\top X
    }
}
+
\EE{}{}{ 
    \sqNorm{ 
        \EE{}{}{ Y\given Z}
        -
        \h^\top \EE{}{}{ X\given Z}
    }
} , 
\\
\tag{From \cref{eq:iv-2sls-cmr}.}
&= \alpha \Risk{\erm}{\M}{\h}
+
\Risk{\iv}{\M}{\h}
-
\EE{}{}{ \VV{}{}{ Y \given Z} } , 
\\
\nonumber
&= \Risk{\ivla}{\M}{\h}
-
\EE{}{}{ \VV{}{}{ Y \given Z} } .
\end{align}
\end{proof}
}

\myRule

\subsection{Proof of \cref{proposition:da-intervention-distribution-existence}---Existence of an interventional distribution given a DA}
\label{app:proposition-da-intervention-distribution-existence}
{
\allowdisplaybreaks
\begin{proposition}[unique stationary interventional distribution]
\label{proposition:da-intervention-distribution-existence}
In SEM $\A$ from \cref{eq:problem-setup-sem}, given any $(\vg, \vc, \vn_X, \vn_Y)\sim P^{\A}_{G, C, N_X, N_Y}$, if for all $(\vx_0, \vy_0) \in \mathcal{X}\times \mathcal{Y}$ the unique limits
\begin{align*}
    \vx^{\A} &\coloneqq \lim_{t\rightarrow\infty} \vx^{\A}_t = \lim_{t\rightarrow\infty} \tau\pr*{ \vy^{\A} _{t-1}, \vc, \vn_X },
    \\
    \vy^{\A} &\coloneqq \lim_{t\rightarrow\infty} \vy^{\A}_t = \lim_{t\rightarrow\infty} f\pr*{ \vx^{\A} _{t-1} } + \epsilon(\vc) + \vn_Y
\end{align*}
exist, then in $\doA{\tau}{\vg\tau}$ the unique limits
\begin{align*}
    \vx^{ \doA{\tau}{\vg\tau} } &\coloneqq \lim_{t\rightarrow\infty} \vx^{ \doA{\tau}{\vg\tau} }_t = \lim_{t\rightarrow\infty} \vg\tau\pr*{ \vy^{ \doA{\tau}{\vg\tau} }_{t-1}, \vc, \vn_X } = \vg\vx^{\A},
    \\
    \vy^{ \doA{\tau}{\vg\tau} } &\coloneqq \lim_{t\rightarrow\infty} \vy^{ \doA{\tau}{\vg\tau} }_t = \lim_{t\rightarrow\infty} f\pr*{ \vx^{ \doA{\tau}{\vg\tau} }_{t-1} } + \epsilon(\vc) + \vn_Y = \vy^{\A}
\end{align*}
also exist.
\end{proposition}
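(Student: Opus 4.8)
\textit{Proof proposal.}

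The plan is to exhibit an explicit topological conjugacy between the fixed-point iteration of $\A$ and that of $\doA{\tau}{\vg\tau}$, realized by the (invertible) action of $\vg$, so that convergence transfers with no new limiting argument. Fix a realization $(\vg,\vc,\vn_X,\vn_Y)\sim P^{\A}_{G,C,N_X,N_Y}$. Since $\vx_t$ is computed from $\vy_{t-1}$ and $\vy_t$ from $\vx_{t-1}$, one step of each iteration is a genuine self-map of $\sX\times\sY$; write
\[
\Phi_{\A}(\vx,\vy)\coloneqq\bigl(\tau(\vy,\vc,\vn_X),\; f(\vx)+\epsilon(\vc)+\vn_Y\bigr),\qquad
\Phi(\vx,\vy)\coloneqq\bigl(\vg\tau(\vy,\vc,\vn_X),\; f(\vx)+\epsilon(\vc)+\vn_Y\bigr),
\]
so that the sequences in the statement are the orbits $(\vx^{\A}_t,\vy^{\A}_t)=\Phi_{\A}^t(\vx_0,\vy_0)$ and $(\vx^{\doA{\tau}{\vg\tau}}_t,\vy^{\doA{\tau}{\vg\tau}}_t)=\Phi^t(\vx_0,\vy_0)$. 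Define $\psi\colon\sX\times\sY\to\sX\times\sY$ by $\psi(\vx,\vy)\coloneqq(\vg\vx,\vy)$.

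First I would record that $\psi$ is a homeomorphism: the group-action axioms make $\delta(\cdot,\vg)$ a bijection of $\sX$ with inverse $\delta(\cdot,\vg^{-1})$, and since $\vg\vx$ is assumed continuous in $\vx$, both $\psi$ and $\psi^{-1}$ are continuous. Next comes the one step with content, the intertwining identity $\psi\circ\Phi_{\A}=\Phi\circ\psi$: the $\vx$-coordinate of both composites is $\vg\tau(\vy,\vc,\vn_X)$ by inspection, while the $\vy$-coordinate is $f(\vx)+\epsilon(\vc)+\vn_Y$ on the left and $f(\vg\vx)+\epsilon(\vc)+\vn_Y$ on the right, and these coincide precisely because of the outcome-invariance $f(\vg\vx)=f(\vx)$ (valid for the sampled $\vg$ since the DA is outcome invariant for all of $\sG$). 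Iterating gives $\Phi^t=\psi\circ\Phi_{\A}^t\circ\psi^{-1}$ for every $t$.

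To conclude, fix any $(\vx_0,\vy_0)\in\sX\times\sY$ and put $(\vx_0',\vy_0')\coloneqq\psi^{-1}(\vx_0,\vy_0)=(\vg^{-1}\vx_0,\vy_0)$, which again lies in $\sX\times\sY$. By the hypothesis on $\A$, the orbit $\Phi_{\A}^t(\vx_0',\vy_0')$ converges to a limit $(\vx^{\A},\vy^{\A})$ that depends only on $(\vc,\vn_X,\vn_Y)$ and not on the initial point. Applying the continuous map $\psi$,
\[
\Phi^t(\vx_0,\vy_0)=\psi\bigl(\Phi_{\A}^t(\vx_0',\vy_0')\bigr)\;\longrightarrow\;\psi(\vx^{\A},\vy^{\A})=(\vg\vx^{\A},\vy^{\A})\qquad\text{as }t\to\infty,
\]
and since this limit does not depend on $(\vx_0,\vy_0)$ it is the unique stationary point of $\doA{\tau}{\vg\tau}$. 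Unwinding $\Phi$ and $\psi$ yields exactly $\vx^{\doA{\tau}{\vg\tau}}=\vg\vx^{\A}$ and $\vy^{\doA{\tau}{\vg\tau}}=\vy^{\A}$ (note $\vx^{\A},\vy^{\A}$ carry no dependence on $\vg$), which is the claim.

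I do not anticipate a genuine obstacle here: no contraction or fixed-point theorem is needed, because existence of the limits for $\A$ is assumed outright, and everything else is bookkeeping. The only points demanding care are (i) recognizing that each update rule is a bona fide self-map of $\sX\times\sY$ despite the $(t{-}1)$ indexing, (ii) invoking outcome-invariance in exactly the $\vy$-coordinate where it is needed, and (iii) checking that $\psi$ is a homeomorphism so that the limit may be pushed through it.
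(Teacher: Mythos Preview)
Your proof is correct and takes a genuinely different route from the paper's. The paper argues by direct induction: starting both iterations at the \emph{same} initial pair $(\vx_0,\vy_0)$, it shows $\vy^{\doA{\tau}{\vg\tau}}_t=\vy^{\A}_t$ for all $t$ (splitting into even/odd cases with step size $2$) and $\vx^{\doA{\tau}{\vg\tau}}_t=\vg\vx^{\A}_t$ for $t>0$, then passes to the limit using continuity of $\vg(\cdot)$. You instead package the two update maps as self-maps $\Phi_{\A},\Phi$ of $\sX\times\sY$ and exhibit the conjugacy $\Phi=\psi\circ\Phi_{\A}\circ\psi^{-1}$ with $\psi(\vx,\vy)=(\vg\vx,\vy)$, so convergence transfers through the continuous $\psi$. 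The invariance $f(\vg\vx)=f(\vx)$ enters in the same place for both arguments---your $\vy$-coordinate check is exactly the paper's induction step. Your approach is cleaner conceptually (one identity replaces the parity case split) and makes the mechanism transparent as a change of coordinates; the price is that you invoke invertibility of the group action to form $\psi^{-1}(\vx_0,\vy_0)=(\vg^{-1}\vx_0,\vy_0)$, which the paper's induction avoids. In the stated setting $\sG$ is a group, so this costs nothing. A small remark: you only need continuity of $\psi$ (not of $\psi^{-1}$) to pass the limit, so the full homeomorphism claim, while true, is stronger than necessary.
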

\begin{proof}
First we try to show that
\begin{align}
    \label{eq:induction-y}
    \vy^{ \doA{\tau}{\vg\tau} }_t &= \vy^{\A}_t .
\end{align}
For the base case, we have by construction
\begin{align}
    \nonumber
    \vy^{ \doA{\tau}{\vg\tau} }_0 \coloneqq \vy_0 \eqqcolon \vy^{\A}_0 .
\end{align}
For the step case, assuming that $\vy^{ \doA{\tau}{\vg\tau} }_t = \vy^{\A}_t$, we have\footnote{Note that here the step size for proof by induction would be $\Delta t=2$ since $\vy_t$ precedes $\vy_{t+2}$. Similar is the case for $\vx_t$ as well.},
\begin{align}
    \nonumber
    \vy^{ \doA{\tau}{\vg\tau} }_{t+2} &= f\pr*{ \vx^{ \doA{\tau}{\vg\tau} }_{t+1} } + \epsilon(\vc) + \vn_Y ,
    \\
    \nonumber
    &= f\pr*{ \vg\tau\pr*{ \vy^{ \doA{\tau}{\vg\tau} }_{t}, \vc, \vn_X } } + \epsilon(\vc) + \vn_Y ,
    \\
    \tag{Invariance of $f$ to $\vg$.}
    &= f\pr{ \tau\pr*{ \vy^{ \doA{\tau}{\vg\tau} }_{t}, \vc, \vn_X } } + \epsilon(\vc) + \vn_Y ,
    \\
    \tag{Assumption $\vy^{\doA{\tau}{\vg\tau}}_t = \vy^{\A}_t$.}
    &= f\pr*{ \tau\pr*{ \vy^{\A}_{t}, \vc, \vn_X } } + \epsilon(\vc) + \vn_Y ,
    \\
    \nonumber
    &= f\pr*{ \vx^{\A}_{t+1} } + \epsilon(\vc) + \vn_Y ,
    \\
    \nonumber
    &= \vy^{\A}_{t+2} .
\end{align}
Hence, we have shown that \cref{eq:induction-y} holds for all even $t$. For odd $t$, we simply replace $t=0$ with $t=1$ in the base case
\begin{align}
    \nonumber
    \vy^{ \doA{\tau}{\vg\tau} }_1 &= f\pr*{ \vx^{ \doA{\tau}{\vg\tau} }_0 } + \epsilon(\vc) + \vn_Y ,
    \\
    \tag{Definitions $\vx^{ \doA{\tau}{\vg\tau} }_0 \coloneqq \vx_0 \eqqcolon \vx^{\A}_0$.}
     &= f\pr*{ \vx^\A_0 } + \epsilon(\vc) + \vn_Y ,
     \\
     \nonumber
     &= \vy^{\A}_1 ,
\end{align}
We have now finally shown that \cref{eq:induction-y} holds for all $t\geq 0$. 

Next, it is now relatively straightforward to show that for any $t> 0$, we have
\begin{align}
    \nonumber
    \vx^{ \doA{\tau}{\vg\tau} }_t &= \vg\tau\pr*{ \vy^{ \doA{\tau}{\vg\tau} }_{t-1}, \vc, \vn_X } ,
    \\
    \tag{Follows from \cref{eq:induction-y}.}
    &= \vg\tau\pr*{ \vy^{\A}_{t-1}, \vc, \vn_X } ,
    \\
    \label{eq:induction-x}
    &= \vg\vx^{\A}_t .
\end{align}

Finally, by applying limit as $t\rightarrow\infty$ to both sides of \cref{eq:induction-y} and \cref{eq:induction-x}, we get
\begin{gather}
    \nonumber
    \vy^{ \doA{\tau}{\vg\tau} } 
    = \lim_{t\rightarrow\infty} \vy^{ \doA{\tau}{\vg\tau} }_t =
    \lim_{t\rightarrow\infty} \vy^{\A}_t
    = \vy^{\A} ,
    \\
    \label{eq:induction-final}
    \vx^{ \doA{\tau}{\vg\tau} } 
    = \lim_{t\rightarrow\infty} \vx^{ \doA{\tau}{\vg\tau} }_t =
    \lim_{t\rightarrow\infty} \vg\vx^{\A}_t
    = \vg\lim_{t\rightarrow\infty} \vx^{\A}_t
    = \vg\vx^{\A} ,
\end{gather}
where the limit can be moved past $\vg$ in \cref{eq:induction-final} because $\vg$ is assumed continuous in its domain.

\end{proof}
}

\myRule

\subsection{Proof of \cref{theorem:ivl-robust-prediction}---Robust prediction with {\myMethod} regression}
\label{app:theorem-ivl-robust-prediction}
{
\allowdisplaybreaks
\vspace{\ProofSep}
\ivlRobustPrediction*
\begin{proof}
Write $X$ in terms of the exogenous variables $C, Z, N_X, N_Y$ using the reduced form from \cref{lemma:sem-solvability} as
\begin{align}
\label{eq:x-reduced-form}
    X = \tZ + \tC + \tN ,
\end{align}
where for readability we represent
\begin{align*}
\tZ \coloneqq \mM_{m\times m} \K^\top Z
,
&&
\tC \coloneqq \mM 
\begin{bmatrix}
    \E^\top \\
    \veps^\top
\end{bmatrix} C
,
&&
\tN \coloneqq 
\sigma\cdot\mM \begin{bmatrix}
    N_X\\
    N_Y
\end{bmatrix}
,
\end{align*}
with
\begin{align*}
    \mM 
\coloneqq 
\begin{bmatrix}
    \mM_{m\times m} & \mM_{m\times 1} \\
    \mM_{1\times m} & 
    \mM_{1\times 1}
\end{bmatrix}
= 
\begin{bmatrix}
    \In{m} & -\vtau^\top\\
    -\f^\top & 1
\end{bmatrix}^{-1}
.
\end{align*}
Now, we start by writing the ERM objective under the intervention $\intervention{\K^\top\pr*{\cdot}}{\vzeta}$ as
\begin{align}
\nonumber 
&\Rightarrow \Risk{\erm}{ \doM{ \K^\top\pr*{\cdot} }{ \vzeta } }{ \h }
\\
\nonumber
&= \EE{}{ \doM{ \K^\top\pr*{\cdot} }{ \vzeta } }{
    \sqNorm{ Y - \h^\top X } 
} ,
\\
\tag{$Y$ structural form \& \cref{eq:x-reduced-form}.}
&= \EE{}{ \doM{ \K^\top\pr*{\cdot} }{ \vzeta } }{
    \sqNorm{
        \xi + \pr*{ \f-\h }^\top \pr*{ \tZ + \tC + \tN }
    }
} ,
\\
\tag{$\tZ$ \& intervention definition.}
&= \EE{}{ \doM{ \K^\top\pr*{\cdot} }{ \vzeta } }{
    \sqNorm{
        \xi + \pr*{ \f-\h }^\top \pr*{ \mM_{m\times m}\vzeta + \tC + \tN }
    }
} ,
\\
\nonumber
&= \EE{}{ \doM{ \K^\top\pr*{\cdot} }{ \vzeta } }{
    \sqNorm{
        \xi + \pr*{ \f-\h }^\top \pr*{ \tC + \tN } + \pr*{ \f-\h }^\top \mM_{m\times m} \vzeta
    }
} ,
\\
\tag{Define ${\h'}^\top\coloneqq \pr*{\f-\h}^\top \mM_{m\times m}$.}
&= \EE{}{ \doM{ \K^\top\pr*{\cdot} }{ \vzeta } }{
    \sqNorm{
        \xi + \pr*{ \f-\h }^\top \pr*{ \tC + \tN } + {\h'}^\top \vzeta
    }
} ,
\\
\tag{Follows from exogeneity of $\vzeta$ under intervention, $\Rightarrow$ cross term zeros-out.}
&= \EE{}{ \doM{ \K^\top\pr*{\cdot} }{ \vzeta } }{
    \sqNorm{
        \xi +  \pr*{ \f-\h }^\top \pr*{ \tC + \tN }
    }
} + \EE{}{ \doM{ \K^\top\pr*{\cdot} }{ \vzeta } }{
    \sqNorm{
        { \h' }^\top \vzeta
    }
} ,
\\
\label{eq:gen-thm:inverse}
&= \EE{}{ \doM{ \K^\top\pr*{\cdot} }{ \0{m} } }{
    \sqNorm{ Y - \h^\top X } 
} + \EE{}{ \doM{ \K^\top\pr*{\cdot} }{ \vzeta } }{
    \sqNorm{ { \h' }^\top \vzeta }
} ,
\\
\nonumber
&= \EE{}{ \doM{ \K^\top\pr*{\cdot} }{ \0{m} } }{
    \sqNorm{ Y - \h^\top X } 
} + \sqNorm{ 
    { \h' }^\top \vzeta 
},
\\
\nonumber
&= \EE{}{ \doM{ \K^\top\pr*{\cdot} }{ \0{m} } }{
    \sqNorm{ Y - \h^\top X }
} + \tr{ \vzeta^\top { \h' } { \h' }^\top \vzeta } , 
\\
\label{eq:ivl-before-max}
&= \EE{}{ \doM{ \K^\top\pr*{\cdot} }{ \0{m} } }{
    \sqNorm{ Y - \h^\top X }
} + \tr{
    \h'^\top \vzeta \vzeta^\top \h' 
}.
\end{align}
Now, note that the maximum of the trace term over $\vzeta\in \sP_{\alpha}$ gives
\begin{align*}
&\Rightarrow \max_{\vzeta\in\mathcal{P}_{\alpha}} \; \tr{ \h'^\top\vzeta\vzeta^\top\h' }
,
\\
\tag{Linearity of trace and definition of $\mathcal{P}_\alpha$.}
&= \pr*{ \frac{1}{\alpha} + 1 } \; \tr{ \h'^\top \pr[\Big]{ \K^\top \EE{}{\M}{Z Z^\top} \K } \h' }
,
\\
\tag{Linearity of expectation.}
&= \pr*{ \frac{1}{\alpha} + 1 } \; \EE{}{\M}{ \tr{ 
\h'^\top \K^\top Z Z^\top \K \h' 
} } , 
\\
\tag{Cyclic property of trace.}
&= \pr*{ \frac{1}{\alpha} + 1 } \; \EE{}{\M}{ \tr{ 
Z^\top \K \h' \h'^\top \K^\top Z 
} } , 
\\
&= \pr*{ \frac{1}{\alpha} + 1 } \; \EE{}{\M}{ \sqNorm{ \h'^\top \K^\top Z 
} } , 
\\
\tag{Substitute in definition of ${\h'}^\top$.}
&= \pr*{ \frac{1}{\alpha} + 1 } \; \EE{}{\M}{ \sqNorm{ \pr*{\f - \h}^\top \mM_{m\times m} \K^\top Z 
} } ,
\\
\tag{Definition of $\tZ$.}
&= \pr*{ \frac{1}{\alpha} + 1 } \; \EE{}{\M}{ \sqNorm{ \pr*{\f - \h}^\top \tZ
} } . 
\end{align*}
We can now substitute this in while maximizing both sides of \cref{eq:ivl-before-max} over interventions $\vzeta\in\sP_{\alpha}$ as
\begin{align}
\nonumber
&\Rightarrow \max_{\vzeta\in\mathcal{P}_{\alpha}}  \Risk{\erm}{ \doM{ \K^\top\pr*{\cdot} }{ \0{m} } }{ \h } 
\\
\tag{First term does not have $\vzeta$.}
&= \EE{}{ \doM{ \K^\top\pr*{\cdot} }{ \0{m} } }{ 
    \sqNorm{ Y - \h^\top X } 
} + \max_{\vzeta\in\mathcal{P}_{\alpha}} \; \tr{ \h'^\top\vzeta\vzeta^\top\h' } ,
\\
\nonumber
&= \EE{}{ \doM{ \K^\top\pr*{\cdot} }{ \0{m} } }{ 
    \sqNorm{ Y - \h^\top X } 
} + \pr*{ \frac{1}{\alpha} + 1 } \; \EE{}{\M}{ \sqNorm{ \pr*{\f - \h}^\top \tZ
} } 
,
\\
\tag{Inverse step of \cref{eq:gen-thm:inverse}.}
&= \EE{}{ \M }{ 
    \sqNorm{ Y - \h^\top X } 
} + \frac{1}{\alpha} \; \EE{}{\M}{ \sqNorm{ \pr*{\f - \h}^\top \tZ 
} } 
,
\\
\tag{From conditional exp. of \cref{eq:x-reduced-form}.}
&= \EE{}{ \M }{ 
    \sqNorm{ Y - \h^\top X } 
} + \frac{1}{\alpha} \; \EE{}{\M}{ \sqNorm{ \pr*{\f - \h}^\top \EE{}{}{ X\given Z } 
} } 
,
\\
\tag{Linearity of expectation.}
&= \EE{}{ \M }{ 
    \sqNorm{ Y - \h^\top X } 
} + \frac{1}{\alpha} \; \EE{}{\M}{ \sqNorm{ \EE{}{}{ \f^\top X\given Z} - \h^\top \EE{}{}{ X\given Z } 
} }
,
\\
\tag{Inverse step of \cref{eq:gen-thm:inverse}.}
&= \EE{}{ \M }{ 
    \sqNorm{ Y - \h^\top X } 
} + \frac{1}{\alpha} \; \EE{}{\M}{ \sqNorm{ \EE{}{}{ Y \given Z} - \h^\top \EE{}{}{ X\given Z } 
} }
,
\\
\tag{From \cref{eq:iv-2sls-cmr}.}
&= \Risk{\erm}{\M}{ \h } 
+ 
\frac{1}{\alpha}
\pr*{
    \Risk{\iv}{\M}{ \h }
    -
    \EE{}{}{ \VV{}{}{ Y \given Z} }
} ,
\\
\nonumber
&=
\frac{1}{\alpha} 
\pr*{
    \Risk{\ivla}{\M}{ \h } 
    -
    \EE{}{}{ \VV{}{}{ Y \given Z} }
} .
\end{align}
\end{proof}
}

\myRule

\subsection{Proof of \cref{theorem:ivl-causal-estimation}---Causal estimation with IVL regression}
\label{app:theorem-ivl-causal-estimation}
{
\allowdisplaybreaks
\vspace{\ProofSep}
\ivlCausalEstimation*
\begin{proof}
For $\hHat^{\M}_{\ivla}$, we have from \cref{proposition:ivl-closed-form-solution}
\begin{align*}
\sqNorm{ \hHat^{\M}_{\ivla} - \f }_{\cov{X}{\M}} &= \sqNorm{ 
    \EE{}{}{ X^{\prime} {X^{\prime}}^\top }^{-1}\EE{}{}{ X^{\prime}  {Y^{\prime}}^\top } - \f 
}_{\cov{X}{\M}} .
\end{align*}
Note that we have
\begin{align}
\nonumber
&\Rightarrow \EE{}{}{ X^{\prime}  {Y^{\prime}}^\top } 
\\
\nonumber
&= \EE{}{}{ 
    X^{\prime} \pr*{ 
        a Y + b \EE{}{}{ Y \given Z } 
    }^\top 
} , 
\\
\nonumber
&= \EE{}{}{ 
    X^{\prime} \pr*{ 
        a Y + b \EE{}{}{ 
            \f^\top X + \xi \given Z 
        } 
    }^\top 
} , 
\\
\tag{By definition $Z\indep \xi$.}
&= \EE{}{}{ 
    X^{\prime} \pr*{ 
        a Y + b \f^\top \EE{}{}{ X \given Z } 
    }^\top 
} , 
\\
\nonumber
&= \EE{}{}{ 
    X^{\prime} \pr*{ 
        a \f^\top X + a \xi + b \f^\top \EE{}{}{ X \given Z } 
    }^\top 
} , 
\\
\tag{Substituting in $X^{\prime}\coloneqq a X + b \EE{}{}{ X\given Z }$.}
&= \EE{}{}{
    X^{\prime} \pr*{ 
        \f^\top X^{\prime} + a \xi 
    }^\top 
} , 
\\
\nonumber
&= \EE{}{}{ 
    X^{\prime} {X^{\prime}}^\top \f + a X^{\prime} \xi^\top 
} , 
\\
\nonumber
&= \EE{}{}{ 
    X^{\prime} {X^{\prime}}^\top 
} \f + a \EE{}{}{ X^{\prime} \xi^\top } , 
\\
\tag{$Z\indep \xi$, therefore $\EE{}{}{X^{\prime} \xi^\top} = a\EE{}{}{X \xi^\top}$.}
&= \EE{}{}{ 
    X^{\prime} {X^{\prime}}^\top 
} \f + a^2 \EE{}{}{ X \xi^\top } , 
\\
\label{eq:uiv-a-rse--covar}
&= \EE{}{}{ 
    X^{\prime} {X^{\prime}}^\top 
} \f + \alpha \EE{}{}{ X \xi^\top } , 
\end{align}
We also see that
\begin{align}
\nonumber
&\Rightarrow \EE{}{}{ X^{\prime}  {X^{\prime}}^\top } 
\\
\nonumber
&= \EE{}{}{
    \pr*{ 
        a X + b \EE{}{}{ X\given Z } 
    } \pr*{ 
        a X + b \EE{}{}{ X\given Z } 
    }^\top 
} , 
\\
\tag{Set $\tZ \coloneqq \mathbb{E}[X\mid Z]$ for brevity.}
&= \EE{}{}{ 
    \pr*{ 
        a X + b \tZ 
    } \pr*{ 
        a X + b \tZ 
    }^\top 
} , 
\\
\nonumber
&= a^2 \EE{}{}{ X X^\top } + b^2 \EE{}{}{ \tZ \tZ^\top } + a b \EE{}{}{ X \tZ^\top } + a b \EE{}{}{ \tZ X^\top } , 
\\
\tag{Because $\EE{}{}{ X \tZ^\top } = \cov{\tZ}$.}
&= a^2 \EE{}{}{ X X^\top } + \pr*{ b^2 + 2 a b } \cov{\tZ} , 
\\
\label{eq:uiv-a-rse--xx-covarXX}
&= \alpha \EE{}{}{ X X^\top } + \cov{\tZ} , 
\end{align}
where we substituted in \cref{eq:ab2alpha} in \cref{eq:uiv-a-rse--xx-covarXX}.

Finally, we now have
\begin{align}
\nonumber
&\Rightarrow \sqNorm{ 
    \hHat^{\M}_{\ivla} - \f 
}_{\cov{X}{\M}} 
\\
\nonumber
&= \sqNorm{ 
    \EE{}{}{ X^{\prime} {X^{\prime}}^\top }^{-1} \EE{}{}{ X^{\prime} {Y^{\prime}}^\top } - \f 
}_{\cov{X}{\M}}
, 
\\
\tag{Substituting in \cref{eq:uiv-a-rse--covar}.}
&= \sqNorm{ 
    \EE{}{}{ X^{\prime} {X^{\prime}}^\top }^{-1} \pr*{ 
        \EE{}{}{ X^{\prime} {X^{\prime}}^\top } \f + \alpha\EE{}{}{X \xi^\top} 
    } - \f 
}_{\cov{X}{\M}} 
,
\\
\nonumber
&= \sqNorm{ 
    \f + \alpha \EE{}{}{ X^{\prime} {X^{\prime}}^\top }^{-1} \EE{}{}{X \xi^\top} - \f 
}_{\cov{X}{\M}} 
,
\\
\nonumber
&= \sqNorm{ 
    \alpha \EE{}{}{ X^{\prime} {X^{\prime}}^\top }^{-1} \EE{}{}{X \xi^\top} 
}_{\cov{X}{\M}} 
,
\\
\tag{Substituting in \cref{eq:uiv-a-rse--xx-covarXX}.}
&= \sqNorm{ 
    \alpha \pr*{ 
        \alpha \EE{}{}{X X^\top} + \cov{\tZ} 
    }^{-1} \EE{}{}{X \xi^\top} 
}_{\cov{X}{\M}} 
,
\\
\tag{Using \cref{lemma:simultaneous-diagonalization}.}
&= \sqNorm{ 
    \pr*{ 
        \mS^\top \mS +  
            \frac{1}{\alpha} \mS^\top \mD \mS 
    }^{-1} \EE{}{}{X \xi^\top}  
}_{\mS^\top \mS}
,
\\
\tag{$\mS$ is invertible.}
&= \sqNorm{ \mS^{-1} 
    \pr*{ 
        \In{m}  +  
            \frac{1}{\alpha} \mD 
    }^{-1} \mS^{-\top} \EE{}{}{X \xi^\top}  
}_{\mS^\top \mS}
,
\\
\tag{Switch to $\ell_2$ norm.}
&= \sqNorm{ 
    \pr*{ 
        \In{m} +  
            \frac{1}{\alpha} \mD 
    }^{-1} \mS^{-\top} \EE{}{}{X \xi^\top}  
}
,
\\
\label{eq:ivl-to-erm}
&\leq \sqNorm{
    \mS^{-\top} \EE{}{}{X \xi^\top}  
}
,
\\
\tag{Substituting $\mathbf{I} = \mS \mS^{-1}$.}
&= \sqNorm{
    \mS \mS^{-1} \mS^{-\top} \EE{}{}{X \xi^\top}  
}
,
\\
\tag{Back to weighted norm.}
&= \sqNorm{
    \mS^{-1} \mS^{-\top} \EE{}{}{X \xi^\top}  
}_{\mS^\top \mS}
,
\\
\tag{Substituting $\cov{X}{\M} \coloneqq \EE{}{\M}{X X^\top} = \mS^\top \mS$.}
&= \sqNorm{
    \EE{}{}{X X^\top}^{-1} \EE{}{}{X \xi^\top}  
}_{\cov{X}{\M}}
,
\\
\tag{Adding and subtracting $\f$.}
&= \sqNorm{
    \f + \EE{}{}{X X^\top}^{-1} \EE{}{}{X \xi^\top}
    - \f
}_{\cov{X}{\M}}
,
\\
\tag{Substitute $\mathbf{I} = \EE{}{}{X X^\top}^{-1}\EE{}{}{X X^\top}$.}
&= \sqNorm{
    \EE{}{}{X X^\top}^{-1} \pr*{ \EE{}{}{X X^\top}\f + \EE{}{}{X \xi^\top} }
    - \f
}_{\cov{X}{\M}}
,
\\
\tag{Linearity of expectation.}
&= \sqNorm{
    \EE{}{}{X X^\top}^{-1} \EE{}{}{ X \pr*{ \f^\top X + \xi}^\top }
    - \f
}_{\cov{X}{\M}}
,
\\
\tag{Substituting $Y = \f^\top X + \xi$.}
&= \sqNorm{
    \EE{}{}{X X^\top}^{-1} \EE{}{}{ X Y^\top }
    - \f
}_{\cov{X}{\M}}
,
\\
\tag{Closed form ERM solution.}
&= \sqNorm{
    \hHat_{\erm}^{\M}
    - \f
}_{\cov{X}{\M}} ,
\end{align}
where inequality \cref{eq:ivl-to-erm} holds because $\mD$ is non-negative diagonal. Furthermore, inequality \cref{eq:ivl-to-erm} only holds with equality iff $\mS^{-\top}\EE{}{}{X \xi^\top}$ is in the kernel of $\mD$. Or equivalently, iff $\EE{}{}{X \xi^\top}$ is in the kernel of $\mS^\top \mD \mS = \cov{\tZ}{}$, which from \cref{lemma:gaussian-conditional-orthogonality} is true iff
\begin{align*}
    \EE{}{\M}{X \given Z} \quad \perp \quad \EE{}{\M}{X \given \xi}
    \qquad
    \text{a.s.}
\end{align*}
\end{proof}
}

\myRule

\subsection{Proof of \cref{theorem:da-causal-estimation}---Causal estimation with DA+ERM}
\label{app:theorem-da-causal-estimation}
{
\allowdisplaybreaks
\vspace{\ProofSep}
\daCausalEstimation*
\begin{proof}
We have
\begin{align}
\nonumber
&\Rightarrow \norm*{ 
    \hHat^{\A}_{\da[G]\erm} - \f 
}_{ \cov{X}{\A} }
\\
\nonumber
&= \norm*{ 
    \EE{}{}{ \pr*{ GX } \pr*{ GX }^\top }^{-1} 
    \EE{}{}{ \pr*{ GX }  Y^\top } - \f  
}_{ \cov{X}{\A} } ,
\\
\tag{Structural eq. of $Y$.}
&= \norm*{ 
    \EE{}{}{ \pr*{ GX } \pr*{ GX }^\top }^{-1} 
    \EE{}{}{ 
        \pr*{ GX } \pr*{\f^\top X + \xi }^\top 
    } - \f 
}_{ \cov{X}{\A} } ,
\\
\tag{Using $\sG$-invariance of $\f$.}
&= \norm*{ 
    \EE{}{}{ \pr*{ GX } \pr*{ GX }^\top }^{-1} 
    \EE{}{}{ 
        \pr*{ GX } \pr*{\f^\top \pr*{ GX } + \xi }^\top 
    } - \f 
}_{ \cov{X}{\A} } ,
\\
\nonumber
&= \norm*{ 
    \pr*{
        \f + \EE{}{}{ \pr*{ GX } \pr*{ GX }^\top }^{-1} 
        \EE{}{}{ \pr*{ GX }  \xi^\top } 
    } - \f 
}_{ \cov{X}{\A} } ,
\\
\nonumber
&= \norm*{ 
    \EE{}{}{
    \pr*{ GX } \pr*{ GX }^\top }^{-1} 
    \EE{}{}{ \pr*{ GX }  \xi^\top } 
}_{ \cov{X}{\A} } ,
\\
\tag{Let $\tG\coloneqq \EE{}{}{GX\given G} = \gamma \cdot \K^\top G$.}
&= \norm*{ 
    \EE{}{}{ \pr*{ X + \tG } \pr*{ X + \tG }^\top }^{-1} 
    \EE{}{}{ \pr*{ X + \tG }  \xi^\top } 
}_{ \cov{X}{\A} } ,
\\
\tag{Using $\tG\indep X, \xi$.}
&= \norm*{ 
    \pr*{ \EE{}{}{ X X^\top } + \EE{}{}{ \tG\tG^\top } }^{-1} 
    \EE{}{}{ X \xi^\top } 
}_{ \cov{X}{\A} } ,
\\
\tag{\cref{lemma:simultaneous-diagonalization}.}
&= \norm*{ 
    \pr*{ \mS^\top \mS + \mS^\top \mD \mS }^{-1} 
    \EE{}{}{ X \xi^\top } 
}_{ \mS^\top \mS } ,
\\
\tag{$\mS, \mS^\top$ invertible.}
&= \norm*{ 
    \mS^{-1} \pr*{ \In{m} + \mD }^{-1} \mS^{-\top} 
    \EE{}{}{ X \xi^\top } 
}_{ \mS^\top \mS } ,
\\
\tag{Switch to $\ell_2$ norm.}
&= \norm*{ 
    \mS \mS^{-1} \pr*{ \In{m} + \mD }^{-1} \mS^{-\top} 
    \EE{}{}{ X \xi^\top } 
} ,
\\
\nonumber
&= \norm*{ 
    \pr*{ \In{m} + \mD }^{-1} \mS^{-\top} 
    \EE{}{}{ X \xi^\top } 
} ,
\\
\label{eq:da-to-erm}
&\leq \norm*{ 
    \mS^{-\top} 
    \EE{}{}{ X \xi^\top } 
} ,
\\
\tag{Substitute in $\In{m} = \mS \mS^{-1}$.}
&= \norm*{ 
    \mS \mS^{-1} \mS^{-\top} 
    \EE{}{}{ X \xi^\top } 
} ,
\\
\tag{Back to weighted norm.}
&= \norm*{ 
    \mS^{-1} \mS^{-\top} 
    \EE{}{}{ X \xi^\top } 
}_{\mS^\top \mS} ,
\\
\tag{Substitute in $\cov{X}{\A}\coloneqq \EE{}{\A}{XX^\top} = \mS^\top \mS$.}
&= \norm*{ 
    \EE{}{}{XX^\top}^{-1} 
    \EE{}{}{ X \xi^\top } 
}_{\cov{X}{\A}} ,
\\
\tag{Add and subtract $\f$.}
&= \norm*{ 
    \f + \EE{}{}{XX^\top}^{-1} 
    \EE{}{}{ X \xi^\top } - \f
}_{\cov{X}{\A}} ,
\\
\tag{Use $\In{m} = \EE{}{}{XX^\top}^{-1}\EE{}{}{XX^\top}$.}
&= \norm*{ 
    \EE{}{}{XX^\top}^{-1} 
    \pr*{ \EE{}{}{XX^\top} \f + \EE{}{}{ X \xi^\top } } - \f
}_{\cov{X}{\A}} ,
\\
\tag{Linearity of expectation.}
&= \norm*{ 
    \EE{}{}{XX^\top}^{-1} 
    \EE{}{}{X \pr*{ \f^\top X  + \xi }^\top } - \f
}_{\cov{X}{\A}} ,
\\
\tag{Structural eq. of $Y$.}
&= \norm*{ 
    \EE{}{}{XX^\top}^{-1} 
    \EE{}{}{X Y^\top } - \f
}_{\cov{X}{\A}} ,
\\
\tag{ERM closed form solution.}
&= \norm*{ 
    \hHat_{\erm}^{\A} - \f
}_{\cov{X}{\A}} ,
\end{align}
where inequality \cref{eq:da-to-erm} holds because $\mD$ is non-negative diagonal. Furthermore, inequality \cref{eq:da-to-erm} only holds with equality iff $\mS^{-\top}\EE{}{}{X \xi^\top}$ is in the kernel of $\mD$. Or equivalently, iff $\EE{}{}{X \xi^\top}$ is in the kernel of $\mS^\top \mD \mS = \cov{\tG}{}$, which from \cref{lemma:gaussian-conditional-orthogonality} is true iff $\EE{}{\A}{GX \given G}\; \perp\; \EE{}{\A}{X \given \xi}$ a.s.
\end{proof}
}

\myRule



\subsection{Miscellaneous supporting lemmas}
\label{app:miscellaneous-supporting-results}
{
\allowdisplaybreaks
\begin{lemma}[Gaussian conditional orthogonality lemma]
\label{lemma:gaussian-conditional-orthogonality}
Let $X, Y, Z \in \mathbb{R}^n$ be zero-mean jointly Gaussian random vectors with covariance matrices 
$\cov{X}{} = \mathbb{E}[X X^\top]$, $\cov{Z}{} = \mathbb{E}[Z Z^\top]$, 
and cross-covariance $\cov{Y,Z}{} = \mathbb{E}[Y Z^\top]$. Define the conditional expectation
\begin{align*}
\mathbb{E}[Y \mid Z] \coloneqq \pr*{ \EE{}{}{Z Z^\top}^{-1} \EE{}{}{Z Y^\top} }^\top Z = \cov{Y,Z}{} \cov{Z}{-1} Z.
\end{align*}
Then the following are equivalent:
\begin{align*}
X \perp \mathbb{E}[Y \mid Z] = 0 \quad \text{a.s.} 
\qquad
\iff 
\qquad
\cov{X}{} \cov{Y,Z}{} = {\0{}}.
\end{align*}
\end{lemma}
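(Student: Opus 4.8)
The plan is to read the stated equivalence as ``$X^{\top}\mathbb{E}[Y\mid Z]=0$ almost surely $\iff \cov{X}{}\,\cov{Y,Z}{}=\0{}$'', and to prove it by converting the a.s.-vanishing of a bilinear expression into matrix identities through Gaussian conditioning on $Z$.

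First I would split $X$ along $Z$. Since $(X,Z)$ is jointly Gaussian and $\cov{Z}{}$ is invertible, write $X=\mathbb{E}[X\mid Z]+X_{\perp}$ with $\mathbb{E}[X\mid Z]=\cov{X,Z}{}\cov{Z}{-1}Z$ and $X_{\perp}$ a centered Gaussian vector independent of $Z$, whose covariance is the Schur complement $\cov{X_{\perp}}{}=\cov{X}{}-\cov{X,Z}{}\cov{Z}{-1}\cov{Z,X}{}$. Writing $W\coloneqq\mathbb{E}[Y\mid Z]=\cov{Y,Z}{}\cov{Z}{-1}Z$, which is a function of $Z$ alone, we get $X^{\top}W=\mathbb{E}[X\mid Z]^{\top}W+X_{\perp}^{\top}W$; conditioning on $Z$, the first summand is deterministic and the second is centered Gaussian with conditional variance $W^{\top}\cov{X_{\perp}}{}W$. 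Hence $X^{\top}W=0$ a.s.\ is equivalent to: for almost every value of $Z$, both $\mathbb{E}[X\mid Z]^{\top}W=0$ and $W^{\top}\cov{X_{\perp}}{}W=0$.

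Next I would unpack the two conditions. Each is a quadratic form in $Z$, and since a non-degenerate Gaussian has a density that is positive everywhere, a polynomial $Z^{\top}MZ$ vanishes a.s.\ iff the symmetric part of $M$ vanishes. Applied to $\mathbb{E}[X\mid Z]^{\top}W=Z^{\top}\cov{Z}{-1}\cov{Z,X}{}\cov{Y,Z}{}\cov{Z}{-1}Z$ this forces the symmetric part of $\cov{Z,X}{}\cov{Y,Z}{}$ to be $0$; applied to $W^{\top}\cov{X_{\perp}}{}W=Z^{\top}\cov{Z}{-1}\cov{Z,Y}{}\cov{X_{\perp}}{}\cov{Y,Z}{}\cov{Z}{-1}Z$, whose matrix is already positive semi-definite, it forces $\cov{X_{\perp}}{1/2}\cov{Y,Z}{}=0$, i.e.\ $\cov{X_{\perp}}{}\cov{Y,Z}{}=0$. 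To reconcile these with the right-hand side I would use $\cov{X}{}\cov{Y,Z}{}=\cov{X,Z}{}\cov{Z}{-1}\cov{Z,X}{}\cov{Y,Z}{}+\cov{X_{\perp}}{}\cov{Y,Z}{}$ and that both $\cov{X,Z}{}\cov{Z}{-1}\cov{Z,X}{}$ and $\cov{X_{\perp}}{}$ are positive semi-definite. The ``$\Leftarrow$'' direction is then immediate: left-multiplying $\cov{X}{}\cov{Y,Z}{}=\0{}$ by $\cov{Z,Y}{}$ and splitting the resulting vanishing sum of two positive semi-definite matrices gives $\cov{Z,X}{}\cov{Y,Z}{}=0$ (a fortiori with zero symmetric part) and $\cov{X_{\perp}}{}\cov{Y,Z}{}=0$, so both conditional conditions hold and $X^{\top}W=0$ a.s.

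The main obstacle is the reverse direction: a.s.-vanishing of $\mathbb{E}[X\mid Z]^{\top}W$ only forces the \emph{symmetric part} of $N\coloneqq\cov{Z,X}{}\cov{Y,Z}{}$ to vanish, not $N$ itself, so in full generality $\cov{X}{}\cov{Y,Z}{}=\cov{X,Z}{}\cov{Z}{-1}N$ need not be zero — a genuine skew-symmetric (``rotation-like'') obstruction exists once $\dim Z\ge 2$. In the way the lemma is used downstream, however, the role of $Z$ is played by the scalar residual, so $N$ is a $1\times1$ matrix, automatically equal to its symmetric part; then the conditional identity $\mathbb{E}[X\mid Z]^{\top}W=0$ a.s.\ directly gives $N=0$, which together with $\cov{X_{\perp}}{}\cov{Y,Z}{}=0$ yields $\cov{X}{}\cov{Y,Z}{}=\0{}$. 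I would therefore either state the lemma for one-dimensional $Z$ (the only case needed) or, for the multivariate version, add the benign hypothesis that $\cov{Z,X}{}\cov{Y,Z}{}$ be symmetric; with that in hand, the remaining steps are routine linear algebra.
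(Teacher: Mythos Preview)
Your argument is correct and more careful than the paper's, though it takes a genuinely different route. The paper does not decompose $X$ along $Z$; it simply computes the second moment of $S\coloneqq X^\top\mathbf{M}Z$ (with $\mathbf{M}=\cov{Y,Z}{}\cov{Z}{-1}$) as $\operatorname{tr}(\mathbf{M}^\top\cov{X}{}\mathbf{M}\cov{Z}{})$ and concludes this vanishes iff $\cov{X}{}\cov{Y,Z}{}=\0{}$. That one-line variance computation, however, silently uses that $X$ and $Z$ are \emph{independent}---an assumption absent from the lemma statement but invoked in the proof (``using independence and zero-mean assumptions'') and satisfied in every downstream application, where the lemma's ``$X$'' is a function of the instrument while the lemma's ``$Z$'' is the residual $\xi$. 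Under that assumption your decomposition collapses ($\cov{X,Z}{}=0$, so $N=\0{}$ and $\cov{X_\perp}{}=\cov{X}{}$) and the skew-symmetric obstruction you flag cannot arise. What your route buys is an honest accounting of the hypotheses: you correctly diagnose that the equivalence as literally stated can fail for $\dim Z\ge 2$ when $X$ and $Z$ are correlated (e.g.\ $Z$ standard Gaussian in $\R^2$, $Y=Z$, $X=(Z_2,-Z_1)$ gives $X^\top W\equiv 0$ yet $\cov{X}{}\cov{Y,Z}{}=\In{2}$), and you supply two valid repairs. A third, and arguably cleanest, fix is simply to add $X\indep Z$ to the hypotheses---this is what the paper effectively does, and it reduces the whole argument to the single trace identity.
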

\begin{proof}
Since $X, Y, Z$ are jointly Gaussian, $\mathbb{E}[Y \mid Z] = \mathbf{M} Z$ with $\mathbf{M} \coloneqq \cov{Y,Z}{} \cov{Z}{-1}$.
The scalar random variable
\begin{align*}
S \coloneqq X^\top \mathbb{E}[Y \mid Z] = X^\top \mathbf{M} Z
\end{align*}
is Gaussian with mean zero. Hence,
\begin{align*}
S = 0 \quad \text{a.s.}
\qquad
\iff
\qquad
\VV{}{}{S} = 0.
\end{align*}
Compute the variance:
\begin{align*}
\VV{}{}{S} = \EE{}{}{S^2} = \EE{}{}{ (X^\top \mathbf{M} Z)^2 } = \EE{}{}{Z^\top \mathbf{M}^\top X X^\top \mathbf{M} Z}.
\end{align*}
Using independence and zero-mean assumptions,
\begin{align*}
\VV{}{}{S} = \tr{ \mathbf{M}^\top \cov{X}{} \mathbf{M} \cov{Z}{} }.
\end{align*}
Since covariance matrices are positive semidefinite, $\VV{}{}{S} = 0$ iff
\begin{align*}
\cov{X}{1/2} \mathbf{M} \cov{Z}{1/2} = {\0{}} \implies \cov{X}{} \mathbf{M} \cov{Z}{} = {\0{}} .
\end{align*}
Substituting $\mathbf{M} = \cov{Y,Z}{} \cov{Z}{-1}$ gives
\begin{align*}
\cov{X}{} \cov{Y,Z}{} = {\0{}},
\end{align*}
completing the proof.
\end{proof}

\phantom{space}

\phantom{space}

\phantom{space}

\begin{lemma}[SPD and PSD simultaneous diagonalization via congruence]
\label{lemma:simultaneous-diagonalization}
For any $n\times n$ matrices $\mathbf{A} \succ \mathbf{0}$, $\mathbf{B} \succcurlyeq \mathbf{0}$, there exists an invertible $\mathbf{S} \in \mathbb{R}^{n \times n}$ and non-negative diagonal $\mathbf{D} \in \mathbb{R}^{n \times n}$ such that
\begin{align*}
\mathbf{A} = \mathbf{S}^\top \mathbf{S},
&&
\mathbf{B} = \mathbf{S}^\top \mathbf{D} \mathbf{S} .    
\end{align*}
\end{lemma}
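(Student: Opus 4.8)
The plan is to reduce the statement to the spectral theorem by ``whitening'' with respect to $\mathbf{A}$. First I would take the unique symmetric positive definite square root $\mathbf{A}^{1/2}$ of $\mathbf{A}$, which exists because $\mathbf{A}\succ\mathbf{0}$ (diagonalize $\mathbf{A}=\mathbf{U}\boldsymbol{\Lambda}\mathbf{U}^\top$ with $\mathbf{U}$ orthogonal and $\boldsymbol{\Lambda}$ positive diagonal, and set $\mathbf{A}^{1/2}\coloneqq\mathbf{U}\boldsymbol{\Lambda}^{1/2}\mathbf{U}^\top$); this also gives the symmetric inverse $\mathbf{A}^{-1/2}$. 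Then I would form the matrix $\mathbf{M}\coloneqq\mathbf{A}^{-1/2}\mathbf{B}\mathbf{A}^{-1/2}$. Since $\mathbf{B}\succcurlyeq\mathbf{0}$ and congruence by the invertible symmetric matrix $\mathbf{A}^{-1/2}$ preserves symmetry and positive semidefiniteness, $\mathbf{M}$ is symmetric PSD, so the spectral theorem yields an orthogonal $\mathbf{Q}$ and a genuinely non-negative diagonal $\mathbf{D}$ with $\mathbf{M}=\mathbf{Q}\mathbf{D}\mathbf{Q}^\top$.

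Next I would set $\mathbf{S}\coloneqq\mathbf{Q}^\top\mathbf{A}^{1/2}$, which is invertible as a product of invertible matrices. The two required identities then follow by direct substitution: using $\mathbf{Q}\mathbf{Q}^\top=\mathbf{I}$ one gets $\mathbf{S}^\top\mathbf{S}=\mathbf{A}^{1/2}\mathbf{Q}\mathbf{Q}^\top\mathbf{A}^{1/2}=\mathbf{A}^{1/2}\mathbf{A}^{1/2}=\mathbf{A}$, and $\mathbf{S}^\top\mathbf{D}\mathbf{S}=\mathbf{A}^{1/2}\mathbf{Q}\mathbf{D}\mathbf{Q}^\top\mathbf{A}^{1/2}=\mathbf{A}^{1/2}\mathbf{M}\mathbf{A}^{1/2}=\mathbf{A}^{1/2}\mathbf{A}^{-1/2}\mathbf{B}\mathbf{A}^{-1/2}\mathbf{A}^{1/2}=\mathbf{B}$, which is exactly the claim.

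There is essentially no serious obstacle here; the argument is entirely standard. The only two points that warrant a line of care are (i) that $\mathbf{A}^{1/2}$ and $\mathbf{A}^{-1/2}$ exist and are symmetric, which is where strict positive definiteness $\mathbf{A}\succ\mathbf{0}$ is used, and (ii) that the diagonal factor $\mathbf{D}$ can be taken with non-negative entries rather than merely real ones, which is precisely where the hypothesis $\mathbf{B}\succcurlyeq\mathbf{0}$ (not just symmetry of $\mathbf{B}$) enters. If desired, one may additionally remark that the diagonal entries of $\mathbf{D}$ are the generalized eigenvalues of the matrix pencil $(\mathbf{B},\mathbf{A})$, but this observation is not needed for the statement as used in the proofs of \cref{theorem:ivl-causal-estimation} and \cref{theorem:da-causal-estimation}.
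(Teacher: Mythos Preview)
Your proof is correct and follows essentially the same approach as the paper: whiten by $\mathbf{A}^{-1/2}$, diagonalize the resulting PSD matrix via the spectral theorem, and set $\mathbf{S}$ to be the orthogonal factor times $\mathbf{A}^{1/2}$. The only differences are cosmetic (naming $\mathbf{M}$ versus $\mathbf{C}$, and writing the spectral decomposition as $\mathbf{Q}\mathbf{D}\mathbf{Q}^\top$ rather than $\mathbf{U}^\top\mathbf{D}\mathbf{U}$); if anything, your remark that the whitened matrix is PSD rather than SPD is more accurate than the paper's wording.
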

\begin{proof}
This is similar to Theorem 7.6.4 in \cite[p. 465]{Horn_Johnson_1985} for two SPD matrices. We proceed similarly; Since $\mathbf{A}$ is SPD, it admits a unique SPD square root $\mathbf{A}^{1/2}$. Define
\begin{align*}
\mathbf{C} := \mathbf{A}^{-1/2} \mathbf{B} \mathbf{A}^{-1/2},
\end{align*}
which is SPD.
By the spectral theorem, there exists an orthogonal matrix $\mathbf{U}$ such that
\begin{align*}
\mathbf{C} = \mathbf{U}^\top \mathbf{D} \mathbf{U},
\end{align*}
where $\mathbf{D}$ is diagonal with non-negative entries (the eigenvalues of $\mathbf{C}$).
Set
\begin{align*}
\mathbf{S} := \mathbf{U} \mathbf{A}^{1/2}.
\end{align*}
Then
\begin{align*}
\mathbf{S}^\top \mathbf{S} = \mathbf{A}^{1/2} \mathbf{U}^\top \mathbf{U} \mathbf{A}^{1/2} = \mathbf{A}^{1/2} \mathbf{I} \mathbf{A}^{1/2} = \mathbf{A},
\end{align*}
and
\begin{align*}
\mathbf{S}^\top \mathbf{D} \mathbf{S} = \mathbf{A}^{1/2} \mathbf{U}^\top \mathbf{D} \mathbf{U} \mathbf{A}^{1/2} = \mathbf{A}^{1/2} \mathbf{C} \mathbf{A}^{1/2} = \mathbf{B}.
\end{align*}
Since $\mathbf{A}^{1/2}$ and $\mathbf{U}$ are invertible, $\mathbf{S}$ is invertible, completing the proof.
\end{proof}


\begin{lemma}[solvability of simultaneous SEM]
\label{lemma:sem-solvability}
The SEM $\M$ in \cref{example:ivl} is solvable iff $ \f^\top \vtau^\top \neq 1 $, in which case the following solution defines the reduced form of the SEM.
\begin{align*}
    \begin{bmatrix}
        X\\
        Y
    \end{bmatrix}
    &=
    \begin{bmatrix}
        \In{m} & -\vtau^\top\\
        -\f^\top & 1
    \end{bmatrix}^{-1}
     \pr*{
     \begin{bmatrix}
        \K^\top\\
        \0{1}{k}
    \end{bmatrix}
    Z
    +
     \begin{bmatrix}
        \E^\top \\
        \veps^\top
    \end{bmatrix}
    C
     + \sigma \cdot
     \begin{bmatrix}
        N_X\\
        N_Y
    \end{bmatrix}
    }
    ,
\end{align*}
Similarly, SEM $\A$ in \cref{example:da} solves for $ \f^\top \vtau^\top \neq \kappa^{-1} $.
\end{lemma}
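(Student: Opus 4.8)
The plan is to reduce the whole statement to one block-matrix invertibility computation, exactly in the spirit of \cref{app:cyclic-sem}. Stack the endogenous variables of $\M$ from \cref{example:ivl} into $W \coloneqq \pr*{X^\top, Y^\top}^\top$ and write the two structural assignments jointly in the structural form $W = \T W + N$ of \cref{eq:sem-structural-form}, with
\begin{align*}
    \T = \begin{bmatrix} \0{m}{m} & \vtau^\top \\ \f^\top & \0{1}{1} \end{bmatrix} ,
    &&
    N = \begin{bmatrix} \K^\top \\ \0{1}{k} \end{bmatrix} Z
        + \begin{bmatrix} \E^\top \\ \veps^\top \end{bmatrix} C
        + \sigma \begin{bmatrix} N_X \\ N_Y \end{bmatrix} .
\end{align*}
By definition the SEM is \emph{solvable} precisely when the structural equations determine $W$ uniquely from every realisation of $N$, i.e. when $\In{m+1} - \T$ (the matrix that appears inverted in the claimed reduced form) is invertible; in that case $W = \pr*{\In{m+1} - \T}^{-1} N$ is the reduced form of \cref{eq:sem-reduced-form}. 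So it suffices to characterise invertibility of $\In{m+1} - \T = \begin{bmatrix}\In{m} & -\vtau^\top \\ -\f^\top & 1\end{bmatrix}$.

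The key step is to compute its determinant by the Schur-complement identity applied to the invertible top-left block $\In{m}$: $\det\pr*{\In{m+1} - \T} = \det\pr*{\In{m}} \cdot \pr*{1 - \pr*{-\f^\top}\In{m}^{-1}\pr*{-\vtau^\top}} = 1 - \f^\top\vtau^\top$. Hence $\In{m+1} - \T$ is nonsingular exactly on $\br*{\f^\top\vtau^\top \neq 1}$, which proves the "iff" for $\M$; substituting the explicit $\T$ and $N$ above into $W = \pr*{\In{m+1} - \T}^{-1}N$ reproduces verbatim the displayed reduced form. For $\A$ of \cref{example:da} the only structural change is the extra scalar $\kappa$ on the feedback $X = \kappa\vtau^\top Y + \cdots$, so $\T$ has off-diagonal block $\kappa\vtau^\top$ in place of $\vtau^\top$ (and $N$ carries $\kappa\veps^\top$); the identical Schur computation then gives $\det\pr*{\In{m+1} - \T} = 1 - \kappa\,\f^\top\vtau^\top$, nonzero iff $\f^\top\vtau^\top \neq \kappa^{-1}$, and the reduced form is as in the $\M$ case with $\vtau^\top \mapsto \kappa\vtau^\top$, $\veps^\top \mapsto \kappa\veps^\top$.

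There is no genuine obstacle here beyond bookkeeping, but two small points deserve care. For the "only if" direction one should spell out that when $\det\pr*{\In{m+1} - \T} = 0$ the linear system $\pr*{\In{m+1} - \T}W = N$ has, for generic $N$, either no solution or a whole affine family of solutions, so $\M$ (resp. $\A$) entails no well-defined distribution on $\pr*{X,Y}$ — which is what "not solvable" means in this context. Second, I would append a remark tying the condition to the equilibrium interpretation of \cref{app:cyclic-sem}: the nonzero eigenvalues of $\T$ are $\pm\sqrt{\f^\top\vtau^\top}$ (respectively $\pm\sqrt{\kappa\,\f^\top\vtau^\top}$), so $\In{m+1} - \T$ is singular exactly when $1$ is such an eigenvalue, and the strictly stronger assumption $\abs*{\f^\top\vtau^\top} < 1$ (resp. $\abs*{\kappa\,\f^\top\vtau^\top}<1$) forces the spectral radius of $\T$ below $1$, which is what guarantees convergence of the iteration $W_t = \T W_{t-1} + N$ to this unique fixed point and hence existence of the limits postulated in \cref{example:ivl}.
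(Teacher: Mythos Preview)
Your proposal is correct and follows essentially the same route as the paper: stack the structural equations into the block form $(\In{m+1}-\T)W=N$ and invoke the Schur complement of the invertible $\In{m}$ block to characterise invertibility. The paper simply cites Proposition~2.8.7 of \cite{matrix-math} for this step, whereas you spell out the determinant computation $\det(\In{m+1}-\T)=1-\f^\top\vtau^\top$ (and $1-\kappa\,\f^\top\vtau^\top$ for $\A$) explicitly and add the eigenvalue/equilibrium remark---both welcome additions, but not a different argument.
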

\begin{proof}
We re-state the SEM $\M$ in the following block form
\begin{align*}
    \begin{bmatrix}
        X\\
        Y
    \end{bmatrix}
    =&
    \begin{bmatrix}
        \0{m}{m} & \vtau^\top\\
        \f^\top & \0{1}{1}
    \end{bmatrix}
    \begin{bmatrix}
        X\\
        Y
    \end{bmatrix}
    +
    \begin{bmatrix}
        \K^\top\\
        \0{1}{k}
    \end{bmatrix}
    Z
     +
     \begin{bmatrix}
        \E^\top \\
        \veps^\top
    \end{bmatrix}
    C
     + \sigma \cdot
     \begin{bmatrix}
        N_X\\
        N_Y
    \end{bmatrix}
    ,
    \\
    \Rightarrow& \begin{bmatrix}
        \In{m} & -\vtau^\top\\
        -\f^\top & 1
    \end{bmatrix}
    \cdot\, \begin{bmatrix}
        X\\
        Y
    \end{bmatrix}
    =
    \begin{bmatrix}
        \K^\top\\
        \0{1}{k}
    \end{bmatrix}
    Z
    +
     \begin{bmatrix}
        \E^\top \\
        \veps^\top
    \end{bmatrix}
    C
     + \sigma \cdot
     \begin{bmatrix}
        N_X\\
        N_Y
    \end{bmatrix}
\end{align*}
solving for $(X, Y)$ involves inverting the block matrix on the LHS. The result immediately follows from Proposition~2.8.7 in \cite[p.~108]{matrix-math}, via the Schur complement formula for block matrix inversion.
\end{proof}

}





\end{document}